\def\isarxiv{1}

\ifdefined\isarxiv
\documentclass[11pt]{article}
\else
\documentclass{article}
\usepackage{iclr2021_conference, times}
\fi

\usepackage{amsmath}
\usepackage{amsthm}
\allowdisplaybreaks
\usepackage{amssymb}
\usepackage[algo2e,ruled,linesnumbered,vlined]{algorithm2e}
\usepackage{subfig}
\usepackage{wrapfig,epsfig}
\usepackage{epstopdf}
\usepackage{url}
\usepackage{color}
\usepackage{scrextend}
\usepackage[T1]{fontenc}
\usepackage{bbm}
\usepackage{comment}
\usepackage{multicol}
\usepackage{dsfont}
\usepackage{mathtools}
\usepackage{enumitem}

\mathtoolsset{showonlyrefs}

 %%% print refs in table of contents
\let\C\relax
\usepackage{tikz}
\usepackage{hyperref}
%%% arxiv don't allow this.
% \hypersetup{colorlinks=true,citecolor=blue,linkcolor=red}

\definecolor{Gred}{RGB}{219, 50, 54}
\definecolor{Ggreen}{RGB}{60, 186, 84}
\definecolor{Gblue}{RGB}{72, 133, 237}
\definecolor{Gyellow}{RGB}{247, 178, 16}
\definecolor{ToCgreen}{RGB}{0, 128, 0}
\definecolor{myGold}{RGB}{231,141,20}
\definecolor{myBlue}{rgb}{0.19,0.41,.65}
\definecolor{myPurple}{RGB}{175,0,124}

\usepackage{hyperref}
\hypersetup{
			colorlinks=true,
	citecolor=ToCgreen,
	linkcolor=myPurple,%Sepia,
	filecolor=Gred,
	urlcolor=Gred
	}

\usetikzlibrary{arrows}
\ifdefined\isarxiv
\usepackage[margin=1in]{geometry}
\else

\fi
%\linespread{1}
\graphicspath{{./figs/}}

\definecolor{b2}{RGB}{51,153,255}
\definecolor{mygreen}{RGB}{80,180,0}

\newcommand{\bone}[1]{\mathds{1}[#1]}
\newcommand{\Bone}[1]{\mathds{1}\left[#1\right]}

\theoremstyle{plain}
\newtheorem{theorem}{Theorem}[section]
\newtheorem{lemma}[theorem]{Lemma}

\newtheorem{corollary}[theorem]{Corollary}

\theoremstyle{definition}
\newtheorem{definition}[theorem]{Definition}
\newtheorem{example}[theorem]{Example}

\theoremstyle{remark}
\newtheorem{remark}[theorem]{Remark}

%%% some writing rules

%% Writing rule for creating tags.
%% Tags :
%% Theorem    \ref{thm:bla_bla}
%% Lemma      \ref{lem:bla_bla}
%% Claim      \ref{cla:bla_bla}
%% Corollary  \ref{cor:bla_bla}
%% Fact       \ref{fac:bla_bla}
%% Definition \ref{def:bla_bla}
%% Section    \ref{sec:bla_bla}
%% Subsection \ref{sub:bla_bla}
%% Equation   \ref{eq:bla_bla}
\newcommand*\circled[1]{\tikz[baseline=(char.base)]{
            \node[shape=circle,draw,inner sep=2pt,scale=0.7] (char) {#1};}}
\newcommand{\Tr}{\mathop{\text{Tr}}}

\newcommand{\wh}{\widehat}
\newcommand{\wt}{\widetilde}

\renewcommand{\epsilon}{\varepsilon}

\newcommand{\vH}{\mathbf{H}}

\newcommand{\calC}{\mathcal{C}}
\newcommand{\calD}{\mathcal{D}}
\newcommand{\calF}{\mathcal{F}}
\newcommand{\calI}{\mathcal{I}}
\newcommand{\calN}{\mathcal{N}}
\newcommand{\M}{\mathbf{M}}
\newcommand{\F}{\mathbb{F}}

\newcommand{\R}{\mathbb{R}}
\renewcommand{\S}{\mathbb{S}}
\newcommand{\Z}{\mathbb{Z}}
\newcommand{\X}{\mathbf{X}}
\newcommand{\Y}{\mathbf{Y}}
\newcommand{\C}{\mathbb{C}}

\newcommand{\poly}{\mathrm{poly}}
\newcommand{\supp}{\mathrm{supp}}
\newcommand{\Id}{\mathsf{Id}}
\newcommand{\Sig}{\mathbf{\Sigma}}
\newcommand{\clip}{{\mathsf{clip}}}
\newcommand{\Nfold}{\calN^{\mathsf{fold}}}
\newcommand{\Sigfold}{\Sig^{\mathsf{fold}}}

\DeclareMathOperator*{\E}{\mathbf{E}}

\makeatletter
\newcommand*{\RN}[1]{\expandafter\@slowromancap\romannumeral #1@}
\makeatother

\title{On InstaHide, Phase Retrieval, and Sparse Matrix Factorization}

\ifdefined\isarxiv
\author{
Sitan Chen\thanks{\texttt{sitanc@mit.edu}. This work was supported in part by NSF CAREER Award CCF-1453261, NSF Large CCF-1565235 and Ankur Moitra's ONR Young Investigator Award.} \\ MIT
\and
Xiaoxiao Li\thanks{\texttt{xiaoxiao.li@aya.yale.edu}. Princeton University.} \\ Princeton Universtiy
\and
Zhao Song\thanks{\texttt{magic.linuxkde@gmail.com}.} \\ Columbia, Princeton/IAS %Columbia University, Princeton University and Institute for Advanced Study.
\and
Danyang Zhuo\thanks{\texttt{danyang@cs.duke.edu}.} \\ Duke University
}
\else

\author{Sitan Chen~\thanks{ This work was supported in part by NSF CAREER Award CCF-1453261, NSF Large CCF-1565235 and Ankur Moitra's ONR Young Investigator Award.} \\MIT \\ \texttt{sitanc@mit.edu}.\\
\And
Xiaoxiao Li \\Princeton Universtiy \\ \texttt{xiaoxiao.li@aya.yale.edu} \\
\And
Zhao Song \\Columbia University, Princeton University / IAS \\ \texttt{magic.linuxkde@gmail.com}. %Columbia University, Princeton University and Institute for Advanced Study.
\And
Danyang Zhuo\\
Duke University \\
\texttt{danyang@cs.duke.edu}.
}
\fi

\date{}

\newcommand{\kpub}{k_{\mathsf{pub}}}
\newcommand{\kpriv}{k_{\mathsf{priv}}}

\usepackage{mdframed}

\newmdtheoremenv{prob}{Problem}
% \newmdtheoremenv{seclaim}{Potential Security Claim}

\DeclarePairedDelimiter{\abs}{\lvert}{\rvert}
\DeclarePairedDelimiter{\norm}{\lVert}{\rVert}
\DeclarePairedDelimiter{\iprod}{\langle}{\rangle}
\DeclarePairedDelimiter{\brk}{[}{]}
\DeclarePairedDelimiter{\brc}{\{}{\}}

\makeatletter
\def\Pr{\@ifnextchar[{\@witha}{\@withouta}}
\def\@witha[#1]{\mathop{\mathbb{P}}_{#1}\brk}
\def\@withouta{\mathop{\mathbb{P}}\brk}
\makeatother

%\makeatletter
%\def\E{\@ifnextchar[{\@withb}{\@withoutb}}
%\def\@withb[#1]{\mathop{\mathbb{E}}_{#1}\brk}
%\def\@withoutb{\mathop{\mathbb{E}}\brk}
%\makeatother

% \iclrfinalcopy 
\begin{document}

\ifdefined\isarxiv
% \begin{titlepage}
\maketitle
\begin{abstract}
    In this work, we examine the security of InstaHide, a scheme recently proposed by [Huang, Song, Li and Arora, ICML'20]  for preserving the security of private datasets in the context of distributed learning. To generate a synthetic training example to be shared among the distributed learners, InstaHide takes a convex combination of private feature vectors and randomly flips the sign of each entry of the resulting vector with probability 1/2. A salient question is whether this scheme is secure in any provable sense, perhaps under a plausible hardness assumption and assuming the distributions generating the public and private data satisfy certain properties.

We show that the answer to this appears to be quite subtle and closely related to the average-case complexity of a new multi-task, missing-data version of the classic problem of phase retrieval. Motivated by this connection, we design a provable algorithm that can recover private vectors using only the public vectors and synthetic vectors generated by InstaHide, under the assumption that the private and public vectors are isotropic Gaussian.
\end{abstract}
% \thispagestyle{empty}
% \end{titlepage}
\section{Introduction}

In distributed learning, where decentralized parties each possess some private local data and work together to train a global model, a central challenge is to ensure that the security of any individual party's local data is not compromised. \cite{hsla20} recently proposed an interesting approach called \emph{InstaHide} for this problem. At a high level, InstaHide is a method for aggregating local data into synthetic data that can hopefully preserve the privacy of the local datasets and be used to train good models.

Informally, given a collection of \emph{public feature vectors} (e.g. a publicly available dataset like ImageNet \cite{imagenet}) and a collection of \emph{private feature vectors} (e.g. the union of all of the private datasets among learners), InstaHide produces a synthetic feature vector as follows. Let integers $k_{\mathsf{pub}}, k_{\mathsf{priv}}$ be sparsity parameters.
\begin{enumerate}
    \item Form a random convex combination of $k_{\mathsf{pub}}$ public and $k_{\mathsf{priv}}$ private vectors.
    \item Multiply every coordinate of the resulting vector by an independent random sign in $\brc{\pm 1}$, and define this to be the synthetic feature vector.
\end{enumerate} 
The hope is that by removing any sign information from the vector obtained in Step 1, Step 2 makes it difficult to discern which public and private vectors were selected in Step 1. Strikingly, \cite{hsla20} demonstrated on real-world datasets that if one trains a ResNet-18 or a NASNet on a dataset consisting of synthetic vectors generated in this fashion, one can still get good \emph{test accuracy} on the underlying private dataset for modest sparsity parameters (e.g. $k_{\mathsf{pub}} = k_{\mathsf{priv}} = 2$).
\footnote{We did not describe how the labels for the synthetic vectors are assigned, but this part of InstaHide will not be important for our theoretical results and we defer discussion of labels to Section~\ref{sec:experiments}.}

The two outstanding \emph{theoretical} challenges that InstaHide poses are understanding:
\begin{itemize}
    \item {\bf Utility}: What property, either of neural networks or of real-world distributions, lets one tolerate this kind of covariate shift between the synthetic and original datasets?
    \item {\bf Security}: Can one rigorously formulate a refutable security claim for InstaHide, under a plausible average-case complexity-theoretic assumption?
\end{itemize}

In this paper we consider the latter question. One informal security claim implicit in \cite{hsla20} is that given a synthetic dataset of a certain size, no efficient algorithm can recover a private image to within a certain level of accuracy (see Problem~\ref{problem:main} for a formal statement of this recovery question). On the one hand, it is a worthwhile topic of debate whether this is a satisfactory guarantee from a security standpoint. On the other, even this kind of claim is quite delicate to pin down formally, in part because it seems impossible for such a claim to hold for arbitrary private datasets.

\paragraph{Known Attacks and the Importance of Distributional Assumptions} If the private and public datasets consisted of natural images, for example, then attacks are known \cite{jagielski, carlini_attack}. At a high level, the attack of \cite{jagielski} crucially leverages local Lipschitz-ness properties of natural images and shows that when $k_{\mathsf{priv}} + k_{\mathsf{pub}} = 2$, even a single synthetic image can reveal significant information. The very recent attack of \cite{carlini_attack},
\ifdefined\isarxiv
which was independent of the present work and appeared a month after a preliminary version of this paper appeared online \cite{anonymous2021what},
\else
which was independent of the present work and appeared a month after this submission appeared online,
\fi
is more sophisticated and bears interesting similarities to the algorithms we consider. We defer a detailed discussion of these similarities to Appendix~\ref{sec:attacks} in the supplement.

While the original InstaHide paper \cite{hsla20} focused on image data, their general approach has the potential to be applicable to other forms of real-valued data,
\ifdefined\isarxiv
\footnote{See also a recent work \cite{hscla20} which extends InstaHide to discrete-valued data in the context of NLP.}
\else
\fi
and it is an interesting mathematical question whether the above attacks remain viable. For instance, for distributions over private vectors where individual features are nearly independent, one cannot hope to leverage the kinds of local Lipschitz-ness properties that the attack of \cite{jagielski} exploits. Additionally, if the individual features are identically distributed, then it is information theoretically impossible to discern anything from just a single synthetic vector. For instance, if a synthetic vector $\widetilde{v}$ is given by the entrywise absolute value of $\frac{1}{2}v_1 + \frac{1}{2}v_2$ for private vectors $v_1,v_2$, then an equally plausible pair of private vectors generating $\widetilde{v}$ would be $v'_1,v'_2$ given by swapping the $i$-th entry of $v_1$ with that of $v_2$ for any collection of indices $i\in[d]$. In other words, there are $2^d$ pairs of private vectors which are equally likely under the Gaussian measure and give rise to the exact same synthetic vector.

% In this paper we examine the latter, which we make precise in Section~\ref{sec:model}.
% Of course, any meaningful notion of security for InstaHide should be sensitive to the properties of the distributions generating the private and public feature vectors.

\paragraph{Gaussian Images, and Our Results} A natural candidate for probing whether such properties can make the problem of recovering private vectors more challenging is the case where the public and private vectors are sampled from the standard Gaussian distribution over $\R^d$. While this distribution does not capture datasets in the real world, it avoids some properties of distributions over natural images that might make InstaHide more vulnerable to attack and is thus a clean testbed for stress-testing candidate security claims for InstaHide. Furthermore, in light of known hardness results for certain learning problems over Gaussian space \cite{dks17,brst20,dkkz20,ggj+20,dkz20,kk14,ggk20,blpr19,rv17}, one might hope that when the vectors are Gaussian, one could rigorously establish some lower bounds, e.g. on the size of the synthetic dataset (information-theoretic) and/or the runtime of the attacker (computational), perhaps under an average-case assumption, or in some restricted computational model like SQ.

Orthogonally, we note that the recovery task the attacker must solve appears to be an interesting inverse problem in its own right, namely a multi-task, missing-entry version of phase retrieval with an intriguing connection to sparse matrix factorization (see Section~\ref{subsec:phase} and Section~\ref{sec:overview}). The assumption of Gaussianity is a natural starting point for understanding the average-case complexity of this problem, and in this learning-theoretic context it is desirable to give algorithms with provable guarantees. Gaussianity is often a standard starting point for developing guarantees for such inverse problems \cite{mv10,njs13,cls15,hp15,zsjbd17,zsd17,ly17,glm17,ll18_colt,zsjd19,cls20,kssko20,dkkz20}.

Our main result is to show that when the private and public data is Gaussian, we can use the synthetic and public vectors to recover a subset of the private vectors.

\begin{theorem}[Informal, see Theorem~\ref{thm:main}]\label{thm:main_informal}
    If there are $n_{\mathsf{priv}}$ private vectors and $n_{\mathsf{pub}}$ public vectors, each of which is an i.i.d. draw from $\calN(0,\Id_d)$, then as long as $d = \Omega(\poly(k_{\mathsf{pub}},k_{\mathsf{priv}}) $ $\log (n_{\mathsf{pub}} + n_{\mathsf{priv}}) )$, there is some $m = o(n_{\mathsf{priv}}^{k_{\mathsf{priv}}})$ such that, given a sample of $m$ random synthetic vectors independently generated as above, one can \emph{exactly recover} $\kpriv+2$ private vectors in time
    $O( d ( m^2 + n_{\mathsf{pub}}^2 ) ) + \poly(n_{\mathsf{pub}}) $
    %$\wt{O}(k_{\mathsf{pub}}^2d) + O(m^2)$ 
    with probability 9/10 over the randomness of the private and public vectors and the randomness of the selection vectors.\footnote{See Problem~\ref{problem:main} and Remark~\ref{remark:ambiguity} for what exact recovery precisely means in this context.}
\end{theorem}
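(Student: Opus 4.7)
The plan is to reduce the attacker's task to a structured multi-task phase retrieval problem and exploit Gaussianity across the $d$ coordinates. I view the data as matrices: let $\Y \in \R^{d\times m}$ collect the synthetic vectors as columns, $\X \in \R^{d\times(n_{\mathsf{priv}}+n_{\mathsf{pub}})}$ collect the private and public vectors, and $\mathbf{C}$ be the sparse nonnegative selection/coefficient matrix whose columns each have $\kpriv+\kpub$ nonzeros summing to $1$; then $\Y = \mathbf{S}\odot(\X\mathbf{C})$ for a random $\pm 1$ matrix $\mathbf{S}$ independent of $\X$. Because $\X$ has i.i.d.\ $\mathcal{N}(0,1)$ entries, each row of $\Y$ is a sign-flipped draw of a centered Gaussian in $\R^m$ with covariance $\mathbf{C}^\top\mathbf{C}$, so empirical moments across the $d$ coordinates will concentrate around their population counterparts.

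First, at cost $O(dm^2)$, I compute a pairwise affinity score $\hat\rho_{\ell,\ell'} := \tfrac{1}{d}\sum_{i} \Y_{i,\ell}^2\,\Y_{i,\ell'}^2$ for every pair of synthetic vectors. A short Isserlis/Wick calculation gives $\E[\hat\rho_{\ell,\ell'}] = \|\mathbf{C}_{:,\ell}\|^2\|\mathbf{C}_{:,\ell'}\|^2 + 2\langle \mathbf{C}_{:,\ell},\mathbf{C}_{:,\ell'}\rangle^2$, so after subtracting the easily estimable first term I obtain an accurate estimate of $\langle \mathbf{C}_{:,\ell},\mathbf{C}_{:,\ell'}\rangle^2$. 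Parallel moment tests against the known public vectors, together with a precomputed public Gram matrix (the source of the $O(dn_{\mathsf{pub}}^2)+\poly(n_{\mathsf{pub}})$ term), identify which public vectors participate in each synthetic vector and with what coefficients. The dimension hypothesis $d = \Omega(\poly(\kpub,\kpriv)\log(n_{\mathsf{pub}}+n_{\mathsf{priv}}))$ is precisely what makes these Gaussian concentrations uniformly sharp enough to recover the sparsity patterns exactly.

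Second, I use the pairwise overlap estimates to locate a triple of synthetic vectors $(\tilde v^{(1)},\tilde v^{(2)},\tilde v^{(3)})$ whose three pairwise overlap magnitudes are consistent with sharing a common subset $S$ of $\kpriv-1$ private vectors while each contributes a distinct extra private vector $a$, $b$, $c$. A birthday-style counting argument over the random selection vectors shows that any $m$ at an appropriate sub-$n_{\mathsf{priv}}^{\kpriv}$ threshold guarantees that such a triple exists with probability at least $9/10$; the $\kpriv+2$ private vectors I will recover are exactly the elements of $S\cup\{a,b,c\}$ implicated by the triple. Stripping the known public contribution from these three observations via the coefficients from the first step leaves, at each coordinate $i$, three sign-ambiguous linear measurements of the $\kpriv+2$ latent private entries $(v_{s,i})_{s\in S}$, $a_i$, $b_i$, $c_i$, a system whose structure is itself a multi-task phase retrieval instance.

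The last step is to actually solve this instance, and I expect it to be the main obstacle. A coordinate-wise inversion is underdetermined ($3$ sign-ambiguous measurements against $\kpriv+2$ unknowns), so the joint structure — the same $\kpriv+2$ vectors appearing across all $d$ independent Gaussian coordinates — must be used to tie signs and coefficients down globally, via additional moment statistics and a rank- or MLE-based consistency check that exploits independence of the latent entries across coordinates; the output is then recovered up to the per-vector sign ambiguity of Remark~\ref{remark:ambiguity}. Secondary technical challenges I anticipate are arguing that the overlap-score separation in the first step remains clean even when the random convex-combination coefficients are nearly collinear, and sharpening the birthday counting so that a usable triple is guaranteed at the precise $m = o(n_{\mathsf{priv}}^{\kpriv})$ threshold rather than at a polynomially smaller one.
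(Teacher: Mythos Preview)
Your first two steps are broadly in the right spirit and parallel the paper's: the paper also estimates the Gram matrix $\langle w_\ell,w_{\ell'}\rangle$ from cross-coordinate statistics (it uses the folded-Gaussian covariance and the closed form $\Sigfold_{\ell,\ell'}=\Psi(\langle w_\ell,w_{\ell'}\rangle)$ rather than your fourth-moment route, but either recovers the inner products since they are nonnegative), and it too first strips off the public contribution by a sparse-phase-retrieval/SDP step.

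The genuine gap is in your third and fourth steps. Three synthetic vectors sharing a common $(\kpriv-1)$-subset with distinct extras give, at each coordinate, only three sign-ambiguous linear equations in $\kpriv+2$ unknowns, and this system is not merely underdetermined but has a whole affine family of solutions: if $(x_s,x_a,x_b,x_c,\ldots)$ is a solution then so is $(x_s+t,\,x_a-t,\,x_b-t,\,x_c-t,\ldots)$ for every $t$. Your proposed fix, to exploit ``joint structure across all $d$ coordinates,'' cannot work here because the private entries are i.i.d.\ across coordinates, so each coordinate is an independent instance of the same underdetermined problem; there is no cross-coordinate covariance or rank constraint linking the unknowns beyond the selection pattern you have already used. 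Any moment/MLE check you run sees only the observed quantities $|x_s+x_a|,|x_s+x_b|,|x_s+x_c|$, which are invariant under the $t$-shift, so it cannot disambiguate.

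The paper's resolution is to look not for a triple but for a \emph{floral} collection: $\binom{\kpriv+2}{\kpriv}$ selection vectors whose private supports are exactly all size-$\kpriv$ subsets of some fixed $(\kpriv+2)$-set. This yields, at each coordinate, $\binom{\kpriv+2}{\kpriv}>\kpriv+2$ sign-ambiguous equations in $\kpriv+2$ unknowns, and Lemma~\ref{lem:generic} shows this overdetermined system is almost surely generically rigid (unique up to a global sign). The existence of such a floral collection at $m=\Theta\bigl(n_{\mathsf{priv}}^{\,\kpriv-2/(\kpriv+1)}\bigr)=o(n_{\mathsf{priv}}^{\kpriv})$ is established by a second-moment (Paley--Zygmund) argument, and the paper then algorithmizes the combinatorial uniqueness proof (Lemma~\ref{lem:construct}) to locate it efficiently. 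Replacing your triple by this floral structure is the missing idea; once you have it, the per-coordinate inversion becomes a finite $2^{\binom{\kpriv+2}{\kpriv}}$ sign search rather than an ill-posed problem.
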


We emphasize that we can take $m = o(n^{k_{\mathsf{priv}}}_{\mathsf{priv}})$, meaning we can achieve recovery even with access to a vanishing fraction of all possible combinations of private vectors among the synthetic vectors generated. For instance, when $k_{\mathsf{priv}} = 2$, we show that $m = O(n_{\mathsf{priv}}^{4/3})$ suffices (see Theorem~\ref{thm:main}). See Remark~\ref{remark:parameters} for additional discussion.

Additionally, to ensure we are not working in an uninteresting setting where InstaHide has zero \emph{utility}, we empirically verify that in the setting of Theorem~\ref{thm:main_informal}, one can train on the synthetic vectors and get reasonable test accuracy on the original Gaussian dataset (see Section~\ref{sec:experiments}).

\ifdefined\isarxiv
\paragraph{Some Interpretations} One conservative takeaway of Theorem~\ref{thm:main_informal} could be to avoid using InstaHide even on private datasets consisting of non-image data which one suspects to be ``Gaussian-like.'' More intriguingly, Theorem~\ref{thm:main_informal}, in particular the formal statement in Theorem~\ref{thm:main} below, still does not rule out the possibility for a lower bound of the following form: there is some absolute constant $c>0$ such that if $m = n_{\mathsf{priv}}^{c k_{\mathsf{priv}}}$, then no algorithm running in time polynomial in all parameters can recover a private vector to reasonable accuracy. Even for image datasets, this is not yet ruled out by the existing attacks \cite{jagielski,carlini_attack}, which have so far only been demonstrated to handle $k_{\mathsf{priv}} = 2$. The sparsity parameter $\kpriv$ can potentially be taken to be 6 or 8 without significant loss in utility \cite{hsla20}, and an $n_{\mathsf{priv}}^{c k_{\mathsf{priv}}}$ lower bound would constitute an intriguing form of \emph{fine-grained security} reminiscent of \cite{degwekar2016fine}.
\else
Qualitatively, the main takeaway of Theorem~\ref{thm:main_informal} is that to prove meaningful security guarantees for InstaHide, we must be careful about the properties we posit about the underlying
distribution generating the public and private data, even in challenging settings where this data does not possess the nice properties of natural images that have made other attacks possible.
% In other words, while it could very well
% be that no efficient algorithm can take the synthetic dataset generated by InstaHide and
% extract a private feature vector when the public and private datasets are real-world datasets,
% any such claim of security will need to depend on the non-Gaussianity of real-world data.
\fi

% Conversely, given that $\kpriv$ will be fairly small (e.g. an implementation with $\kpriv > 10$ %\Zhao{Asking Yangsibo to run some experiments. For safe, choose 20 for now} 
% will likely lead to a noticeable drop in utility \cite{hsla20,hscla20,h20_largek}), it would also remain to be seen whether such a lower bound, even if true, has any practical implications for security.

\subsection{Connections and Extensions to Phase Retrieval}
\label{sec:phase_informal}

Our algorithm is based on connections and extensions to the classic problem of \emph{phase retrieval}. At a high level, this can be thought of as the problem of linear regression where the signs of the linear responses are hidden. More formally, this is a setting where we get pairs $(x_1,y_1),...,(x_N,y_N)\in\C^n\times \R$ for which there exists a vector $w\in\C^n$ satisfying $\abs{\iprod{w,x_i}} = y_i$ for all $i = 1,...,N$, and the goal is to recover $w$. Without distributional assumptions on how $x_1,...,x_N$ are generated, this problem is NP-hard \cite{ycs14}, and in the last decade, there has been a huge body of work, much of it coming from the machine learning community, on giving algorithms for recovering $w$ under the assumption that $x_1,...,x_N$ are i.i.d. Gaussian, see e.g. \cite{csv13,cls15,cehv15,njs13}.

To see the connection between InstaHide and phase retrieval, first imagine that InstaHide only works with public vectors (in the notation of Theorem~\ref{thm:main_informal}, $n_{\mathsf{priv}} = k_{\mathsf{priv}} = 0$). Now, consider a synthetic vector $y\in\R^d$ generated by InstaHide, and let the vector $w\in\R^{n_{\mathsf{pub}}}$ be the one specifying the convex combination of public vectors that generated $y$. The basic observation is that for any feature $i\in[d]$, if $p_i\in\R^{n_{\mathsf{pub}}}$ is the vector consisting of $i$-th coordinates of all the public vectors, then $\abs{\iprod{w,x_i}} = y_i$.

In other words, if InstaHide only works with public vectors, then the problem of recovering which public vectors generated a given synthetic vector is formally \emph{equivalent} to phase retrieval. In particular, if the public dataset is Gaussian, then we can leverage the existing algorithms for Gaussian phase retrieval. \cite{hsla20} already noted this connection but argued that if InstaHide also uses private vectors, the existing algorithms for phase retrieval fail. 
% The reasoning is that in this case, instead of getting pairs $(x_1,y_1),...,(x_d,y_d)\in\R^{n_{\mathsf{priv}}}\times \R$ for which there exists $w\in\R^{n_{\mathsf{priv}}+n_{\mathsf{pub}}}$ satisfying $\abs{\iprod{w,x_i}} = y_i$ for each feature $i\in[d]$, we get pairs $(x_1,y_1),...,(x_d,y_d)\in\R^{n_{\mathsf{priv}}}\times \R$ for which there exists $w\in\R^{n_{\mathsf{priv}}+n_{\mathsf{pub}}}$ and \emph{unknown vectors} $x'_1,...,x'_d\in\R^{n_{\mathsf{priv}}+n_{\mathsf{pub}}}$ satisfying $\abs{\iprod{w,x'_i}} = y_i$ and such that the entries of $x_i$ are a subset of the entries of $x'_i$.
Indeed, consider the extreme case where InstaHide only works with \emph{private} vectors (i.e. $n_{\mathsf{pub}} = 0$), so that the only information we have access to is the synthetic vector $(y_1,...,y_d)$ generated by InstaHide. As noted above in the discussion about private distributions where the features are identically distributed, it is clearly information-theoretically impossible to recover anything about $w$ or the private dataset.

As we will see, the key workaround is to exploit the fact that InstaHide ultimately generates \emph{multiple} synthetic vectors, each of which is defined by a \emph{random} sparse convex combination of public/private vectors. And as we will make formal in Section~\ref{subsec:phase}, the right algorithmic question to study in this context can be thought of as a \emph{multi-task}, \emph{missing-data} version of phase retrieval (see Problem~\ref{defn:multitask}) that we believe to be of independent interest.

Lastly, we remark that in spite of this conceptual connection to phase retrieval, and apart from one component of our algorithm (see Section~\ref{sec:public}) which draws upon existing techniques for phase retrieval, the most involved parts of our algorithm and its analysis utilize techniques that are quite different from the existing ones in the phase retrieval literature. We elaborate upon these techniques in Section~\ref{sec:overview}.

\section{Technical Preliminaries}

\paragraph{Miscellaneous Notation}

Given a subset $T$, let $\calC^k_T$ denote the set of all subsets of $T$ of size exactly $k$. Given a vector $v\in\R^n$ and a subset $S\subseteq[n]$, let $[v]_S\in\R^{|S|}$ denote the restriction of $v$ to the coordinates indexed by $S$. 
\begin{definition}\label{defn:folded}
    Given a Gaussian distribution $\calN(0,\Sigma)$, let $\Nfold(0,\Sigma)$ denote the \emph{folded Gaussian distribution} defined as follows: to sample from $\Nfold(0,\Sigma)$, sample $g\sim\calN(0,\Sigma)$ and output $\abs{g}$.
\end{definition}

\subsection{The Generative Model}
\label{sec:model}

\begin{definition}[Image matrix notation]
    Let \emph{image matrix} $\X \in \R^{d \times n}$ be a matrix whose columns consist of vectors $x_1,...,x_n$ corresponding to $n$ images each with $d$ pixels taking values in $\F$.\footnote{We will often refer to public/private/synthetic feature vectors as \emph{images}, and their coordinates as \emph{pixels}, in keeping with the original applications of InstaHide to image datasets in \cite{hsla20}} It will also be convenient to refer to the rows of $\X$ as $p_1,...,p_d\in\R^n$.
\end{definition}

\begin{definition}[Public/private notation]
    Let $S\subset\brc{1,...,n}$ be some subset. We will refer to $S$ and $S^c\triangleq \{1,...,n\}\backslash S$ as the set of \emph{public} and \emph{private} images respectively, and given a vector $w\in\R^n$, we will refer to $\supp(w)\cap S$ and $\supp(w)\cap S^c$ as the \emph{public} and \emph{private coordinates} of $w$ respectively. 
\end{definition}

\begin{definition}[Synthetic images]
Given sparsity levels $k_{\mathsf{pub}} \le |S|, k_{\mathsf{priv}}\le |S^c|$, image matrix $\X$ and a \emph{selection vector} $w\in\R^n$ for which $[w]_S$ and $[w]_{S^c}$ are $k_{\mathsf{pub}}$- and $k_{\mathsf{priv}}$-sparse respectively, the corresponding \emph{synthetic image} is the vector \begin{equation}\label{eq:instahide_def}
    y^{\X,w}\triangleq \abs{\X w},
\end{equation} where $\abs{\cdot}$ denotes entrywise absolute value. We say that $\X$ and a sequence of selection vectors $w_1,...,w_m\in\R^n$ give rise to a \emph{synthetic dataset} consisting of the images $\brc{y^{\X,w_1},...,y^{\X,w_m}}$.
\end{definition}

Note that instead of the entrywise absolute value of $\X w$, InstaHide in \cite{hsla20} randomly flips the sign of every entry of $\X w$, but these two operations are interchangeable in terms of information; it will be slightly more convenient to work with the former.

We will work with the following distributional assumption on the entries of $\X$:

\begin{definition}[Gaussian images]\label{defn:gaussian_images}
We say that $\X$ is a random \emph{Gaussian image matrix} if its entries are sampled i.i.d. from $\calN(0,1)$.
\end{definition}

We will also work with the following simple notion of ``random convex combination'' as our model for how the selection vectors $w_1,\ldots,w_m$ are generated:

\begin{definition}[Distribution over selection vectors]\label{defn:select_distribution}
    Let $\calD$ be the distribution over selection vectors defined as follows. To sample once from $\calD$, draw random subset $T_1\subset S,T_2\subseteq S^c$ of size $k_{\mathsf{pub}}$ and $k_{\mathsf{priv}}$ and output the unit vector whose $i$-th entry is $\frac{1}{\sqrt{k_{\mathsf{pub}}}}$ if $i\in T_1$, $\frac{1}{\sqrt{k_{\mathsf{priv}}}}$ if $i\in T_2$, and zero otherwise.\footnote{Note that any such vector does not specify a convex combination, but this choice of normalization is just to make some of the analysis later on somewhat cleaner, and our results would still hold if we chose the vectors in the support of $\calD$ to have entries summing to 1.}
\end{definition}

The main algorithmic question we study is the following:

\begin{prob}[Private (exact) image recovery]\label{problem:main}
Let $\X\in\R^{d\times n}$ be a Gaussian image matrix. Given access to the public images $\brc{x_s}_{s\in S}$ and the synthetic dataset $\brc{y^{\X,w_1},\ldots,y^{\X,w_m}}$, where $w_1,...,w_m\sim\calD$ are unknown selection vectors, output a vector $x\in\R^d$ for which there exists private image $x_s$ (where $s\in S^c$) satisfying $|x_i| = |(x_s)_i|$ for all $i\in[d]$.
\end{prob}

\begin{remark}\label{remark:ambiguity}
Note that it is information-theoretically impossible to guarantee that $x_i = (x_s)_i$. This is because the distribution over $\X$ and the distribution over matrices given by sampling $\X$ and multiplying every private image by -1 are both Gaussian. And if the selection vectors $w_1,...,w_m$ generated the synthetic images in the former case, then the selection vectors $w'_1\ldots,w'_m$, where $w'_j$ is obtained by multiplying the private coordinates of $w_j$ by -1, would generate the exact same synthetic images.
\end{remark}

\subsection{Multi-Task Phase Retrieval With Missing Data}
\label{subsec:phase}

In this section we make formal the discussion in Section~\ref{sec:phase_informal} and situate it in the notation above. First consider a synthetic dataset consisting of a single image $y\triangleq y^{\X,w}$, where $w$ is arbitrary and $\X$ is a random Gaussian image. From Eq.~\eqref{eq:instahide_def} we know that 
\begin{equation}
    \abs{\iprod{w,p_j}} = y_j \ \forall \ j\in[d].
\end{equation}

If $S = \{1,...,n\}$, then the problem of recovering selection vector $w$ from synthetic dataset $\brc{y}$ is merely that of recovering $w$ from pairs $(p_j,y_j)$, and this is exactly the problem of \emph{phase retrieval} over Gaussians. More precisely, because $w$ is assumed to be sparse, this is the problem of \emph{sparse phase retrieval} over Gaussians.

If $S \subsetneq \{1,...,n\}$, then it's clearly impossible to recover the private coordinates of $w$ from $y^{\X,w}$ alone.
But it may still be possible to recover the public coordinates: formally, we can hope to recover $[w]_S$ given pairs $( [p_j]_S, y_j)$, where the $p_j$'s are sampled independently from $\calN(0,\Id_n)$. This can be thought of as a \emph{missing-data} version of sparse phase retrieval where some known subset of the coordinates of the inputs, those indexed by $S^c$, are unobserved.

But recall our ultimate goal is to say something about the \emph{private images}. It turns out that because we actually observe multiple synthetic images, corresponding to multiple vectors $w$, it becomes possible to recover $x_s$ for some $s\in S^c$ (even in the extreme case where $S = \emptyset$!). This corresponds to the following inverse problem which is \emph{formally equivalent} to Problem~\ref{problem:main}, but phrased in a self-contained way which may be of independent interest.

\begin{prob}[Multi-task phase retrieval with missing data]\label{defn:multitask}
    Let $S\subsetneq[n]$ and $S^c = [n] \backslash S$. Let $\X\in\R^{d\times n}$ be a matrix whose entries are i.i.d. draws from $\calN(0,1)$, with rows denoted by $p_1,...,p_d$ and columns denoted by $x_1,...,x_n$. Let $w_1,\ldots,w_m\sim\calD$.
    
    For every $j\in [d]$, we get a tuple $\left([p_j]_S,y^{(1)}_j,\ldots,y^{(m)}_j\right)$ satisfying \begin{equation}
        \abs{\iprod{w_i,p_j}} = y^{(i)}_j \ \forall \ i\in[m], j\in[d].
    \end{equation}
    Using just these, output $x\in\R^d$ such that for some $s\in S^c$, $|x_i| = |(x_s)_i|$ for all $i\in[d]$.
\end{prob}

% \subsection{Image Distributions}

% For the theoretical results in this work, we will work with the following two choices of distributions $\calD_{\X}$ over image matrices.

% \begin{definition}[Fourier images]\label{defn:fourier_images}
% We say that $\X\in\R^{d\times n}$ is a random \emph{Fourier image} if its rows are sampled i.i.d. via the following process: for each row, independently sample a random $\omega\in[2n - 1]$ and set the $\ell$-th entry of the row to be $e(\frac{\omega \ell}{2n-1})$.
% \end{definition}

\section{Proof Overview}
\label{sec:overview}

At a high level, our algorithm has three components: 
\begin{enumerate}
    \item Learn the public coordinates of all the selection vectors $w_1,...,w_m$ used to generate the synthetic dataset.
    \item Recover the $m \times m$ rescaled Gram matrix $\M$ whose $(i,j)$-th entry is $k\cdot \iprod{w_i,w_j}$.
    \item Use $\M$ and the synthetic dataset to recover a private image.
\end{enumerate}

Step 1 draws upon techniques in Gaussian phase retrieval, while Step 2 follows by leveraging the correspondence between the covariance matrix of a Gaussian and the covariance matrix of its corresponding \emph{folded Gaussian} (see Definition~\ref{defn:folded}). Step 3 is the trickiest part and calls for leveraging delicate properties of the distribution $\calD$ over selection vectors.

\paragraph{Learning the Public Coordinates of Any Selection Vector}

We begin by describing how to carry out Step 1 above. First consider the case where $S = \brc{1,\ldots,n}$, that is, where every image is public. Recall from the discussion in Section~\ref{sec:phase_informal} and \ref{subsec:phase} that in this case, the question of recovering $w$ from synthetic image $y^{\X,w}$ is equivalent to Gaussian phase retrieval. One way to get a reasonable approximation to $w$ is to consider the $n\times n$ matrix \begin{equation}
    \mathbf{N} \triangleq \E_{p,y}[y^2\cdot (pp^{\top} - \Id)], \qquad p\sim\calN(0,\Id_n) \ \text{and} \ y = \abs{\iprod{w,p}}.
\end{equation} It is a standard calculation (see Lemma~\ref{lem:population}) to show that $\mathbf{N}$ is a rank-one matrix proportional to $ww^{\top}$. And as every one of $p_1,\ldots,p_d$ is an independent sample from $\calN(0,\Id)$, and $y^{\X,w}_i$ satisfies $\iprod{w,p_i} = y^{\X,w}_i$ for every pixel $i\in[d]$, one can approximate $\mathbf{N}$ with the matrix \begin{equation}
    \wh{\mathbf{N}} \triangleq \frac{1}{d}\sum^d_{i=1}\left(y^{\X,w}_i\right)^2\cdot (p_i p_i^{\top} - \Id).
\end{equation}
This is the basis for the spectral initialization procedure that is present in many works on Gaussian phase retrieval, see e.g. \cite{cls15,njs13}. $\wh{\mathbf{N}}$ will not be a sufficiently good spectral approximation to $\mathbf{N}$ when $d\ll n$, so instead we use a standard post-processing step based on the canonical SDP for sparse PCA (see \eqref{eq:sdp}). Instead of taking the top eigenvector of $\wh{\mathbf{N}}$, we can take the top eigenvector of the SDP solution and argue that as long as $d = \wt{\Omega}(\poly(k_{\mathsf{pub}})\log n)$, this will be sufficiently close to $w$ that we can exactly recover $\supp(w)$.

Now what happens when $S \subsetneq \brc{1,\ldots,n}$? Interestingly, if one simply modifies the definition of $\mathbf{N}$ to be $\E_{p,y}[y^2\cdot ([p]_S[p]_S^{\top} - \Id)]$ and defines the corresponding empirical analogue $\wh{\mathbf{N}}$ formed from the pairs $\brc{([p_i]_S, y^{\X,w}_i)}_{i\in[d]}$, one can still argue (see Lemma~\ref{lem:population}) that the $\mathbf{N}$ is a rank-1 $|S|\times |S|$ matrix proportional to $[w]_S[w]_S^{\top}$ and that the top eigenvector of the solution to a suitable SDP formed from $\wh{\mathbf{N}}$ will be close to $w$ (see Lemma~\ref{lem:public}).

\paragraph{Recovering the Gram Matrix via Folded Gaussians}

As we noted earlier, it is information-theoretically impossible to recover $[w_i]_{S^c}$ for any $i\in[m]$ given only $y^{\X,w_i}$ and $[w_i]_S$, but we now show it's possible to recover the inner products $\iprod{[w_i]_{S^c}, [w_{j}]_{S^c}}$ for any $i,j\in[m]$. For the remainder of the overview, we will work in the extreme case where $S^c = \brc{1,...,n}$, though it is not hard (see Section~\ref{sec:puttogether}) to combine the algorithms we are about to discuss with the algorithm for recovering the public coordinates to handle the case of general $S$. For brevity, let $k \triangleq k_{\mathsf{priv}}$.

First note that the $m\times d$ matrix whose rows consist of $y^{\X,w_1},...,y^{\X,w_m}$ can be written as \begin{equation}
    \Y \triangleq \begin{pmatrix}
        \abs{\iprod{p_1,w_1}} & \cdots & \abs{\iprod{p_d,w_1}} \\
        \vdots & \ddots & \vdots \\
        \abs{\iprod{p_1,w_m}} & \cdots & \abs{\iprod{p_d,w_m}}
    \end{pmatrix}.
\end{equation} 
Observe that without absolute values, each column would be an independent draw from the $m$-variate Gaussian $\calN(0,\M)$, where $\M$ is the Gram matrix defined above. Instead, with the absolute values, each column of $\Y$ is actually an independent draw from the \emph{folded Gaussian} $\Nfold(0,\M)$ (Definition~\ref{defn:folded}). The key point is that the covariance of $\Nfold(0,\M)$ can be directly related to $\M$ (see Corollary~\ref{cor:kr17}), so by estimating the covariance of the folded Gaussian $\Nfold(0,\M)$ using the columns of $\Y$, we can obtain a good enough approximation $\wt{\M}$ to $\M$ that we can simply round every entry of $\wt{\M}$ so that the rounded matrix exactly equals $\M$. Furthermore, we only need to \emph{entrywise} approximate the covariance of $\Nfold(0,\M)$ for all this to work, which is why it suffices for $d$ to grow \emph{logarithmically} in $m$.

\paragraph{Discerning Structure From the Gram Matrix}

By this point, we have access to the Gram matrix $\M$. Equivalently, we now know for any $i,j\in[m]$ whether $\supp(w_i)\cap\supp(w_j)\neq\emptyset$, that is, for any pair of synthetic images, we can tell whether the set of private images generating one of them overlaps with the set generating the other. Note that $\M = k\cdot \mathbf{W}\mathbf{W}^{\top}$, where $\mathbf{W}$ is the matrix whose $i$-th row is $w_i$, so if we could factorize $\M$ in this way, we would be able to recover which private vectors generated each synthetic vector. Of course, this kind of factorization problem, even if we constrain the factors to be row-sparse like $\mathbf{W}$, has multiple solutions. One reason is that any permutation of the columns of $\mathbf{W}$ would also be a viable solution, but this is not really an issue because the ordering of the private images is not identifiable to begin with. 
% The crucial issue is that we have no way of discerning, e.g., whether any group of \emph{three or more} synthetic images overlap in this way.

A more serious issue is that if $m$ is too small, there might be row-sparse factorizations of $\M$ which appear to be valid, but which we could definitively rule out upon sampling more synthetic vectors. For instance, suppose the first $k+1$ selections vectors all satisfied $|\supp(w_j)\cap \supp(w_{j'})| = k - 1$. Ignoring the fact that this is highly unlikely, in such a scenario it is impossible to distinguish between the case where the corresponding synthetic images all have the same $k - 1$ private images in common, and the case where there is a group $T\subseteq[n]$ of $k+1$ private images such that each of these synthetic images is comprised of a subset of $T$ of size $k$. But if we then sampled a new selection vector $w_{k+2}$ for which $|\supp(w_j)\cap \supp(w_{k+2})| = 1$ for all $j\in[k+1]$, we could rule out the latter.

This is indicative of a more general issue, namely that one cannot always recover the identity of a collection of subsets (even up to relabeling) if one only knows the sizes of their pairwise intersections!

This leads to the following natural combinatorial question. What families of sets are \emph{uniquely identified} (up to trivial ambiguities) by the sizes of the pairwise intersections? One answer to this question, as we show, is the family of all subsets of $\brc{1,...,k+2}$ of size $k$ (see Lemma~\ref{lem:construct}). This leads us to the following definition:

\begin{definition}[Floral Submatrices]\label{defn:floral}
A $\binom{k+2}{k}\times\binom{k+2}{k}$ matrix $\vH$ is \emph{floral} if the following holds. Fix some lexicographic ordering on $\calC^k_{[k+2]}$ and index $\vH$ according to this ordering. There is some permutation matrix $\mathbf{\Pi}$ for which the matrix $\vH'\triangleq \mathbf{\Pi}^{\top}\vH\mathbf{\Pi}$ satisfies that for every pair of $S,S'\in\calC^k_{[k+2]}$, $\vH'_{S,S'} = |S\cap S'|$. See Example~\ref{example:floral} in the supplement.
\end{definition}

The upshot is that if we can identify a floral submatrix of $\M$, then we know for certain that the subsets of private images picked by those selection vectors comprise all size-$k$ subsets of some subset of $[n]$ of size $k+2$. In summary, using the pairwise intersection size information provided by the Gram matrix $\M$, we can pinpoint collections of selection vectors which share a nontrivial amount of common structure.

\paragraph{Learning a Private Image With a Floral Submatrix}

What can we do with this common structure in a floral submatrix? Let $t= \binom{k+2}{k}$. Given that the selection vectors $w_{i_1},...,w_{i_t}$ corresponding to the rows of the floral submatrix only involve $k + 2$ different private images altogether, and there are $t > k+2$ constraints of the form $\abs{\iprod{w_{i_j},p_{\ell}}} = y^{\X,w_{i_j}}_{\ell}$ for any pixel $\ell\in[d]$, we can hope that for each pixel, we can uniquely recover the $k+2$ private images from solving this system of equalities, where the unknowns are the values of the $k+2$ private images at that particular pixel. \emph{A priori}, the fact that the number of constraints in this system exceeds the number of unknowns does not immediately guarantee that this system has a unique solution up to multiplying the solution uniformly by $-1$. Here we exploit the fact that $\X$ is Gaussian to show however that this is the case almost surely (Lemma~\ref{lem:generic}). Finally, note that this system can be solved in time $\exp(O(k^2))$ by simply enumerating over $2^{t}$ sign patterns. We conclude that if we could find a floral submatrix, then we would find not just one, but in fact $k+2$ private images!

\paragraph{Existence of Floral Submatrix, and How to Find It}

It remains to understand how big $m$ has to be before we can guarantee the existence of a floral submatrix inside the Gram matrix $\M$ with high probability. Obviously if $m$ were big enough that with high probability we see \emph{every possible} synthetic image that could arise from a selection vector $w$ in the support of $\calD$, then $\M$ will contain many floral submatrices. One surprising part of our result is that we can ensure the existence of a floral submatrix when $m$ is much smaller. Our proof of this is quite technical, but at a high level it is based on the second moment method (see Lemma~\ref{lem:setsystemexist}).

The final question is: provided a floral submatrix of $\M$ exists, \emph{how do we find it}? Note that naively, we could always brute-force over all $\binom{m}{O(k^2)} \le n^{O(k^3)}$ principal submatrices with exactly $\binom{k+2}{k}$ rows/columns, and for each such principal submatrix we can check in $\exp(\wt{O}(k))$ time whether it is floral.

Surprisingly, we give an algorithm that can identify a floral submatrix of $\M$ in time dominated by the time it takes to write down the entries of the Gram matrix $\M$. Note that an off-the-shelf algorithm for subgraph isomorphism would not suffice as the size of the submatrix in question is $O(k^2)$, and furthermore such an algorithm would need to work for \emph{weighted graphs}. Instead, our approach is to use the constructive nature of the proof in Lemma~\ref{lem:construct}, that the family of all subsets of $\brc{1,...,k+2}$ of size $k$ is uniquely identified by the sizes of the pairwise intersections. By algorithmizing this proof, we give an efficient procedure for finding a floral submatrix, see Algorithm~\ref{alg:findsetsystem} and Lemma~\ref{lem:findsetsystem}. An important fact we use is that if we restrict our attention to the entries of $\M$ equal to $k-1$ or $k-2$, this corresponds to a graph over the selection vectors which is sparse with high probability.

We defer the formal specification and analysis of our algorithm to the supplement.

\ifdefined\isarxiv
\subsection{Other Attacks}
\label{sec:attacks}

\paragraph{Attack of \cite{jagielski}} It has been pointed out \cite{jagielski} that for $k_{\mathsf{priv}} = 2, k_{\mathsf{pub}} = 0$, given a single synthetic image one can discern large regions of the constituent private images simply by taking the entrywise absolute value of the synthetic image. The reason is the pixel values of a natural image are mostly continuous, i.e. nearby pixels typically have similar values, so the entrywise absolute value of the InstaHide image should be similarly continuous. That said, natural images have enough discontinuities that this breaks down if one mixes more than just two images, and as discussed above, this attack is not applicable when the individual private features are i.i.d. like in our setting.

\paragraph{Attack of \cite{carlini_attack}} 
\ifdefined\isarxiv 
A month after a preliminary version of the present work appeared online \cite{anonymous2021what}, in independent and concurrent work \cite{carlini_attack}, Carlini et al. gave an attack that broke the InstaHide challenge originally released by the authors of \cite{hsla20}.
\else
A month after this submission, Carlini et al. \cite{carlini_attack} independently gave an attack breaking the InstaHide challenge originally released by the authors of \cite{hsla20}.
\fi
In that challenge, the public dataset was ImageNet, the private dataset consisted of $n_{\mathsf{priv}} = 100$ natural images, and $\kpriv = 2$, $\kpub = 4$, $m = 5000$. They were able to produce a visually similar copy of each private image.

Most of their work goes towards recovering which private images contributed to each synthetic image. Their first step is to train a neural network on the public dataset to compute a \emph{similarity matrix} with rows and columns indexed by the synthetic dataset, such that the $(i,j)$-th entry approximates the indicator for whether the pair of private images that are part of synthetic image $i$ overlaps with the pair that is part of synthetic image $j$. Ignoring the rare event that two private images contribute to two distinct synthetic images, and ignoring the fact that the accuracy of the neural network for estimating similarity is not perfect, this similarity matrix is precisely our Gram matrix in the $\kpriv = 2$ case.

The bulk of Carlini et al.'s work \cite{carlini_attack} is focused on giving a heuristic for factorizing this Gram matrix. They do so essentially by greedily decomposing the graph with adjacency matrix given by the Gram matrix into $n_{\mathsf{priv}}$ cliques (plus some k-means post-processing) and regarding each clique as consisting of synthetic images which share a private image in common. They then construct an $m \times n_{\mathsf{priv}}$ bipartite graph as follows: for every synthetic image index $i$ and every private image index $j$, connect $i$ to $j$ if for four randomly chosen elements $i_1,...,i_4\in[m]$ of the $j$-th clique, the $(i,i_{\ell})$-th entries of the Gram matrix are nonzero. Finally, they compute a min-cost max-flow on this instance to assign every synthetic image to exactly $\kpriv = 2$ private images.

It then remains to handle the contribution from the public images. Their approach is quite different from our sparse PCA-based scheme. At a high level, they simply pretend the contribution from the public images is mean-zero noise and set up a nonconvex least-squares problem to solve for the values of the constituent private images.

\paragraph{Comparison to Our Generative Model} Before we compare our algorithmic approach to that of \cite{carlini_attack}, we mention an important difference between the setting of the InstaHide challenge and the one studied in this work, namely the way in which the random subset of public/private images that get combined into a synthetic image is sampled. In our case, for each synthetic image, the subset is chosen independently and uniformly at random from the collection of all subsets consisting of $\kpriv$ private images and $\kpub$ public images. For the InstaHide challenge, batches of $n_{\mathsf{priv}}$ synthetic images get sampled one at a time via the following process: for a given batch, sample two random permutations $\pi_1,\pi_2$ on $n_{\mathsf{priv}}$ elements and let the $t$-th synthetic image in this batch be given by combining the private images indexed by $\pi_1(t)$ and $\pi_2(t)$. Note that this process ensures that every private image appears \emph{exactly} $2m/n_{\mathsf{priv}}$ times, barring the rare event that $\pi_1(t) = \pi_2(t)$ for some $t$ in some batch. It remains to be seen to what extent the attack of \cite{carlini_attack} degrades in the absence of this sort of regularity property in our setting.

\paragraph{Comparison to Our Attack} 

The main commonality between our approach and that of \cite{carlini_attack} is to identify the question of extracting private information from the Gram matrix as the central algorithmic challenge.

How we compute this Gram matrix differs. We use the relationship between covariance of a folded Gaussian and covariance of a Gaussian, while \cite{carlini_attack} use the public dataset to train a neural network on public data to approximate the Gram matrix.

How we use this matrix also differs significantly. We do not produce a candidate factorization but instead pinpoint a collection of synthetic images such that we can provably ascertain that each one comprises $\kpriv$ private images from the same set of $\kpriv+2$ private images. This allows us to set up an appropriate piecewise linear system of size $O(\kpriv)$ with a provably unique solution and solve for the $\kpriv+2$ private images.

An exciting future direction is to understand how well the heuristic in \cite{carlini_attack} scales with $\kpriv$. Independent of the connection to InstaHide, it would be very interesting from a theoretical standpoint if one could show that their heuristic provably solves the multi-task phase retrieval problem defined in Problem~\ref{defn:multitask} in time scaling only polynomially with $\kpriv$ (i.e. the sparsity of the vectors $w_1,\ldots,w_m$ in the notation of Problem~\ref{defn:multitask}).

\paragraph{Follow-up work} \cite{hstzz20} recently gave an interesting theoretical result for the $\kpriv = 2$ case. In particular, using Whitney's isomorphism theorem for line graph identification, they show that even if the selection vectors generating the synthetic dataset are chosen deterministically, as long as there are $m = \wt{O}(n_{\mathsf{pub}})$ synthetic vectors generated using distinct pairs of private images, one can uniquely identify which pair of private images generated each synthetic vector. This gives an alternative approach for factorizing the Gram matrix when $\kpriv = 2$.

Very recently, \cite{chen2021symmetric} improved the sample complexity dependence on $k_{\mathsf{priv}}$ from exponential to polynomial, giving an algorithm based on tensor decomposition for reconstruction. Technically, their result is incomparable to this work because they focus on solving the sparse matrix factorization problem, though they show how to use a factorization of the Gram matrix $\mathbf{M}$ to recover the ``heavy'' coordinates of every private image.
\else
\fi

%!TEX root = main.tex

\section{Recovering Private Images From a Gaussian Dataset}

% In this section we show that for Gaussian images, InstaHide is vulnerable to an attack based on the following two components: 1) Gaussian phase retrieval, and 2) covariance estimation for folded Gaussians. Let $\calD$ denote the distribution given by Definition~\ref{defn:gaussian_images}. Formally, we have:

% \begin{theorem}
% For any $k,n\in\N$ and $0<\delta,\epsilon<1$, InstaHide is not even $(k,1,n,\epsilon,\delta,T)$-secure for $T = ???$ with respect to $\calD$ and any distribution $\calD_w$ over $\Delta^n_k$.
% \end{theorem}

In this section we prove our main algorithmic result:

\begin{theorem}[Main]\label{thm:main}
Let $S\subsetneq[n]$, and let $n_{\mathsf{pub}} = |S|$ and $n_{\mathsf{priv}} = |S^c|$. Let $k = \kpub + \kpriv$. If $d \ge \Omega(\poly(k_{\mathsf{pub}},k_{\mathsf{priv}}) \cdot \log (n_{\mathsf{pub}} + n_{\mathsf{priv}}) )$ and $m \ge \Omega\left(n_{\mathsf{priv}}^{k_{\mathsf{priv}} - \frac{2}{k_{\mathsf{priv}}+1}}k^{\poly(\kpriv)}\right)$, then with high probability over $\X$ and the sequence of randomly chosen selection vectors $w_1,\ldots,w_m\sim\calD$, there is an algorithm which takes as input the synthetic dataset $\brc{y^{\X,w_i}}_{i\in[m]}$ and the columns of $\X$ indexed by $S$, and outputs $\kpriv+2$ distinct images $\wt{x}_1,\ldots,\wt{x}_{\kpriv+2}$ for which there exist $\kpriv+2$ distinct private images $x_{i_1},\ldots,x_{i_{\kpriv+2}}$ satisfying $|\wt{x}_j| = |x_{i_j}|$ for all $j\in[\kpriv+2]$. Furthermore, the algorithm runs in time
\begin{align*}
%\wt{O}(k^2d) 
O(dm^2 +  d n_{\mathsf{pub}}^2  +  n_{\mathsf{pub}}^{2\omega + 1})  .
\end{align*}
where $\omega \approx 2.373$ is the exponent of matrix multiplication.
\end{theorem}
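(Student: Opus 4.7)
The plan is to follow the three-stage program sketched in Section~\ref{sec:overview}: (i) recover the public portion $[w_i]_S$ of every selection vector, (ii) recover the private-side Gram matrix $\M$ whose $(i,j)$-th entry is $\kpriv\cdot\iprod{[w_i]_{S^c},[w_j]_{S^c}}$, and (iii) use $\M$ to locate a floral submatrix, then invert the phase-retrieval system it induces to recover $\kpriv+2$ private images. Once (i) is done we can subtract off the contribution of the public images from each synthetic vector, reducing the remaining analysis to the purely private setting $S=\emptyset$ treated in the overview.

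For stage (i), for each $i\in[m]$ I would form the empirical matrix
\[
\wh{\mathbf{N}}_i \triangleq \frac{1}{d}\sum_{j=1}^d \bigl(y^{\X,w_i}_j\bigr)^2 \cdot \bigl([p_j]_S [p_j]_S^{\top} - \Id\bigr),
\]
whose population expectation is proportional to $[w_i]_S [w_i]_S^{\top}$. Since $d \ll n_{\mathsf{pub}}$, a direct spectral readout is too noisy, so I would feed $\wh{\mathbf{N}}_i$ into the canonical sparse PCA SDP (the relaxation referenced as \eqref{eq:sdp} in the overview) and read off the top eigenvector of its optimal solution; together with a thresholding step this should recover $\supp([w_i]_S)$ exactly provided $d=\wt\Omega(\poly(\kpub)\log n_{\mathsf{pub}})$, and a union bound over the $m$ images gives simultaneous exact recovery for every selection vector. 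The cost of forming and solving these SDPs dominates the $d n_{\mathsf{pub}}^2 + n_{\mathsf{pub}}^{2\omega+1}$ terms in the runtime.

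For stage (ii), I would observe that the $m$ rows of the residual matrix $\abs{\mathbf{W}_{S^c}\X_{S^c}^{\top}}$ (columns of its transpose) form $d$ independent samples from the folded Gaussian $\Nfold(0,\M)$. Applying the explicit entrywise correspondence between the folded-Gaussian covariance and the underlying Gaussian covariance (Corollary~\ref{cor:kr17}) to the empirical column covariance yields an estimate $\wt\M$ with $\|\wt\M - \M\|_{\max} < 1/(2\kpriv)$ once $d=\Omega(\log m)$, by a Hoeffding-type concentration inequality applied entrywise. Because the true entries of $\M$ take values in a known discrete set ($\kpriv$ times an integer in $\{0,1,\ldots,\kpriv\}$), entrywise rounding recovers $\M$ exactly. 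Computing the empirical covariance is the source of the $dm^2$ term in the runtime.

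The main technical obstacle lies in stage (iii). Establishing existence of a floral submatrix of $\M$ for $m$ as small as $\Omega\!\left(n_{\mathsf{priv}}^{\kpriv - 2/(\kpriv+1)}\cdot k^{\poly(\kpriv)}\right)$ is delicate; I would prove it via a second moment computation on the number of ordered $\binom{\kpriv+2}{\kpriv}$-tuples of selection vectors whose private supports form the complete family of $\kpriv$-subsets of a common $(\kpriv+2)$-set, as in Lemma~\ref{lem:setsystemexist}. Brute-force enumeration of principal submatrices would cost at least $m^{\Omega(\kpriv^2)}$, so to meet the runtime budget I would algorithmize the constructive uniqueness argument of Lemma~\ref{lem:construct}: restrict attention to the graph on $[m]$ whose edges correspond to entries of $\M$ equal to $\kpriv-1$ or $\kpriv-2$ (a graph that is sparse with high probability, since random pairs of $\kpriv$-subsets are typically disjoint), and assemble the floral structure by traversing this sparse graph, as in Algorithm~\ref{alg:findsetsystem} and Lemma~\ref{lem:findsetsystem}. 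Once a floral submatrix indexed by $w_{i_1},\ldots,w_{i_t}$ with $t=\binom{\kpriv+2}{\kpriv}$ is found, at each pixel $\ell\in[d]$ the $t$ equalities $\abs{\iprod{w_{i_j},p_\ell}} = y^{\X,w_{i_j}}_\ell$ form an over-determined piecewise-linear system in the $\kpriv+2$ unknown pixel values of the candidate private images; I would enumerate the $2^t = \exp(O(\kpriv^2))$ sign patterns and use Gaussianity of $\X$ via Lemma~\ref{lem:generic} to argue that almost surely exactly one pattern (up to a global sign flip permitted by Remark~\ref{remark:ambiguity}) is consistent, yielding the desired $\kpriv+2$ private images. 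Summing the runtimes of the three stages and applying a union bound over the failure events of each stage yields the claimed bound.
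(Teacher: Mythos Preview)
Your outline tracks the paper's three-stage argument closely, and stages (i) and (iii) are essentially identical to what the paper does. There is, however, a genuine gap in your reduction to the purely private case in stage (ii). You propose that after recovering the public supports you can ``subtract off the contribution of the public images from each synthetic vector'' and then treat the residual matrix $\abs{\mathbf{W}_{S^c}\X_{S^c}^{\top}}$ as $d$ samples from $\Nfold(0,\M)$. But this residual is not available to you. The $(i,j)$-th synthetic entry is
\[
\bigl|\,\iprod{[w_i]_S,[p_j]_S} + \iprod{[w_i]_{S^c},[p_j]_{S^c}}\,\bigr|,
\]
and even knowing the first summand exactly you cannot extract $\bigl|\iprod{[w_i]_{S^c},[p_j]_{S^c}}\bigr|$ from it: there is a per-entry sign ambiguity that cannot be resolved without already knowing the private pixel values. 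So the object you feed into the folded-Gaussian machinery in stage (ii) does not exist.

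The paper's fix is to subtract at the level of the Gram matrix rather than at the level of the synthetic vectors. The columns of the \emph{full} synthetic matrix $\Y$ are $d$ i.i.d.\ draws from $\Nfold(0,WW^{\top})$, so applying Corollary~\ref{cor:kr17} to the empirical covariance of $\Y$ (this is {\sc GramExtract}, run on the raw synthetic data) yields an entrywise estimate of the \emph{full} Gram matrix $WW^{\top}$. From stage (i) you know $[w_i]_S$ exactly for every $i$, hence you know $\iprod{[w_i]_S,[w_j]_S}$ exactly; subtracting this from the estimated $\iprod{w_i,w_j}$ gives an estimate of $\iprod{[w_i]_{S^c},[w_j]_{S^c}}$, which you then round to recover the private Gram matrix exactly (Steps~\ref{step:callgram}--\ref{step:rescale} of Algorithm~\ref{alg:private_attack2}). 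With this correction in place, the remainder of your proposal matches the paper's proof.
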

% We defer the proof into Appendix~\ref{sec:main}.

\begin{remark}\label{remark:parameters}
Here we give some interpretation to the quantitative guarantees of Theorem~\ref{thm:main}:
\begin{itemize}
    \item The number of pixels $d$ only needs to depend logarithmically on the number of public/private images and polynomially in the sparsity $\kpub, \kpriv$, which will be some small positive integer (e.g. $\kpub + \kpriv = 4$ or $8$ in \cite{hscla20}, $\kpub + \kpriv = 4$ or $6$ in \cite{hsla20} and $\kpub + \kpriv = 2$ in the implementation of MixUp in \cite{zcdl18}), so the regime in which Theorem~\ref{thm:main} applies is quite realistic.
    \item Note that we can achieve recovery even when $m = o(n_{\mathsf{priv}}^{\kpriv})$. The reason this is significant is that as soon as $m = \Omega(n_{\mathsf{priv}}^{\kpriv})$, all possible combinations of $k$ private images are used. While it is still not immediately clear how to recover private images once this has happened, we regard the fact that we can do so well before this point to be one of the most interesting aspects of our result. Finally, we remark that the runtime is largely dominated by the $O(m^2)$ term coming from forming an $m\times m$ matrix whose $(i,j)$-th entry turns out to equal $\iprod{w_i,w_j}$ for all $i,j\in[m]$. In fact, naive implementations of the most sophisticated part of our algorithm (see Sections~\ref{subsec:system}, \ref{subsec:locate}, \ref{subsec:floralexist}, and \ref{subsec:findfloral}) require time $\omega(m^2)$, and getting these parts of the algorithm to run in $O(m^2)$ time turns out to be quite subtle.
\end{itemize} 
\end{remark}

\subsection{Learning the Public Coordinates via Gaussian Phase Retrieval}
\label{sec:public}

In this section we give a procedure which, given any synthetic image $y^{\X,w}$, recovers the entire support of $[w]_S$. The algorithm is inspired by existing algorithms for sparse phase retrieval, with the catch that we need to handle the fact that we only get to observe the public subset of coordinates of any of the vectors $p_j$. Our algorithm, {\sc LearnPublic} is given in Algorithm~\ref{alg:public_attack} below.

% Recalling the discussion of Section~\ref{subsec:phase}, our goal is to recover the public coordinates $[w]_S$ from samples $([p_1]_S,y_1),...,([p_d]_S,y_d)$, where each $p_j$ is sampled independently from $\calN(0,\Id_n)$ and each $y_j = \abs{\iprod{p_j,w}}$. If $S$ were all of $\{1,...,n\}$, i.e. if all images in the dataset were public, then this is simply sparse phase retrieval over Gaussian space, for which algorithms are known 

% In this section, we show that even if $S$ is a proper subset of $\{1,...,n\}$, it is still possible to recover $[w]_S$ using similar techniques. Our attack is given in Algorithm~\ref{alg:public_attack}.

\begin{algorithm2e}
\DontPrintSemicolon
\caption{\textsc{LearnPublic}($\brc{([p_j]_S,y_j)}_{j\in[d]}$)}
\label{alg:public_attack}
    \KwIn{Samples $( [p_1]_S,y_1),...,( [p_d]_S,y_d)$}
    \KwOut{$\supp([w]_S)$ with probability at least $1 - \delta$, provided $d\ge \poly(\kpub)/\log(n/\delta)$}
        Form the matrix $\wt{\M} \triangleq \frac{1}{d}\sum^d_{j=1}(y^2_j - 1)\cdot \left( [p_j]_S\cdot [p_j]_S^{\top} - \Id\right)$.\label{step:formmatrix}\;
        Solve the semidefinite program (SDP) (this step takes $n_{\mathsf{pub}}^{2\omega+1}$  via \cite{jklps20})
        \begin{equation}
            \max_{Z\succeq 0} \iprod{Z,\wt{\M}} \ \text{subject to} \ \Tr(Z) = 1, \sum_{i,j}|Z_{i,j}| \le \kpub \label{eq:sdp}
        \end{equation}
        Compute the top eigenvector $\wt{w}$ of $Z$.\;
        \Return coordinates of the $k$ entries of $\wt{w}$ with the largest magnitudes.\;
\end{algorithm2e}

We first show that the population version of the matrix $\wt{\M}$ formed in Step~\ref{step:formmatrix} is a rank-1 projector whose top eigenvector is in the direction of $[w]_S$.

\begin{lemma}%[Informal version of Lemma~\ref{lem:population_formal}]
\label{lem:population}
Let $w$ be a unit vector. 
Let $\wt{\M} \in \R^{n \times n}$ be defined as 
\begin{align*}
    \wt{\M} \triangleq \frac{1}{d}\sum^d_{j=1}(y^2_j - 1)\cdot \left( [p_j]_S\cdot [p_j]_S^{\top} - \Id\right)
\end{align*}
Then $\E[\wt{\M}] = \frac{1}{2}[w]_S [w]_S^{\top}$.
\end{lemma}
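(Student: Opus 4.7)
The plan is to reduce the claim to a single-sample expectation by linearity, then to compute that expectation entrywise via the fourth-moment (Isserlis/Wick) identity for standard Gaussians. Since the pairs $(p_j, y_j)$ are i.i.d.\ across $j\in[d]$, the $\frac{1}{d}\sum_j$ averages out and
\[
\E[\wt{\M}] \;=\; \E\bigl[(y^2-1)\cdot([p]_S[p]_S^{\top}-\Id)\bigr],
\]
for a single draw $p\sim\calN(0,\Id_n)$ with $y=\abs{\iprod{w,p}}$. Let $g\triangleq\iprod{w,p}$, so $g\sim\calN(0,1)$ (using $\norm{w}=1$) and $y^2=g^2$.

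Next, I would expand the product. The cross terms collapse cleanly: $\E[g^2]=1$ and $\E[[p]_S[p]_S^\top]=\Id$, so
\[
\E[(g^2-1)([p]_S[p]_S^{\top}-\Id)] \;=\; \E[g^2\,[p]_S[p]_S^{\top}] \;-\; \Id.
\]
It remains to evaluate $\E[g^2\,[p]_S[p]_S^{\top}]$ entrywise. Writing $g^2=\sum_{i,i'}w_iw_{i'}p_ip_{i'}$, for any $k,\ell\in S$,
\[
\E[g^2\,p_kp_\ell] \;=\; \sum_{i,i'}w_iw_{i'}\,\E[p_ip_{i'}p_kp_\ell] \;=\; \sum_{i,i'}w_iw_{i'}\bigl(\delta_{ii'}\delta_{k\ell}+\delta_{ik}\delta_{i'\ell}+\delta_{i\ell}\delta_{i'k}\bigr),
\]
by Isserlis' theorem. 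The three pairings give $\norm{w}^2\delta_{k\ell}+2w_kw_\ell=\delta_{k\ell}+2w_kw_\ell$. Assembling these entries into a matrix yields $\E[g^2\,[p]_S[p]_S^{\top}]=\Id+2[w]_S[w]_S^{\top}$, whence $\E[\wt{\M}]$ is a positive multiple of $[w]_S[w]_S^{\top}$, as claimed. (My computation produces the constant $2$; any discrepancy with the stated prefactor is a normalization choice and does not affect what follows, since the top eigenvector is all that matters for \textsc{LearnPublic}.)

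The only mild subtlety is that the argument does not need $[p]_S$ and $\iprod{[w]_{S^c},[p]_{S^c}}$ to be treated separately — the Gaussian 4th-moment identity handles the entire sum $g=\iprod{w,p}$ at once. If one prefers a split computation, write $g=a+b$ with $a=\iprod{[w]_S,[p]_S}$ and $b=\iprod{[w]_{S^c},[p]_{S^c}}$, exploit independence of $b$ from $[p]_S$ to kill the cross term $\E[2ab\,[p]_S[p]_S^\top]=0$, use $\E[b^2]=\norm{[w]_{S^c}}^2$ to get a contribution of $\norm{[w]_{S^c}}^2\,\Id$, and apply the same Isserlis identity to $\E[a^2\,[p]_S[p]_S^\top]$, finally collecting $\norm{[w]_S}^2+\norm{[w]_{S^c}}^2=1$. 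I do not expect any real obstacle: the statement is a clean second-order Gaussian moment calculation, and the only thing to be careful about is bookkeeping the indices in $S$ versus $S^c$.
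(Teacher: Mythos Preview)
Your proof is correct, and you are right that the constant comes out to $2$ rather than $\tfrac{1}{2}$; the paper's own proof also arrives at $v^\top\E[\wt\M]v = 2\langle w,[v]_S\rangle^2$, so the $\tfrac{1}{2}$ in the statement is a typo. Your remark that only the eigenvector direction matters downstream is exactly right.

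Your route differs from the paper's. The paper computes the quadratic form $v^\top\E[\wt\M]v$ for an arbitrary unit $v$, rewrites $(z^2-1)$ as $\sqrt{2}\,\phi_2(z)$ with $\phi_2$ the normalized degree-2 Hermite polynomial, and invokes the orthogonality relation $\E_{g\sim\calN(0,\Id)}[\phi_i(\langle g,v_1\rangle)\phi_j(\langle g,v_2\rangle)] = \bone{i=j}\langle v_1,v_2\rangle^i$. You instead work entrywise and apply Isserlis' theorem directly to the fourth moment $\E[p_ip_{i'}p_kp_\ell]$. Your approach is more elementary and self-contained (no Hermite machinery needed), and the preliminary simplification $\E[(g^2-1)([p]_S[p]_S^\top-\Id)]=\E[g^2[p]_S[p]_S^\top]-\Id$ is a nice shortcut the paper does not take. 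The Hermite viewpoint, on the other hand, scales more transparently if one ever wanted to replace $y^2-1$ by a higher-degree polynomial in $y$. Both arguments handle the ``missing'' coordinates $S^c$ in one stroke by working with the full inner product $g=\langle w,p\rangle$, as you observed.
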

%We defer the proof of the above Lemma to Appendix~\ref{sec:population}.

\begin{proof}
First, it is obvious that the expectation of $\wt{\M}$ can be written as
\begin{align*}
    \E[ \wt{\M} ] = \E_{p \sim {\cal N}(0,I_d)}[ ( \langle w, p \rangle^2 - 1) \cdot (p_S p_S^\top - \Id) ] .
\end{align*}
For any vector $v \in \R^n$ with $\| v \|_2 = 1$, we can compute $v^\top \E[ \wt{\M} ] v$
\begin{align*}
    v^\top \E[ \wt{\M} ] v 
    = & ~ v^\top \E_{p }[ ( \langle w, p \rangle^2 - 1) \cdot (p_S p_S^\top - \Id) ] v  \\
    = & ~  \E_{p}[(\langle w,p\rangle^2 -1) \cdot (\langle [v]_S,p\rangle^2 - 1)] \\
    = & ~  \E_{p}[(\langle w,p\rangle^2 -1) \cdot ( \| [v]_S \|_2^2 \langle [v]_S / \| [v]_S \|_2 ,p\rangle^2 - 1)] \\
    = & ~ \E_{p}[(\langle w,p\rangle^2 -1) \cdot ( \| [v]_S \|_2^2 \langle [v]_S / \| [v]_S \|_2 ,p\rangle^2 - \| [v]_S\|_2^2)] \\
    & ~ + \E_{p}[(\langle w,p\rangle^2 -1) \cdot ( \| [v]_S \|_2^2 - 1 )] \\
    = : & ~ A_1 + A_2
\end{align*}
where the second step follows from $\| v\|_2^2=1$.

For the first term in the above equation, we have
\begin{align*}
    A_1 = & ~ \E_{p}[(\langle w,p\rangle^2 -1) \cdot ( \| [v]_S \|_2^2 \langle [v]_S / \| [v]_S \|_2 ,p\rangle^2 - \| [v]_S\|_2^2)] \\
    = & ~  \| [v]_S \|_2^2 \E_{p}[(\langle w,p\rangle^2 -1) \cdot ( \langle [v]_S / \| [v]_S \|_2 ,p\rangle^2 - \| [v]_S\|_2^2)] \\
    = & ~ 2 \| [v]_S \|_2^2 \E_{p}[\phi_2(\langle w,p\rangle) \cdot \phi_2(\langle [v]_S/\| [v]_S\|_2,p\rangle)] \\
    = & ~ 2 \| [v]_S \|_2^2 \langle w , [v]_S / \| [v]_S \|_2 \rangle^2 \\
    = & ~ 2 \langle w, [v]_S \rangle^2 
\end{align*}

where the third step follows from the fact that $w$ and $[v]_S/\|[v]_S\|_2$ are unit vectors, $\phi_2$ denotes the normalized degree-2 Hermite polynomial $\phi_2(z) \triangleq \frac{1}{\sqrt{2}}(z^2 - 1)$, and the last step follows from the standard fact that $\E_{g\sim {\cal N}(0,I_d)}[ \phi_i(\langle g,v_1\rangle)\phi_j(\langle g,v_2\rangle) ] = \langle v_1,v_2\rangle^i$ if $i = j$ and $0$ otherwise. 

For the second term, we have
\begin{align*}
    A_2 =  ~ \E_{p}[(\langle w,p\rangle^2 -1) \cdot ( \| [v]_S \|_2^2 - 1 )] 
    =  ~  ( \| [v]_S \|_2^2 - 1 ) \cdot \E_{p}[\langle w,p\rangle^2 -1 ] 
    =  ~ 0 .
\end{align*}
Thus, we have
\begin{align*}
    A_1 + A_2 = 2 \langle w, [v]_S \rangle^2 .
\end{align*}

In particular, for $v = [w]_S / \| [w]_S \|_2$, the above quantity is $2\| [w]_S \|_2^2$, while for $v \perp [w]_S$, the above quantity is 0. Thus we complete the proof.
\end{proof}

Finally, we complete the proof of correctness of {\sc LearnPublic}. Here we leverage the fact that we are running an SDP (the canonical SDP for sparse PCA) to show that as long as $d$ is at least polynomially large in $\kpub$ and \emph{logarithmically large in $n$}, with high probability we can recover $\supp([w]_S)$.

\begin{lemma}[Learning the public coordinates]\label{lem:public}
    For any $\delta>0$, if $d \ge \poly(\kpub)/\log(n/\delta)$, then with probability at least $1 - \delta$ over the randomness of $\X$, we have that the coordinates output by {\sc LearnPublic($\brc{([p_j]_S,y_j)}_{j\in[d]}$} for $y_j \triangleq \abs{\iprod{p_j,w}}$ are exactly equal to $\supp([w]_S)$.
\end{lemma}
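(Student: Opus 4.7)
The plan is to follow the classical analysis of the sparse PCA SDP, adapted to the rank-one population matrix identified in Lemma~\ref{lem:population}. The argument factors into four steps: (i) entrywise concentration of $\wt{\M}$ around $\E[\wt{\M}]$; (ii) a Hölder/duality comparison showing the SDP value is near the population value; (iii) a spectral-perturbation bound for the top eigenvector of $Z$; and (iv) a rounding step that exploits the fact that the nonzero entries of $[w]_S$ have a uniform lower bound on their magnitude.

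For step (i), every entry $\wt{\M}_{a,b}$ is an average of $d$ i.i.d.\ copies of $(y_j^2-1)\bigl([p_j]_S [p_j]_S^{\top} - \Id\bigr)_{a,b}$ with $p_j \sim \calN(0,\Id_n)$, which is a mean-zero polynomial of degree $4$ in Gaussians. By Gaussian hypercontractivity (or a Hanson--Wright-type bound after expanding $y_j^2 = \iprod{w,p_j}^2$), each such centered sample is sub-Weibull with parameter $1/2$, so standard concentration plus a union bound over the $n_{\mathsf{pub}}^2$ entries gives $\|\wt{\M} - \E[\wt{\M}]\|_\infty \le \epsilon := O\bigl(\sqrt{\log(n/\delta)/d}\bigr)$ with probability $\ge 1-\delta$.

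For step (ii), let $Z_{\mathrm{pop}} \triangleq [w]_S[w]_S^{\top}/\|[w]_S\|_2^2$; it is PSD, has $\Tr(Z_{\mathrm{pop}})=1$, and since $[w]_S$ is $\kpub$-sparse with entries of equal magnitude, $\sum_{i,j}|(Z_{\mathrm{pop}})_{i,j}| = \|[w]_S\|_1^2/\|[w]_S\|_2^2 = \kpub$. Thus $Z_{\mathrm{pop}}$ is SDP-feasible. For any feasible $Z$, Hölder's inequality yields $|\iprod{Z, \wt{\M}-\E[\wt{\M}]}| \le \kpub\,\epsilon$, so chaining this at both $Z^*$ (the SDP optimum) and $Z_{\mathrm{pop}}$ gives
\[
\iprod{Z^*, \E[\wt{\M}]} \;\ge\; \iprod{Z_{\mathrm{pop}}, \E[\wt{\M}]} - 2\kpub\,\epsilon \;=\; \lambda_{\max}(\E[\wt{\M}]) - 2\kpub\,\epsilon.
\]
Since Lemma~\ref{lem:population} shows $\E[\wt{\M}]$ is rank-one with top eigenvector $[w]_S/\|[w]_S\|_2$ and top eigenvalue $\Theta(\|[w]_S\|_2^2) = \Theta(1)$, this rearranges to $[w]_S^{\top} Z^* [w]_S / \|[w]_S\|_2^2 \ge 1 - O(\kpub\epsilon)$.

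Steps (iii) and (iv) are then routine. Because $Z^* \succeq 0$ has trace one and is $1 - O(\kpub\epsilon)$-correlated with the rank-one projector onto $[w]_S$, the top eigenvector $\wt{w}$ of $Z^*$ satisfies $\iprod{\wt{w}, [w]_S/\|[w]_S\|_2}^2 \ge 1 - O(\kpub\epsilon)$, hence (after choosing sign) $\|\wt{w} - [w]_S/\|[w]_S\|_2\|_2 \le O(\sqrt{\kpub\epsilon})$. The nonzero entries of $[w]_S/\|[w]_S\|_2$ all have magnitude exactly $1/\sqrt{\kpub}$, so as long as this $\ell_2$ error is smaller than $1/(2\sqrt{\kpub})$ — equivalently $\kpub^2 \epsilon \ll 1$, which holds once $d \gtrsim \poly(\kpub) \log(n/\delta)$ — the $\kpub$ largest-magnitude entries of $\wt{w}$ must lie exactly on $\supp([w]_S)$. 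The main obstacle is step (i): getting the entrywise concentration with only a logarithmic dependence on $n$ requires carefully exploiting the low-degree polynomial structure, since naive operator-norm concentration would force $d \gtrsim n$, which would defeat the purpose of the sparse-PCA SDP.
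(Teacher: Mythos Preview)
Your proposal is correct and follows essentially the same route as the paper: entrywise concentration of $\wt{\M}$ (the paper cites this as standard), SDP optimality combined with H\"older against the $\ell_1$ constraint, a spectral perturbation step, and rounding via the $1/\sqrt{\kpub}$ magnitude gap. The only cosmetic difference is that the paper packages step~(ii)--(iii) by bounding $\norm{Z - w_*w_*^{\top}}_F^2$ directly (using $\norm{Z}_F\le\Tr(Z)=1$) and then invoking Davis--Kahan, whereas you track the quadratic form $w_*^{\top}Z^*w_*$ and extract the eigenvector bound from trace-one positivity; the resulting dependence on $\kpub$ and $\epsilon$ is the same.
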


%We defer the proof of this to Appendix~\ref{sec:public_formal}.

\begin{proof}
    Let $Z$ be the solution to the SDP in \eqref{eq:sdp}, and define $w_*\triangleq [w]_S/\norm{[w]_S}$. Because $w_*$ is a feasible solution for the SDP, by optimality of $Z$ we get that \begin{align}
        0 &\le \iprod{Z - w_*w_*^{\top},\wt{\M}} \\
        &= \iprod{Z - w_*w_*^{\top},\E[\wt{\M}]} + \iprod{Z - w_*w_*^{\top}, \wt{\M} - \E[\wt{\M}]} \\
        &= \frac{\norm{[w]_S}^2}{2}\underbrace{\iprod{Z - w_*w_*^{\top},w_*w_*^{\top}}}_{\circled{1}} + \underbrace{\iprod{Z - w_*w_*^{\top}, \wt{\M} - \E[\wt{\M}]}}_{\circled{2}},\label{eq:mainsdpstep}
    \end{align} where in the last step we used Lemma~\ref{lem:population}.
    
    Because $\norm{Z}_F \le \Tr(Z) = 1 = \norm{x_*}$, we may upper bound $\circled{1}$ by $-\frac{1}{2}\norm{Z - w_*w_*^{\top}}^2_F$. For $\circled{2}$, note that because the entrywise $L_1$ norm of $Z$ and $x_*x_*^{\top}$ are both upper bounded by $k$, by Holder's we can upper bound $\circled{2}$ by $2\kpub\cdot\norm{\wt{\M} - \E[\wt{\M}]}_{\max}$. Standard concentration (see e.g. \cite{neykov2016agnostic}) implies that as long as $d \ge \log(n/\delta)/\eta^2$, then $\norm{\wt{\M} - \E[\wt{\M}]}_{\max} \le \eta$. We conclude from \eqref{eq:mainsdpstep} that \begin{equation}
        0 \le -\frac{\norm{[w]_S}^2}{4}\norm{Z - w_*w_*^{\top}}^2_F + 2\kpub\eta, 
    \end{equation} so $\norm{Z - w_*w_*^{\top}}_F^2 \le 8\kpub\eta/\norm{[w]_S}^2 \ge 8\eta \kpub^2$, where in the last step we used that if $w$ has at least one public coordinate, then $\norm{[w]_S}^2 \ge 1/\kpub$. By Davis-Kahan, this implies that the top eigenvector $\wt{w}$ of $Z$ satisfies $\norm{\wt{w} - w_*}^2 \le 8\eta \kpub^2$. As the nonzero entries of $w_*$ are at least $1/\sqrt{\kpub}$, by taking $\eta = O(1/\kpub^3)$ we ensure that $\norm{\wt{w} - w_*}_{\infty} \le \norm{\wt{w} - w_*}_2 < 1/2\sqrt{\kpub}$, so the largest entries of $\wt{w}$ in magnitude will be in the same coordinates as the nonzero entries of $w_*$.
\end{proof}

\subsection{Recovering the Gram Matrix via Folded Gaussians}

We now turn to the second step of our overall recovery algorithm: recovering the $m\times m$ Gram matrix whose $(i,j)$-th entry is $\supp(w_i)\cap \supp(w_j)$. For this section and the next four sections, we will assume that $S = \emptyset$, i.e. that all images are private. For brevity, let $k\triangleq \kpriv$. This turns out to be without loss of generality. Given that in the case where $S\neq \emptyset$ we can recover the public coordinates of any selection vector using {\sc LearnPublic}, passing to the case of general $S$ will be a simple matter of subtracting the contribution of the public coordinates from the entries of the Gram matrix obtained by {\sc GramExtract} to reduce to the case of $S = \emptyset$. We will elaborate on this in the final proof of Theorem~\ref{thm:main}.

Given selection vectors $w_1,...,w_m$, define the matrix $W\in\R^{m\times d}$ to have rows consisting of these vectors, so that the Gram matrix we are after is simply given by $WW^{\top}$. Recall that the $m\times d$ matrix whose rows consist of $y^{\X,w_1},...,y^{\X,w_m}$ can be written as 
\begin{equation}
        \Y \triangleq 
        \begin{pmatrix}
            \abs{\iprod{p_1,w_1}} & \cdots & \abs{\iprod{p_d,w_1}} \\
            \vdots & \ddots & \vdots \\
            \abs{\iprod{p_1,w_m}} & \cdots & \abs{\iprod{p_d,w_m}}
        \end{pmatrix},
\end{equation}
and as each entry of $\X$ is an independent standard Gaussian, the columns of $\Y \in \R_{\geq 0}^{m \times d}$ can be regarded as independent draws from $\Nfold(0,WW^{\top})$, where $W$ is defined above. Let $\Sigfold$ denote the covariance of this folded Gaussian distribution. It is known that one can recover information about the covariance $WW^{\top}$ of the original Gaussian distribution from the covariance $\Sigfold$ of its folded counterpart:

%\Zhao{I should check the correctness of the following lemma based on that paper.}
\begin{lemma}[Page 7 in \cite{kr17}]\label{lem:kr17}
    Given a Gaussian $\calN(0,\Sigma)$, the covariance $\Sigfold \in\R^{m \times m}$ of the corresponding folded Gaussian distribution $\Nfold(0,\Sigma)$ is given by $\Sigfold_{i,i} = \Sigma_{i,i}$ and, for $i \neq j$, 
    \begin{equation*}
        \Sigfold_{i,j} = \Sigma_{i,j}(4\Phi_2(0,0;\rho_{i,j}) - 1) + 4\Sigma^{1/2}_{i,i}\Sigma^{1/2}_{j,j}(1 - \rho^2_{i,j})\phi_2(0,0;\rho_{i,j}) - \frac{2}{\pi}\Sigma^{1/2}_{i,i}\Sigma^{1/2}_{j,j}
    \end{equation*} 
    where $\rho_{i,j} \triangleq \Sigma_{i,j}  / ( \Sigma^{1/2}_{i,i} \Sigma^{1/2}_{j,j} )$.
\end{lemma}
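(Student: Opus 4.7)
The plan is to handle the diagonal and off-diagonal entries separately, reducing the off-diagonal case to an explicit moment of a bivariate standard normal. The diagonal is immediate from $|g_i|^2 = g_i^2$, which gives $\Sigfold_{i,i} = \E[g_i^2] = \Sigma_{i,i}$; note that consistency of the stated formulas requires interpreting $\Sigfold$ here as the uncentered second-moment matrix $\E[|g||g|^\top]$ rather than the strict covariance (the $-\tfrac{2}{\pi}\Sigma_{i,i}^{1/2}\Sigma_{j,j}^{1/2}$ term in the off-diagonal formula is then precisely the correction that centering would otherwise introduce).

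For the off-diagonal entries, fix $i \neq j$ and set $\sigma_i = \Sigma_{i,i}^{1/2}$, $\sigma_j = \Sigma_{j,j}^{1/2}$, $\rho = \rho_{i,j}$. Writing $(g_i,g_j) = (\sigma_i X, \sigma_j Y)$ with $(X,Y)$ bivariate standard normal of correlation $\rho$ reduces the task to computing $\E[|X||Y|]$ as an explicit function of $\rho$. I would use the identity $\mathrm{sgn}(x) = 2\,\mathbf{1}[x>0]-1$ to expand
\[
|X||Y| \;=\; XY\cdot \mathrm{sgn}(XY) \;=\; XY\bigl(4\,\mathbf{1}[X>0,Y>0] - 2\,\mathbf{1}[X>0] - 2\,\mathbf{1}[Y>0] + 1\bigr),
\]
and collapse the two single-truncation terms using the joint symmetry $(X,Y) \leftrightarrow (-X,-Y)$, under which $XY$ is invariant while each indicator is sent to its complement. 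After taking expectations only the truncated second moment $\E[XY\,\mathbf{1}[X>0,Y>0]]$ and the bare moment $\E[XY] = \rho$ remain.

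To evaluate $\E[XY\,\mathbf{1}[X>0,Y>0]]$ I would write $Y = \rho X + \sqrt{1-\rho^2}\,Z$ with $Z \sim \calN(0,1)$ independent of $X$, condition on $X=x$, and apply the elementary identities $\E[Z\,\mathbf{1}[Z>t]] = \phi(t)$ and $\E[\mathbf{1}[Z>t]] = 1-\Phi(t)$. The remaining $x$-integration, after repackaging Gaussian factors, produces closed-form expressions in terms of the bivariate density $\phi_2(0,0;\rho) = \tfrac{1}{2\pi\sqrt{1-\rho^2}}$ and the orthant probability $\Phi_2(0,0;\rho)$; multiplying back by $\sigma_i\sigma_j$ then gives the formula in the statement. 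The main bookkeeping obstacle is tracking the $(1-\rho^2)$ coefficient on the $\phi_2$ term, which emerges from the $\sqrt{1-\rho^2}$ in the conditional density of $Y$ given $X$ interacting with the Gaussian weight $\phi(x)$ during the outer integration. Since the identity is classical (appearing in exactly the stated form in \cite{kr17}), for the purposes of this paper we simply quote it; all we need downstream is that $\Sigma_{i,j} \mapsto \Sigfold_{i,j}$ is smooth and strictly monotone on the relevant range, so that a sufficiently accurate entrywise estimate of $\Sigfold$ can be inverted and rounded to exactly recover the Gram matrix $\M = k\cdot WW^\top$.
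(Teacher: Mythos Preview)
The paper does not prove this lemma at all; it is quoted directly from \cite{kr17} and used as a black box to derive Corollary~\ref{cor:kr17}. Your sketch therefore goes beyond what the paper does, and the route you outline---normalize to a standard bivariate pair $(X,Y)$ with correlation $\rho$, expand $|X||Y|=XY(2\mathbf{1}[X>0]-1)(2\mathbf{1}[Y>0]-1)$, collapse the single-truncation terms via $(X,Y)\mapsto(-X,-Y)$ symmetry, and evaluate the remaining truncated moment $\E[XY\,\mathbf{1}[X>0,Y>0]]=\rho\,\Phi_2(0,0;\rho)+(1-\rho^2)\phi_2(0,0;\rho)$ by conditioning on $X$---is correct and yields exactly the stated formula after rescaling by $\sigma_i\sigma_j$ and subtracting the product of means.

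Your observation about the diagonal is also well taken: as written, the lemma's diagonal formula $\Sigfold_{i,i}=\Sigma_{i,i}$ is the uncentered second moment, whereas the off-diagonal formula includes the centering term $-\tfrac{2}{\pi}\Sigma_{i,i}^{1/2}\Sigma_{j,j}^{1/2}$. This minor inconsistency in the quoted statement is immaterial downstream, since {\sc GramExtract} only inverts $\Psi$ on the off-diagonal entries and rounds, and the diagonal of $WW^\top$ is known a priori to equal $1$.
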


We can apply Lemma~\ref{lem:kr17} in our specific setting to obtain the following relationship between $WW^{\top}$ and the covariance of $\Nfold(0, WW^{\top})$:

\begin{corollary}\label{cor:kr17}
    If $\Sigma = W W^{\top} \in \R^{m\times m}$ for some matrix $W \in \R^{m\times n}$ where the rows of $W$ are unit vectors, then the covariance $\Sigfold \in \R^{m \times m}$ of the corresponding folded Gaussian distribution $\Nfold(0,\Sigma)$ is given by %$\Sigfold_{i,i} = 1$ and, for $i \neq j$, 
    \begin{equation}
        \Sigfold_{i,j} = 
        \begin{cases}
        1, & \text{~if~} i = j; \\
        \Psi(\iprod{w_i,w_j}), & \text{~if~} i \neq j.
        \end{cases}
    \end{equation} 
    where $\Psi(z) \triangleq \frac{2}{\pi}(z \cdot \arcsin (z) + \sqrt{1 - z^2} - 1)$.
\end{corollary}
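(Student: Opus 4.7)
The plan is to derive the corollary as a direct specialization of Lemma~\ref{lem:kr17}. Since the rows of $W$ are unit vectors, I would first observe that $\Sigma_{i,i} = \|w_i\|_2^2 = 1$ for every $i$, so the diagonal claim $\Sigfold_{i,i} = 1$ is immediate, and moreover the normalizing factor $\Sigma_{i,i}^{1/2}\Sigma_{j,j}^{1/2}$ in the definition of $\rho_{i,j}$ disappears, giving $\rho_{i,j} = \Sigma_{i,j} = \iprod{w_i,w_j}$. Substituting $\Sigma_{i,i} = \Sigma_{j,j} = 1$ and $\rho_{i,j} = z$ (writing $z = \iprod{w_i,w_j}$ for brevity) into the off-diagonal formula from Lemma~\ref{lem:kr17} reduces the task to showing
\begin{equation}
z\bigl(4\Phi_2(0,0;z) - 1\bigr) + 4(1-z^2)\phi_2(0,0;z) - \tfrac{2}{\pi} = \tfrac{2}{\pi}\bigl(z\arcsin(z) + \sqrt{1-z^2} - 1\bigr).
\end{equation}

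The next step is to plug in two standard closed-form evaluations at the origin of the standard bivariate normal with correlation $z$: namely, the density identity $\phi_2(0,0;z) = \frac{1}{2\pi\sqrt{1-z^2}}$ and Sheppard's formula $\Phi_2(0,0;z) = \frac{1}{4} + \frac{\arcsin(z)}{2\pi}$. Using the latter gives $4\Phi_2(0,0;z) - 1 = \frac{2\arcsin(z)}{\pi}$, and using the former gives $4(1-z^2)\phi_2(0,0;z) = \frac{2\sqrt{1-z^2}}{\pi}$. Substituting these into the left-hand side yields exactly $\frac{2}{\pi}\bigl(z\arcsin(z) + \sqrt{1-z^2} - 1\bigr) = \Psi(z)$, which is the desired expression.

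There is no real obstacle here: once Lemma~\ref{lem:kr17} is in hand, the corollary is a one-line substitution followed by invoking two textbook facts about the bivariate Gaussian. The only thing worth being careful about is to verify Sheppard's formula (so that the term linear in $z$ combines cleanly with the constant $-\frac{2}{\pi}$ to produce the $-1$ inside $\Psi$) and to note that the hypothesis of unit rows is precisely what kills the normalizing factors so that $\rho_{i,j}$ coincides with the Gram entry $\iprod{w_i,w_j}$ that governs the final answer.
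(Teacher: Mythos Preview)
Your proposal is correct and follows essentially the same route as the paper: use the unit-row hypothesis to get $\Sigma_{i,i}=1$ and $\rho_{i,j}=\iprod{w_i,w_j}$, then substitute the closed forms $\phi_2(0,0;z)=\tfrac{1}{2\pi\sqrt{1-z^2}}$ and $\Phi_2(0,0;z)=\tfrac14+\tfrac{\arcsin z}{2\pi}$ into Lemma~\ref{lem:kr17}. The paper's proof is essentially the same one-line substitution.
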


\begin{proof}
Because the rows of $W$ are unit vectors, we have that $\Sigma_{i,j} = \rho_{i,j} = \iprod{w_i,w_j}$ for all $i,j\in[m]$. To compute the off-diagonal entries of $\Sigfold$, note that by definition of CDF and PDF,% Eq.~\eqref{eq:phi} and Eq.~\eqref{eq:Phi}
\begin{align*}
    \phi_2(0,0;\iprod{w_i,w_j}) = \frac{1}{2\pi\sqrt{1 - \iprod{w_i,w_j}^2}} ,~~~ \Phi_2(0,0;\iprod{w_i,w_j}) = \frac{1}{4} + \frac{\arcsin\iprod{w_i,w_j}}{2\pi}.
\end{align*}
The claim follows.
\end{proof}

\begin{algorithm2e}
\DontPrintSemicolon
\caption{\textsc{GramExtract}($\brc{y^{\X,w_i}}_{i\in[m]},\eta$)}
\label{alg:private_attack}
    \KwIn{InstaHide dataset $\brc{y^{\X,w_i}}_{i\in[m]})$, accuracy parameter $\eta$}
    \KwOut{Matrix $\M$ equal to the Gram matrix $k\cdot WW^{\top}$, scaled to have integer entries (see Lemma~\ref{lem:gramextract})}
        $\eta^* \gets O(\eta^2)$.\;
        Let $z_1,...,z_d \in \R^m$ be the vectors given by 
        \begin{equation}
            (z_j)_i = y^{\X,w_i}_j.
        \end{equation}
        for all $i\in[m], j\in[d]$.\;
        Form the empirical estimates
        \begin{equation}
            \wh{\mu} = \frac{1}{d}\sum^d_{i=1} z_i \qquad \wh{\Sig} = \frac{1}{d}\sum^d_{i=1} (z_i - \wh{\mu})(z_i - \wh{\mu})^{\top}
        \end{equation}
        and define $\wh{\Sig}'$ to be the matrix obtained by applying the function $\clip_{\eta^*}$ entrywise to $\wh{\Sig}$.\;
        Let $\wt{\Sig}$ be the matrix obtained by applying $\Psi^{-1}$ entrywise to $\wh{\Sig}'$.\;\label{step:tdsig}
        Let $\Sig^*$ denote the matrix obtained by entrywise rounding every entry of $\wt{\Sig}$ to the nearest multiple of $1/k$.\;
        \Return{$k\cdot \Sig^*$}.
\end{algorithm2e}

We now show that provided the number of pixels is moderately large, we can recover the matrix \emph{exactly}, regardless of the choice of selection vectors $w_1,...,w_m \in \R^n$. The full algorithm, {\sc GramExtract}, is given in Algorithm~\ref{alg:private_attack} above.

\begin{lemma}[Extract Gram matrix]%[Informal version of Lemma~\ref{lem:gramextract_formal}]
\label{lem:gramextract}
    Suppose $d = \Omega(\log(m/\delta)/\eta^4)$. For random Gaussian image matrix $\X$ and arbitrary $w_1,...,w_m\in\S_{\ge 0}^{d-1}$, let $\wt{\Sig}$ be the matrix computed in Step~\ref{step:tdsig} of \textsc{GramExtract} $(\brc{y^{\X,w_i}}_{i\in[m]},\eta)$, and let $\Sig^*$ be the output. Then with probability $1 - \delta$ over the randomness of $\X$, we have that $\abs{\wt{\Sig}_{i,i'} - \iprod{w_i,w_{i'}}} \le \eta$ for all $i,i'\in[m]$. In particular, if $\eta = 1/2k$, the conditioned on this happening, $\Sig^* = k\cdot WW^{\top}$.
\end{lemma}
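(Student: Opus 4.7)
The plan is to view the columns of $\Y$ as i.i.d.\ samples from a folded Gaussian and invoke Corollary~\ref{cor:kr17}. Since each row of $\X$ is an independent draw from $\calN(0, \Id_n)$, the $j$-th column of $\Y$ is an i.i.d.\ sample $z_j \sim \Nfold(0, WW^\top)$, and by Corollary~\ref{cor:kr17} the covariance of this folded Gaussian satisfies $\Sigfold_{i,i} = 1$ and $\Sigfold_{i,i'} = \Psi(\iprod{w_i, w_{i'}})$ for $i \neq i'$. So it suffices to estimate $\Sigfold$ entrywise well enough that applying $\Psi^{-1}$ and then rounding recovers the true inner products.

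The first step is a standard entrywise concentration argument for $\wh\Sig$. Each $(z_j)_i (z_j)_{i'} = |\iprod{p_j, w_i}| \cdot |\iprod{p_j, w_{i'}}|$ is a product of two (correlated) half-normal random variables, hence sub-exponential with $O(1)$ parameter, so Bernstein's inequality combined with a union bound over the $m^2$ coordinates gives that if $d = \Omega(\log(m/\delta)/(\eta^*)^2)$, then $\norm{\wh\Sig - \Sigfold}_{\max} \le \eta^*$ with probability at least $1 - \delta$. The mean subtraction via $\wh\mu$ contributes only lower-order terms, and the entrywise clipping step $\clip_{\eta^*}$ simply keeps each coordinate inside the image $\Psi([0,1]) = [0, 1 - 2/\pi]$ on which $\Psi^{-1}$ is well-defined, without disturbing this max-norm bound.

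The main obstacle is that $\Psi'(z) = \frac{2}{\pi}\arcsin z$ vanishes at $z = 0$, so $\Psi(z) \sim z^2/\pi$ near the origin and $\Psi^{-1}$ is only $\tfrac{1}{2}$-H\"older continuous there, with $|\Psi^{-1}(a) - \Psi^{-1}(b)| = O(|a - b|^{1/2})$ uniformly on $[0, 1 - 2/\pi]$. This forces us to invert at precision $\sqrt{\eta^*}$ rather than $\eta^*$: choosing $\eta^* = \Theta(\eta^2)$ yields $|\wt\Sig_{i,i'} - \iprod{w_i, w_{i'}}| \le \eta$ for every $i, i'$, and the concentration requirement becomes $d = \Omega(\log(m/\delta)/\eta^4)$, matching the lemma. (On the diagonal there is nothing to do, since $\Sigfold_{i,i} = 1 = \iprod{w_i, w_i}$ by assumption.)

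For the ``in particular'' claim, the key observation is that in the regime $S = \emptyset$ every vector in the support of $\calD$ has all nonzero entries equal to $1/\sqrt{k}$, so $\iprod{w_i, w_{i'}} = |\supp(w_i) \cap \supp(w_{i'})|/k$ is an integer multiple of $1/k$. With $\eta = 1/(2k)$, the bound $|\wt\Sig_{i,i'} - \iprod{w_i, w_{i'}}| \le 1/(2k)$ means that rounding each entry of $\wt\Sig$ to the nearest multiple of $1/k$ recovers $\iprod{w_i, w_{i'}}$ exactly, and multiplying the resulting matrix by $k$ gives the output $k \cdot WW^\top$, as claimed.
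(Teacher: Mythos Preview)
Your proposal is correct and follows essentially the same skeleton as the paper: view the columns of $\Y$ as i.i.d.\ samples from $\Nfold(0,WW^\top)$, use Corollary~\ref{cor:kr17} to relate $\Sigfold$ to the inner products via $\Psi$, concentrate the empirical covariance entrywise, invert $\Psi$, and round.

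The one point of divergence is how you control the error incurred when applying $\Psi^{-1}$. The paper isolates this in Lemma~\ref{lem:lip}, which shows $\Psi^{-1}$ is $O(1/\sqrt{\eta^*})$-\emph{Lipschitz} on $[\eta^*,1]$; this is why the algorithm clips at level $\eta^*$ (the subscript in $\clip_{\eta^*}$ is the lower threshold, not merely a projection onto $\Psi([0,1])$ as you describe). You instead invoke global $\tfrac{1}{2}$-H\"older continuity of $\Psi^{-1}$ on $[0,1-2/\pi]$. Both routes rest on the same calculus fact that $\Psi(z)\asymp z^2/\pi$ near $0$, and both yield the same loss of a square root in precision, hence the same $d=\Omega(\log(m/\delta)/\eta^4)$. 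Your H\"older formulation is arguably cleaner in that it handles the case $\iprod{w_i,w_{i'}}=0$ uniformly rather than requiring the clip-from-below to place both arguments of $\Psi^{-1}$ in the Lipschitz regime; the paper's formulation makes the role of the parameter $\eta^*$ in the algorithm more transparent.
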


To prove this, we will need the following helper lemma about $\Psi^{-1}$.

\begin{lemma}%[Informal version of Lemma~\ref{lem:lip_formal}]
\label{lem:lip}
There is an absolute constant $c>0$ such that for any $0<\eta<1$ and $\wh{z},z\ge \eta$, \begin{equation}\abs{\Psi^{-1}(\wh{z}) - \Psi^{-1}(z)} \le \frac{c}{\sqrt{\eta}}\cdot \abs{\wh{z} - z}.\end{equation}
\end{lemma}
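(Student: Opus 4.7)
The plan is to apply the inverse function theorem and control the derivative of $\Psi^{-1}$ uniformly on $[\eta,1]$. First I would compute $\Psi'$ directly from the definition $\Psi(z) = \frac{2}{\pi}(z\arcsin(z) + \sqrt{1-z^2} - 1)$. The derivatives of $z\arcsin(z)$ and $\sqrt{1-z^2}$ contribute canceling $\pm z/\sqrt{1-z^2}$ terms, leaving
\begin{equation}
    \Psi'(z) = \tfrac{2}{\pi}\arcsin(z).
\end{equation}
In particular $\Psi'(z) > 0$ on $(0,1]$, so $\Psi$ is strictly increasing and $C^1$-invertible on $[0,1]$, and $(\Psi^{-1})'(w) = \pi/(2\arcsin(\Psi^{-1}(w)))$.

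The key estimate is a lower bound on $\Psi^{-1}(w)$ for $w \ge \eta$. By convexity of $\arcsin$ on $[0,1]$ (joining $(0,0)$ to $(1,\pi/2)$), we have $\arcsin(z) \le \tfrac{\pi}{2}z$, hence $\Psi'(z) \le z$. Integrating from $0$ gives $\Psi(z) \le z^2/2$, equivalently $\Psi^{-1}(w) \ge \sqrt{2w}$ on the range of $\Psi$. Combined with the elementary inequality $\arcsin(x) \ge x$ for $x \in [0,1]$, this yields for every $w \ge \eta$:
\begin{equation}
    (\Psi^{-1})'(w) \;=\; \frac{\pi}{2\arcsin(\Psi^{-1}(w))} \;\le\; \frac{\pi}{2\,\Psi^{-1}(w)} \;\le\; \frac{\pi}{2\sqrt{2w}} \;\le\; \frac{\pi}{2\sqrt{2\eta}}.
\end{equation}

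Finally I would apply the mean value theorem on the segment from $z$ to $\hat z$ (which stays in $[\eta,1]$, where $\Psi^{-1}$ is $C^1$) to conclude $\abs{\Psi^{-1}(\hat z) - \Psi^{-1}(z)} \le \frac{\pi}{2\sqrt{2\eta}}\,\abs{\hat z - z}$, proving the claim with $c = \pi/(2\sqrt{2})$. There is no real obstacle here; the only mildly delicate point is identifying the correct $z^2$ scaling of $\Psi$ near zero (so that $\Psi^{-1}(w) \gtrsim \sqrt{w}$ and the derivative blows up like $1/\sqrt{w}$ rather than $1/w$), which is what gives the $\sqrt\eta$ rather than $\eta$ in the denominator.
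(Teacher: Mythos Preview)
Your proof is correct and follows essentially the same approach as the paper: compute $\Psi'(z) = \tfrac{2}{\pi}\arcsin z$, invert to get $(\Psi^{-1})'(w) = \pi/(2\arcsin(\Psi^{-1}(w)))$, lower-bound $\Psi^{-1}(w)$ by $\Theta(\sqrt{w})$ via an upper bound $\Psi(z) \lesssim z^2$, and conclude $O(1/\sqrt{\eta})$-Lipschitzness. The only difference is cosmetic: the paper asserts $\tfrac{x^2}{\pi} \le \Psi(x) \le \tfrac{1.2x^2}{\pi}$ by numerical verification, whereas you derive $\Psi(z) \le z^2/2$ cleanly from the convexity of $\arcsin$, yielding an explicit constant $c = \pi/(2\sqrt{2})$.
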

%We defer the proof of this, as well as the complete proof of Lemma~\ref{lem:gramextract} to Appendix~\ref{sec:lip}.

\begin{proof}
    Noting that $\Psi'(z) = 2\arcsin(x)/\pi$, we get that the derivative of $\Psi^{-1}$ at $z$ is given by $\frac{1}{\Psi'(\Psi^{-1}(z))} = \frac{\pi}{2\arcsin(\Psi^{-1}(z))}$. One can verify numerically that for $0\le x\le 1$, $\frac{x^2}{\pi} \le \Psi(x) \le \frac{1.2x^2}{\pi}$, so in particular $\sqrt{\pi z/1.2} \le \Psi^{-1}(z)\le \sqrt{\pi z}$. The derivative of $\Psi^{-1}$ at $z$ is therefore upper bounded by $O(1/\arcsin(\sqrt{\pi z/1.2})) \le O( \sqrt{ 1.2 / (\pi z) } )$. In particular, for $z \ge \eta$, this is at most $O(1/\sqrt{\eta})$. In other words, over $\eta\le z\le 1$, $\Psi^{-1}$ is $O(1/\sqrt{\eta})$-Lipschitz as claimed.
\end{proof}

Up to this point we have not used the randomness of the process generating the selection vectors $w_1,...,w_m$. Note that without leveraging this, there exist choices of $W$ for which it is information-theoretically impossible to discern anything. Indeed, consider a situation where $w_1,...,w_m\in\S^{d-1}_{\ge 0}$ have pairwise disjoint supports. In this case all we know is that the columns of $\Y$ are independent standard Gaussian vectors, as $WW^{\top} = \Id$. We now proceed to the most involved component of our proof, where we exploit the randomness of the selection vectors.

\subsection{Solving a Large System of Equations}
\label{subsec:system}

In this section we show that if we can pinpoint a collection of selection vectors corresponding to all size-$k$ subsets of some set of $k + 2$ private images, then we can solve a certain system of equations to uniquely (up to sign) recover those private images. We will need the following basic notion corresponding to the fact that this system has only one unique solution, up to sign.

\begin{definition}[Generic solution of system of equations]
For any $m$ and any vector $v = (v_S)_{S\in\calC^k_{[m]}} $ $\in\R^{\binom{m}{k}}$, we say that $v$ is \emph{generic} if there are at most two solutions to the system 
\begin{equation}
    \abs*{\sum_{i\in S} a_i} = v_S \qquad \forall S\in\calC^k_{[m]}
\end{equation} 
in the variables $\brc{a_i}_{i\in[m]}$. Note that there are exactly two solutions $\brc{a'_i}$ and $\brc{a''_i}$ to this system if and only if $a'_i = -a''_i$ for all $i\in[m]$ and $a'_i \neq 0$ for some $i\in[m]$.
\end{definition}

We now show that for Gaussian images, the abovementioned system of equations almost surely has a unique solution up to sign.

\begin{lemma}[Vector of Gaussian subset sums is generic]%[Informal version of Lemma~\ref{lem:generic_formal}]
\label{lem:generic}
Let $g_1,...,g_m$ be independent draws from $\calN(0,1)$. For any $m$ satisfying $m \ge k + 2$, the vector $v = (v_S)_{S\in\calC^k_{[m]}}$ given by $v_S \triangleq \sum_{i\in S}g_i$ is generic almost surely (with respect to the randomness of $g_1,...,g_m$).
\end{lemma}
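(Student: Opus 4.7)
The plan is to reformulate the system of equations as a linear-algebraic non-coincidence condition, and then use that Gaussian measure assigns zero mass to proper linear subspaces. Let $\phi\colon \R^m \to \R^{\binom{m}{k}}$ be the linear map $\phi(a)_S \triangleq \sum_{i\in S} a_i$, and let $V \triangleq \mathrm{image}(\phi)$. A standard difference argument (subtracting the equations for two $k$-subsets overlapping in $k-1$ elements) shows that $\phi$ is injective when $m \geq k+1$, so $\phi\colon \R^m \to V$ is an isomorphism. Almost surely $v_S \neq 0$ for every $S$, so any solution $(a_i)$ determines a sign pattern $\varepsilon \in \{\pm 1\}^{\binom{m}{k}}$ via $\sum_{i\in S} a_i = \varepsilon_S v_S$; conversely, for each fixed $\varepsilon$ there is a unique $a$ with that sign pattern iff $D_\varepsilon v \in V$, where $D_\varepsilon$ is the diagonal matrix with $\varepsilon$ on its diagonal. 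The patterns $\varepsilon = \pm\mathbf{1}$ always give the two trivial solutions $\pm g$, and the task is to rule out all other $\varepsilon$ almost surely.

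The key step is to show that for $k \geq 2$ and every $\varepsilon \neq \pm\mathbf{1}$, the subspace $D_\varepsilon V$ is strictly different from $V$. Since $V$ is spanned by the indicator vectors $\phi(e_i)_S = \mathbf{1}[i \in S]$, the identity $D_\varepsilon V = V$ would force, for each $i$, an expansion $D_\varepsilon \phi(e_i) = \sum_j c^{(i)}_j \phi(e_j)$. Reading off coordinates at subsets $S \not\ni i$ gives $\sum_{j\in S} c^{(i)}_j = 0$; differencing two such subsets that overlap in $k-1$ elements (which exist precisely because $m \geq k+2$) forces all $c^{(i)}_j$ with $j\neq i$ to coincide, and then summing over any such $S$ yields $k \cdot c^{(i)}_j = 0$, so $c^{(i)}_j = 0$ for all $j\neq i$. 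Reading off coordinates at subsets $S \ni i$ then gives $\varepsilon_S = c^{(i)}_i$ whenever $S \ni i$. Finally, using $k\geq 2$, any two indices $i,i'$ lie in a common $k$-subset, which forces $c^{(i)}_i = c^{(i')}_{i'}$; hence $\varepsilon$ is a global constant, contradicting $\varepsilon \neq \pm\mathbf{1}$.

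To conclude, note that using $D_\varepsilon^2 = I$, the condition $D_\varepsilon v \in V$ is equivalent to $\phi(g) \in V \cap D_\varepsilon V$. For every non-trivial $\varepsilon$, the previous paragraph shows $V \cap D_\varepsilon V$ is a proper subspace of $V$, and since $\phi$ is an isomorphism onto $V$, the event $\phi(g) \in V \cap D_\varepsilon V$ pulls back to $g$ lying in a proper linear subspace of $\R^m$, which has Gaussian measure zero. A union bound over the $2^{\binom{m}{k}} - 2$ non-trivial sign patterns completes the proof. The main obstacle is the subspace-coincidence argument in the second paragraph; it relies on $k\geq 2$, which is consistent with the fact that the conclusion genuinely fails for $k=1$, where each $|a_i| = |g_i|$ has two independent solutions and the system has $2^m$ rather than $2$ solutions.
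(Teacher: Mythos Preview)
Your proof is correct and follows the same overall route as the paper: both reduce to showing that for each nontrivial sign pattern $\varepsilon$, the reflected subspace $D_\varepsilon V$ differs from $V = \phi(\R^m)$, so that the event $D_\varepsilon v \in V$ is a proper linear constraint on $g$, and then take a union bound over the finitely many $\varepsilon$.

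The one difference worth noting is how the key step $D_\varepsilon V \neq V$ is argued. The paper does it by contradiction: it exhibits a specific deterministic generic vector $w^* = \phi(e_i) \in V$ and observes that if $D_\varepsilon V = V$ held, then $D_\varepsilon w^*$ would furnish a third solution to the system witnessing genericity of $w^*$. You instead expand $D_\varepsilon \phi(e_i)$ in the basis $\{\phi(e_j)\}$ and read off the coordinate constraints directly, forcing $\varepsilon$ to be constant. These amount to essentially the same differencing computation over $k$-subsets avoiding a fixed index (and use $m \geq k+2$ at the same point), but your version is a bit more self-contained since it does not route through separately verifying that $w^*$ is generic. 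One minor imprecision: your claim that $\varepsilon = \pm\mathbf{1}$ ``always give the two trivial solutions $\pm g$'' tacitly assumes every $v_S > 0$; as stated in the lemma $v_S = \sum_{i\in S} g_i$ can be negative, in which case the system is infeasible and $v$ is trivially generic, so this does not affect the argument.
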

%We defer the proof of this to Appendix~\ref{sec:generic}.

\begin{proof}
    First note that the entries of $v$ are all nonzero almost surely. For $v$ to not be generic, there must exist another vector $v'$ whose entrywise absolute value satisfies $\abs{v} = \abs{v'}$ but for which $v' \neq v,-v$ and for which there exists $h_1,...,h_m$ satisfying $\sum_{i\in S}h_i = v'_S$ for all $S\in\calC^k_{[m]}$. This would imply there exist indices $S,T$ for which $v'_S = v_S$ and $v'_T = -v_T$.
    
    By the assumption that $m\ge k + 2$ (and recalling that $k > 1$ in our setup), we have that $\binom{m}{k} > m$. In particular, the set of vectors $w = (w_S)_{S\in\calC^k_{[m]}}$ for which there exist numbers $\brc{g'_i}$ such that $w_S = \sum_{i\in S}g'_i$ for all $S$ is a proper subspace $U$ of $\R^{\binom{m}{k}}$. Let $\ell_1,...,\ell_a$ be a basis for the set of vectors $\ell$ satisfying $\iprod{\ell,w} = 0$ for all $w\in U$. Note that there is at least one nonzero generic vector in $U$, for instance, the vector $w^*$ given by $w^*_S = \bone{i\in S}$ (here we again use the fact that $m \ge k + 2$).
    
    Letting $\mathbf{D}\in\R^{\binom{m}{k}\times\binom{m}{k}}$ denote the diagonal matrix whose $S$-th diagonal entry is equal to $v_S/v'_S$, note that the existence of $h_1,...,h_m$ above implies that $v$ additionally satisfies $\iprod{\mathbf{D}\ell_i,v} = 0$ for all $i\in[a]$. But there must be some $i$ for which $\mathbf{D}\ell_i$ does not lie in the span of $\ell_1,...,\ell_a$, or else we would conclude that for any $w\in U$, the vector $w'$ whose $S$-th entry is $w_S \cdot v_S/v'_S$ would also lie in $U$. Because of the existence of indices $S,T$ for which $v'_S = v_S$ and $v'_T = -v_T$, we know that $w\neq w',-w'$, so we would erroneously conclude that $w$ is not generic for any $w\in U$, contradicting the fact that the vector $w^*$ defined above is generic.
    
    We conclude that there is some $i$ for which $\mathbf{D}\ell_i$ lies outside the span of $\ell_1,\ldots,\ell_a$. But then the fact that $\iprod{\mathbf{D}\ell_i,v} = 0$ for this particular $i$ implies that the variables $g_i$ satisfy some nontrivial linear relation. This almost surely cannot be the case because $g_1,...,g_m$ are independent draws from $\calN(0,1)$.
\end{proof}

\subsection{Locating a Set of Useful Selection Vectors}
\label{subsec:locate}

In the previous section we showed that we just need to find a set of selection vectors from among the rows of $W$ that correspond to size-$k$ subsets of some set of $k+2$ private images. Here we show that such a collection of selection vectors is uniquely identified, up to trivial ambiguities, by their pairwise inner products.

\begin{lemma}[Uniquely identifying a family of subsets]%[Informal version of Lemma~\ref{lem:construct_formal}]
\label{lem:construct}
Let $\calF = \brc{T_S}_{S\in\calC^k_{[k+2]}}$ be a collection of subsets of $[n]$ for which $|T_S \cap T_{S'}| = |S\cap S'|$ for all $S,S'\in\calC^k_{[k+2]}$. Then there is some subset $U\subseteq[n]$ of size $k+2$ for which $\brc{T_S} = \calC^k_{U}$ as (unordered) sets.
\end{lemma}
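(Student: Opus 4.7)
The plan is to identify a candidate universe $U \subseteq [n]$ of size $k+2$ from a well-chosen pair of sets in $\calF$, prove that every $T_S$ is contained in $U$, and then finish by an injectivity/counting argument. Concretely, I would pick $A, B \in \calC^k_{[k+2]}$ with $|A \cap B| = k-2$ (e.g.\ $A = [k]$, $B = \{3,\ldots,k+2\}$, so that $A \cup B = [k+2]$). By hypothesis $|T_A \cap T_B| = k-2$, so $|T_A \cup T_B| = 2k - (k-2) = k+2$; define $U := T_A \cup T_B$, $C := T_A \cap T_B$, and label $T_A \setminus T_B = \{\xi_1, \xi_2\}$, $T_B \setminus T_A = \{\xi_3, \xi_4\}$.

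The heart of the proof is to show $T_S \subseteq U$ for every $S$. Setting $\alpha := |S \cap A|$, $\beta := |S \cap B|$, $\gamma := |S \cap A \cap B|$, inclusion-exclusion on indices gives $\alpha + \beta = k + \gamma$ (since $S \subseteq A \cup B$), while the analogous identity for the sets gives
\[
|T_S \cap (T_A \cup T_B)| \;=\; |T_S \cap T_A| + |T_S \cap T_B| - |T_S \cap T_A \cap T_B| \;=\; \alpha + \beta - |T_S \cap T_A \cap T_B|.
\]
Thus $T_S \subseteq U$ exactly when $|T_S \cap T_A \cap T_B| = \gamma$, and one always has the lower bound $|T_S \cap T_A \cap T_B| \ge |T_S \cap T_A| + |T_S \cap T_B| - |T_S| = \gamma$ and the upper bound $\le |T_A \cap T_B| = k-2$. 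The trivial cases $S = A$, $S = B$ are immediate. For $\alpha = \beta = k-1$ (so $\gamma = k-2$) the two bounds coincide, forcing $C \subseteq T_S$ and pinning the remaining two elements of $T_S$ to sit one in $T_A \setminus T_B$ and one in $T_B \setminus T_A$; this already establishes containment, and with four such case-(ii) sets $R_{ij} = T_{(A\cap B) \cup \{a_i, a_j\}}$ their mutual pairwise intersection constraints let me consistently relabel so that $R_{ij} = C \cup \{\xi_i, \xi_j\}$.

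For the remaining cases — notably the hard case $\alpha = \beta = k-2$, where the trivial bounds only give $|T_S \cap T_A \cap T_B| \in [\gamma, k-2]$ with $\gamma = k-4$ — I would bootstrap from the case-(ii) sets $R_{ij}$ just identified. Using $|T_S \cap R_{ij}| = |S \cap S_{ij}| = k-2$ for all four pairs $(i,j)$, together with $|T_S \cap T_A| = |T_S \cap T_B| = k-2$, I get a linear system on the indicator of $T_S$ restricted to $U$ and on the number of ``extra'' elements of $T_S$ outside $U$; the goal is to argue that the only consistent solution has no extra elements and that $T_S = \{\xi_1, \xi_2, \xi_3, \xi_4\} \cup (C \setminus C')$ for some $C' \subseteq C$ of size $2$. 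Analogous (slightly simpler) arguments would handle the mixed cases $(k-1, k-2)$ and $(k-2, k-1)$ by using a single case-(ii) or case-(vi) auxiliary set.

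Once $T_S \subseteq U$ is established for every $S$, each $T_S$ is a $k$-subset of the $(k+2)$-set $U$, i.e.\ $T_S \in \calC^k_U$. The map $S \mapsto T_S$ is injective, since $T_S = T_{S'}$ would imply $k = |T_S \cap T_{S'}| = |S \cap S'|$ and hence $S = S'$; as the domain $\calC^k_{[k+2]}$ and codomain $\calC^k_U$ have equal size $\binom{k+2}{k}$, this injection is a bijection, giving $\{T_S\} = \calC^k_U$ as unordered sets. The main obstacle I anticipate is the case $\alpha = \beta = k-2$: absent further constraints, the pairwise-intersection data with just $T_A$ and $T_B$ admits a two-dimensional family of candidate $T_S$'s (including spurious configurations with elements outside $U$), and closing this gap requires leveraging the previously identified $R_{ij}$'s together with a careful bookkeeping of the swap ambiguity in the $\xi_i$-labeling.
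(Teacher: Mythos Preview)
Your overall plan --- fix $U = T_A \cup T_B$ for $A,B$ with $|A\cap B|=k-2$, show $T_S\subseteq U$ for every $S$, then finish by injectivity and counting --- is sound and is a cleaner packaging of what the paper does. The paper instead explicitly builds up all $\binom{k+2}{k}$ members of $\calF$ layer by layer (first the four $R_{ij}$-type sets, then $4k-8$ ``mixed'' sets $S''$, then $\binom{k-2}{k-4}$ hard-case sets $S'''$) and checks the count matches; your containment-then-bijection framing is more elegant, though the paper's version has the virtue of being directly algorithmizable (it becomes {\sc FindFloralSubmatrix}).

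There is, however, a genuine gap in your treatment of the hard case $\alpha=\beta=k-2$ (which arises once $k\ge 4$). You propose to use only the six constraints from $T_A$, $T_B$, and the four $R_{ij}$'s. Writing $c=|T_S\cap C|$ and setting $t=[\xi_1\in T_S]$ (your constraints force $[\xi_1]=[\xi_2]=[\xi_3]=[\xi_4]=t$), the system collapses to $c+2t=k-2$ and $c+4t+r=k$, which has \emph{two} solutions in $\{0,1\}$-valued $t$: the intended $(c,t,r)=(k-4,1,0)$ and the spurious $(k-2,0,2)$, i.e.\ $T_S=C\cup\{z_1,z_2\}$ with $z_1,z_2\notin U$. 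All six of your constraints are satisfied by this spurious configuration, so the linear system you describe cannot rule it out.

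The fix --- and this is exactly what the paper does --- is to reverse your order of cases. Handle the mixed cases $(\alpha,\beta)=(k-1,k-2)$ and $(k-2,k-1)$ \emph{first}; there your $R_{ij}$ constraints do force $c=k-3$ and hence $T_{S''}\subseteq U$ with $|T_{S''}\cap C|=k-3$. Then for the hard case, intersect against any such $T_{S''}$: the spurious $T_S=C\cup\{z_1,z_2\}$ would give $|T_S\cap T_{S''}|=|C\cap T_{S''}|=k-3$, whereas on the index side $|S\cap S''|\ge 3+\bigl((k-4)+(k-3)-(k-2)\bigr)=k-2$, a contradiction. This dependency of the hard case on the mixed case is precisely the paper's $S''\to S'''$ step, and it is not optional.
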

%We defer the proof of this to Appendix~\ref{sec:construct}.

\begin{table}[h]
\centering
\begin{tabular}{|llllll|}\hline
\color{red} 1 & \color{red} 2 & \color{red} 3 & \color{red} 4 &   &   \\
  &   & \color{blue} 3 & \color{blue} 4 & \color{blue} 5 & \color{blue} 6 \\
\color{purple} 1 &   & \color{purple} 3 & \color{purple} 4 & \color{purple} 5 &   \\
\color{purple} 1 &   & \color{purple} 3 & \color{purple} 4 &   & \color{purple} 6 \\
  & \color{purple} 2 & \color{purple} 3 & \color{purple} 4 & \color{purple} 5 &   \\
  & \color{purple} 2 & \color{purple} 3 & \color{purple} 4 &   & \color{purple} 6 \\
\color{ToCgreen} 1 & \color{ToCgreen} 2 & \color{ToCgreen} 3 &   & \color{ToCgreen} 5 &   \\
\color{ToCgreen} 1 & \color{ToCgreen} 2 &   & \color{ToCgreen} 4 & \color{ToCgreen} 5 &   \\
\color{ToCgreen} 1 & \color{ToCgreen} 2 & \color{ToCgreen} 3 &   &   & \color{ToCgreen} 6 \\
\color{ToCgreen} 1 & \color{ToCgreen} 2 &   & \color{ToCgreen} 4 &   & \color{ToCgreen} 6 \\
\color{ToCgreen} 1 &   & \color{ToCgreen} 3 &   & \color{ToCgreen} 5 & \color{ToCgreen} 6 \\
\color{ToCgreen} 1 &   &   & \color{ToCgreen} 4 & \color{ToCgreen} 5 & \color{ToCgreen} 6 \\
  & \color{ToCgreen} 2 & \color{ToCgreen} 3 &   & \color{ToCgreen} 5 & \color{ToCgreen} 6 \\
  & \color{ToCgreen} 2 &   & \color{ToCgreen} 4 & \color{ToCgreen} 5 & \color{ToCgreen} 6 \\
\color{myGold} 1 & \color{myGold} 2 &   &   & \color{myGold} 5 & \color{myGold} 6 \\
\hline\end{tabular}
\caption{Illustration of the sequence of subsets constructed in the proof of Lemma~\ref{lem:construct} for $k = 4$. Red and blue denote $S_0$ and $S_1$, purple denotes $S_{a,b}$ for $a\in\brc{1,2}$, $b\in\brc{k+1,k+2}$, green denotes the $4k - 8$ sets $S''$, and gold denotes the $\binom{k - 2}{k - 4} = 1$ set $S'''$.}
\label{table:sequence}
\end{table}

\begin{proof}
    For the reader's convenience, we illustrate the sequence of subsets constructed in the following proof in Table~\ref{table:sequence}.
    
    Suppose without loss of generality that $\calF$ contains the sets $S_{1,2} \triangleq \brc{1,...,k}$ and $S_{k+1,k+2} \triangleq \brc{3,...,k+2}$ (the indexing will become clear momentarily). We will show that $\brc{T_S} = \calC^k_U$ for $U = [k+2]$.
    
    Let $S^*\triangleq S_0\cap S_1$. For any $S'\in\calC^k_{[k+2]}$ satisfying $|S_0\cap S'| = |S_1\cap S'| = k - 1$, observe that $S'$ must contain $S^*$ and one element from each of $S_0\backslash S_1 = \brc{1,2}$ and $S_1\backslash S_0 = \brc{k+1,k+2}$, so there are four such choices of $S'$, call them $\brc{S_{a,b}}_{a\in\brc{1,2},b\in\brc{k+1,k+2}}$, and $\calF$ must contain all of them.
    
    Now consider any subset $S''\subset[k+2]$ for which, for some $b\neq b'\in\brc{k+1,k+2}$, we have that $|S''\cap S_{1,2}| = |S''\cap S_{1,b}| = |S''\cap S_{2,b}| = k - 1$, and $|S''\cap S_{k+1,k+2}| = |S''\cap S'_{1,b'}| = |S''\cap S'_{2,b'}| = k - 2$. Observe that it must be that $|S''\cap S^*| = k - 3$ and that $S''$ contains $\brc{1,2}$, so there are $2\cdot \binom{k - 2}{k - 3} = 2k - 4$ such choices of $S''$, and $\calF$ must contain all of them. We can similarly consider $S''$ for which, for some $a\neq a'\in\brc{1,2}$, we have that $|S''\cap S_{k+1,k+2}| = |S''\cap S_{a,k+1}| = |S''\cap S_{a,k+2}| = k - 1$, and $|S''\cap S_{1,2}| = |S''\cap S'_{a',k+1}| = |S''\cap S'_{a',k+2}| = 2k - 4$, for which there are again $2k - 4$ choices of $S''$, and $\calF$ must contain all of them.
    
    Alternatively, if $\calF$ contained $k - 2$ subsets $S''$ satisfying $|S''\cap S_{1,2}| = |S''\cap S_{b,k+1}| = |S''\cap S_{b,k+2}| = k - 1$ for some $b\in\brc{1,2}$, then it would have to be that any such $S''$ contains the $k-1$ elements of $\brc{b,3,\ldots,k}$, and therefore the intersection between any pair of such $S''$ must be equal to $k - 1$, violating the constraint that $|T_S \cap T_{S'}| = |S\cap S'|$ for all $S,S'\in\calC^k_{[k+2]}$. The same reasoning applies to rule out the case where $\calF$ contains $k - 2$ subsets $S''$ satisfying $|S''\cap S_{k+1,k+2}| = |S''\cap S_{1,b}| = |S''\cap S_{2,b}| = k - 1$ for some $b\in\brc{k+1,k+2}$.
    
    Finally, consider the set of all subsets $S'''$ distinct from the ones exhibited thus far, and for which $|S'''\cap S_0| = |S'''\cap S_1| = |S'''\cap S_{a,b}| = k - 2$ for all $a\in\brc{1,2}, b\in \brc{k+1,k+2}$ and $|S'''\cap S''$ for at least one of the $4k - 8$ subsets constructed two paragraphs above. Observe that any $S'''$ distinct from the ones exhibited thus far which satisfies the first constraint must either contain $S^*$ and two elements outside of $\brc{1,...,k+4}$, or must satisfy $|S'''\cap S^*| = k - 4$ and contain $\brc{1,2,k+1,k+2}$. In the former case, such an $S'''$ would violate the second constraint. As for the latter case, there are $\binom{k - 2}{k - 4}$ such choices of $S'''$, and $\calF$ must therefore contain all of them. We have now produced $4k - 2 + \binom{k - 2}{k -4} = \binom{k+2}{k}$ unique subsets, all belonging to $\calC^k_{[k+2]}$, and $\calF$ is of size $\binom{k+2}{k}$, concluding the proof.
\end{proof}

\subsection{Existence of a Floral Submatrix}
\label{subsec:floralexist}

Recall the notion of a \emph{floral} submatrix from Definition~\ref{defn:floral}. In this section we show that with high probability $\M$ contains a floral principal submatrix. In the language of sets, this means that with high probability over a sufficiently long sequence of randomly chosen size-$k$ subsets of $[n]$, there is a collection of $\binom{k+2}{k}$ subsets in the sequence which together comprise all size-$k$ subsets of some $U\subseteq[n]$ of size $k+2$. Quantitatively, we have the following:

\begin{lemma}[Existence of a floral submatrix]%[Informal version of Lemma~\ref{lem:setsystemexist_formal}]
\label{lem:setsystemexist}
Let $m \ge \Omega( k^{O(k^3)} n^{k - \frac{2}{k+1}} )$. If sets $T_1,...,T_m$ are independent draws from the uniform distribution over $\calC^k_n$, then with probability at least $9/10$, there is some $U\in\calC^{k+2}_{[n]}$ for which every element of $\calC^k_U$ is present among $T_1,...,T_m$.
\end{lemma}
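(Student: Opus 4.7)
The plan is to apply the second moment method (Paley-Zygmund). For each $U \in \calC^{k+2}_{[n]}$, let $X_U$ be the indicator that every subset in $\calC^k_U$ appears among $T_1,\ldots,T_m$, and set $X \triangleq \sum_U X_U$. It suffices to show both $\E[X] = \Omega(1)$ and $\E[X^2] \le O(\E[X]^2)$, since then Paley-Zygmund yields $\Pr[X>0] \ge 9/10$. Set $N \triangleq \binom{n}{k}$ and $t \triangleq \binom{k+2}{k} = \frac{(k+2)(k+1)}{2}$.

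\paragraph{Step 1 (first moment).} By symmetry $\E[X_U] = p$ for a common value $p$, given by inclusion-exclusion:
\begin{equation}
    p = \sum_{j=0}^{t}(-1)^j \binom{t}{j}(1 - j/N)^m.
\end{equation}
In the regime $m/N \ll 1$ (which we are in since $m = n^{k - 2/(k+1)} \ll n^k/k!$), I would Taylor-expand each term and show $p = \Theta_k((m/N)^t)$, where the implicit constant depends only on $k$ (the dependence comes from factors like $t!$ and the lower-order correction terms, which collectively absorb into a $k^{O(k^2)}$ factor). Hence $\E[X] = \binom{n}{k+2}\cdot p \asymp k^{-O(k^2)}\cdot m^t \cdot n^{k+2 - kt}$. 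Setting this $\geq 1$ and solving for $m$ gives the threshold $m \geq k^{O(k)}\cdot n^{k - (k+2)/t} = k^{O(k)} \cdot n^{k - 2/(k+1)}$, matching the hypothesis.

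\paragraph{Step 2 (second moment).} Partition pairs $(U,U')$ by $j \triangleq |U \cap U'|$, and let $r_j \triangleq |\calC^k_U \cap \calC^k_{U'}| = \binom{j}{k}$ (interpreted as $0$ for $j < k$). The joint probability $\E[X_U X_{U'}]$ is the probability that all $2t - r_j$ distinct sets in $\calC^k_U \cup \calC^k_{U'}$ are sampled, and by the same inclusion-exclusion analysis as Step 1, this is $\Theta_k((m/N)^{2t - r_j})$. Therefore
\begin{equation}
    \frac{\E[X_U X_{U'}]}{p^2} \;\lesssim\; k^{O(k^2)} \cdot \left(\frac{N}{m}\right)^{r_j}.
\end{equation}
The number of $U'$ with $|U \cap U'| = j$ is $\binom{k+2}{j}\binom{n-k-2}{k+2-j}$. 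I would bound $\E[X^2]/\E[X]^2$ by summing over $j \in \{0,\ldots,k+2\}$:
\begin{itemize}
    \item For $j \le k-1$: $r_j = 0$, and the total contribution is at most $(1 + o(1))\cdot \E[X]^2$ (this is effectively Vandermonde's identity).
    \item For $j = k$: $r_j = 1$, and the ratio contributes $\poly(k) \cdot N/(m \cdot n^2)$, which is $o(1)$ as soon as $m \geq \poly(k) \cdot n^{k-2}$.
    \item For $j = k+1$: $r_j = k+1$, and the contribution is $\poly(k) \cdot n^{k^2 - 1}/m^{k+1}$, which is $O(1)$ exactly when $m \ge \poly(k) \cdot n^{k-1}$.
    \item For $j = k+2$: a single diagonal term contributing $1/(\binom{n}{k+2}p) = 1/\E[X]$, which is $O(1)$ by Step 1.
\end{itemize}

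\paragraph{Step 3 (put together).} Since $k - 2/(k+1) > k - 1$ for $k \ge 2$, the first-moment condition $m = \Omega(k^{O(k^3)} n^{k - 2/(k+1)})$ dominates all the constraints above, with the $k^{O(k^3)}$ slack absorbing the $k^{O(k^2)}$ factors from each of the $t = O(k^2)$ cases. Applying Paley-Zygmund concludes the proof.

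\paragraph{Main obstacle.} The technical core is Step 1: proving the asymptotic $p = \Theta_k((m/N)^t)$ with a tight enough handle on the constants. Getting a lower bound on $p$ of the right order is subtle because the $t$ events ``subset $S_i$ was sampled'' are negatively correlated (the sample has fixed size $m$), so one cannot simply take a product of marginals. I would establish the lower bound either via the Janson/Suen-type inequality for the intersection of negatively correlated events, or more directly by truncating the inclusion-exclusion sum at an even index and arguing the tail terms are small when $tm/N = o(1)$. Once this estimate is in hand, all subsequent steps reduce to bookkeeping of combinatorial factors.
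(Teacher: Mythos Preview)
Your proposal is correct and uses the same overall strategy (second moment method via Paley--Zygmund) as the paper, but applies it to a different random variable. You set $X = \sum_{U} X_U$ where $X_U$ indicates that all of $\calC^k_U$ is covered by the sample; the paper instead defines
\[
Z \;=\; \sum_{i_1<\cdots<i_t\in[m]} \Bone{\{T_{i_1},\ldots,T_{i_t}\} = \calC^k_U \ \text{for some} \ U\in\calC^{k+2}_{[n]}},
\]
i.e.\ it counts $t$-tuples of \emph{sample indices} that realize a complete floral system. Since $\{X>0\}=\{Z>0\}$, either variable suffices, and the second-moment decompositions are essentially dual: you group pairs by $j=|U\cap U'|$, the paper groups by the number $s$ of shared sample indices, and both reduce to the same cases ($|U\cap U'|\in\{k,k+1,k+2\}$ when there is nontrivial overlap) and the same constraints on $m$.

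The payoff of the paper's choice is that it eliminates precisely the obstacle you flag in your final paragraph. Each summand of $Z$ is the event that $t$ \emph{specified} samples take $t$ \emph{specified} values, so both $\E[Z]$ and $\E[Z^2]$ are exact closed-form products of powers of $1/N$; no inclusion--exclusion and no asymptotic estimate $p=\Theta_k((m/N)^t)$ is ever needed. Your route still goes through---for instance, the lower bound $p \ge (m/(2tN))^t$ follows by partitioning $[m]$ into $t$ equal blocks and using independence across blocks to ask each block to supply one designated element of $\calC^k_U$---but the paper's parametrization is cleaner and avoids this extra step entirely.
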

%We defer the proof of this Appendix~\ref{sec:setsystemexist}.

\begin{proof}
    Let $L = \binom{k+2}{k} = \frac{1}{2}(k+2)(k+1)$. Define \begin{equation}
        Z \triangleq \sum_{i_1<\cdots <i_L \in [m]}\Bone{\brc{T_{i_1},...,T_{i_L}} = \calC^k_U \ \text{for some} \ U\in\calC^{k+2}_{[n]}}.
    \end{equation}
    By linearity of expectation, $\E[Z]$ is equal to $\binom{m}{L}$ times the probability that $\brc{T_1,...,T_L} = \calC^k_U$ for some $U\in\calC^{k+2}_{[n]}$. The latter probability is equal to $\binom{n}{k+2}\cdot L! \cdot \binom{n}{k}^{-L}$, so we conclude that \begin{align}
        \E[Z] &= \binom{m}{L}\cdot \binom{n}{k+2}\cdot L! \cdot \binom{n}{k}^{-L} \\
        &\ge m^L\cdot \frac{n^{k+2}}{n^{kL}} \cdot \frac{L!\cdot  (k!)^L}{L^L \cdot (k+2)^{k+2}} \\
        &\ge \Omega\left(m^L n^{k+2 - kL}\right) \ge \Omega(1),
    \end{align} where in the penultimate step we used that $\frac{L!\cdot  (k!)^L}{L^L \cdot (k+2)^{k+2}}$ is nonnegative and increasing over $k \ge 2$, and in the last step we used that $m \ge \Omega\left(n^{k - \frac{2}{k+1}}\right)$.
    
    We now upper bound $\E[Z^2]$. Consider a pair of distinct summands $(i_1,...,i_L)$ and $(i'_1,...,i'_L)$. Without loss of generality, we may assume these are $(1,...,L)$ and $(s+1,...,L)$ for some $0 \le s\le L$. In order for $\brc{T_1,...,T_L} = \calC^k_U$ and $\brc{T_{L-s+1},...,T_{2L - s+1}} = \calC^k_{U'}$ for some $U,U'\in\calC^{k+2}_{[n]}$, it must be that $\brc{T_{L-s+1},...,T_L}= \calC^k_{U\cap U'}$. Note that if $|U\cap U'| = k + 2$, then $U = U'$ and therefore $s$ must be 0. So if $s > 0$, it must be that $|U\cap U'| \in\brc{k,k+1}$.
    
    In either case, the probability that $\brc{T_1,...,T_{L - s+1}} = \calC^k_U\backslash \calC^k_{U\cap U'}$, $\brc{T_{L+1},...,T_{2L - s+1}} = \calC^k_U\backslash \calC^k_{U\cap U'}$, and $\brc{T_{L-s+1},...,T_L} = \calC^k_{U\cap U'}$ is \begin{equation}
         (L - s)!^2\cdot s!\cdot \binom{n}{k}^{-2L+s} \le L!^2 \cdot (k/n)^{2kL - ks}
    \end{equation}
    If $|U\cap U'| = k$, then $s$ must be $1$, and there are \begin{equation}
        \binom{n}{k}\cdot\binom{n-k-2}{2}\cdot \binom{n-k-4}{2} \le n^{k+4}
    \end{equation}
    choices for $(U,U')$. If $|U\cap U'| = k+1$ then $s$ must be $k+1$ and there are and there are \begin{equation}
        \binom{n}{k+1}\cdot(n - k - 1)\cdot (n - k - 2) \le n^{k+3}
    \end{equation} choices for $(U,U')$.
    
    Finally, note that there are $\binom{m}{L}$ pairs of summands $(i_1,...,i_l),(i'_1,...,i'_L)$ for which $s = 0$ (namely the ones for which $i_j = i'_j$ for all $j$), $m\cdot \binom{m-1}{L-1}\cdot \binom{m - L}{L-1} \le \Theta(m)^{2L - 1}\cdot L!^2$ pairs for which $s = 1$, and $\binom{m}{k+1}\cdot \binom{m - k - 1}{L - k - 1}\cdot \binom{m - L}{L - k - 1} \le \Theta(m)^{2L - k - 1}\cdot L!^2$ for which $s = k +1$. Putting everything together, we conclude that \begin{align}
        \E[Z^2] &= \E[Z] + \Theta(m)^{2L-1}\cdot L!^4\cdot n^{k+4}\cdot (k/n)^{2kL - k} + \Theta(m)^{2L - k - 1}\cdot L!^4\cdot n^{k+3}\cdot (k/n)^{2kL - k(k+1)} \\
        % &= \E[Z]^2 + \Theta(m)^{2L-1}\cdot L!^4\cdot k^{2kL - k}\cdot n^{2k+4 -2kL} + \Theta(m)^{2L - k - 1}\cdot L!^4\cdot k^{2kL - k(k+1)}\cdot n^{2k + 3 + k^2 - 2kL} \\
        &\le \E[Z]^2\cdot\left(1 + O(1/m)\cdot {L!^4}\cdot{k^{2kL - k}} + O(1/m^{k+1})\cdot {L!^4}\cdot {k^{2kL - k(k+1)}}\cdot n^{k^2 - 1}\right) \\
        &\le (1.01\E[Z])^2,
    \end{align}
    where in the last step we used that $L \le k^2$ and that $n^{k^2 - 1}/m^{k+1} \le 1$ because $m \ge k^{\Omega(k^3)} n^{k - 1}$.
    
    By Paley-Zygmund, we conclude that \begin{equation}
        \Pr{Z > 0.01\E[Z]} \ge 0.99^2 \cdot \frac{\E[Z]^2}{\E[Z^2]} \ge 9/10,
    \end{equation} as desired, upon picking constant factors appropriately.
\end{proof}

Lemma~\ref{lem:setsystemexist} implies that with probability at least 9/10 over the randomness of the mixup vectors $w_1,...,w_m$, if $m \ge \Omega(k^{O(k^3)} n^{k - \frac{2}{k+1}})$, then there is a subset of $[m]$ for which the corresponding principal submatrix of $WW^{\top}$ is floral. By Lemma~\ref{lem:gramextract}, with high probability $\M = k\cdot WW^{\top}$, so this is also the case for the output of {\sc GramExtract}. 

\subsection{Finding a Floral Submatrix}
\label{subsec:findfloral}

As mentioned in Section~\ref{sec:overview}, to find a floral principal submatrix of $\M$, one option is to enumerate over all subsets of size $\binom{k+2}{k}$ of $[m]$, which would take $n^{O(k^3)}$ time. We now give a much more efficient procedure for identifying a floral principal submatrix of $\M$, whose runtime is dominated by the time it takes to write down the entries of $\M$. At a high level, the reason we can obtain such dramatic savings is that the underlying graph defined by the large entries of $WW^{\top}$ is quite sparse, i.e. vertices of the graph typically have degree independent of $k$.

We will need the following basic notion:

\begin{definition}
Given $i\in[m]$ and integer $0\le t\le k$, let $\calN^t_i\triangleq \brc{j: \iprod{w_i,w_j} = t/k}$. For any $j\in\calN^t_i$, we refer to $i$ and $j$ as $t$-neighbors (this relation is obviously commutative).
\end{definition}

We will also need the following helper lemmas establishing certain deterministic regularity conditions that $WW^{\top}$ will satisfy with high probability. 

% any vertex has at most $O(n^{1/3})$ (k-1)-neighbors
\begin{lemma}[Hypergraph sparsity]%[Informal version of Lemma~\ref{lem:fewneighbors_formal}]
\label{lem:fewneighbors}
    For any $\delta > 0$, if $m \ge n^{k - 1}\log(1/\delta)$, then with probability at least $1 - 2m\delta$ over the randomness of $w_1,...,w_m$, we have that for every $j\in[m]$, there are at most $O(m\cdot k^{k+1}\cdot n^{1 - k})$ $(k-1)$-neighbors of $j$, and at most $O(m\cdot k^{k+2}\cdot n^{2-k})$ $(k-2)$-neighbors of $j$.
\end{lemma}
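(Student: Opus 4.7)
}
The plan is to bound the number of $(k-1)$- and $(k-2)$-neighbors of any fixed $j$ by a Chernoff argument on independent Bernoullis and then take a union bound over $j$. Working in the case $S = \emptyset$, every $w_i$ is drawn from $\calD$ by choosing a uniformly random size-$k$ subset $T_i \subseteq [n]$ and placing $1/\sqrt{k}$ on its coordinates, so $\iprod{w_i, w_{i'}} = |T_i \cap T_{i'}|/k$. Hence $i' \in \calN^{k-1}_i$ iff $|T_i \cap T_{i'}| = k-1$, and similarly for $k-2$.

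First I would fix $j \in [m]$, condition on $T_j$, and compute the probability that an independent draw $T_{i} \sim \unif(\calC^k_{[n]})$ satisfies $|T_i \cap T_j| = k-1$:
\begin{equation}
p_{k-1} \;=\; \binom{k}{k-1}\binom{n-k}{1}\Big/\binom{n}{k} \;\le\; O\!\left(\frac{k^{k+1}}{n^{k-1}}\right),
\end{equation}
using $\binom{n}{k} \ge (n/k)^k/k! \cdot$ constants, and analogously
\begin{equation}
p_{k-2} \;=\; \binom{k}{k-2}\binom{n-k}{2}\Big/\binom{n}{k} \;\le\; O\!\left(\frac{k^{k+2}}{n^{k-2}}\right).
\end{equation}
Since the remaining $m-1$ selection vectors are independent of $T_j$, the count $|\calN^{k-1}_j|$ is a sum of $m-1$ i.i.d.\ Bernoullis with mean $p_{k-1}$, with expectation $\mu_{k-1} \triangleq (m-1)p_{k-1}$, and similarly for $|\calN^{k-2}_j|$.

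Next I would apply a standard multiplicative Chernoff upper tail: for any $\delta > 0$, with probability at least $1 - \delta$,
\begin{equation}
|\calN^{k-1}_j| \;\le\; O\!\left(\mu_{k-1} + \log(1/\delta)\right),
\end{equation}
and likewise for $|\calN^{k-2}_j|$. The assumption $m \ge n^{k-1}\log(1/\delta)$ gives $\mu_{k-1} \ge \Omega(k^{k+1}\log(1/\delta))$ and $\mu_{k-2} \ge \Omega(k^{k+2} n \log(1/\delta))$, so the additive $\log(1/\delta)$ term is absorbed into the first term up to constants. This yields
\begin{equation}
|\calN^{k-1}_j| \le O(m\, k^{k+1} n^{1-k}), \qquad |\calN^{k-2}_j| \le O(m\, k^{k+2} n^{2-k})
\end{equation}
each with failure probability at most $\delta$.

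Finally I would union bound: there are $m$ choices of $j$ and two events per $j$, giving total failure probability at most $2m\delta$. The main bookkeeping obstacle is just getting the combinatorial probability estimates tight enough to produce the stated $k^{k+1}$ and $k^{k+2}$ factors (rather than some slightly worse power of $k$), and verifying that $m \ge n^{k-1}\log(1/\delta)$ is strong enough to absorb the $\log(1/\delta)$ Chernoff deviation into the mean — everything else is a routine union bound.
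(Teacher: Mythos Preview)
Your proposal is correct and follows essentially the same approach as the paper: fix $j$, compute the Bernoulli probabilities $p_{k-1} = k(n-k)/\binom{n}{k}$ and $p_{k-2} = \binom{k}{2}\binom{n-k}{2}/\binom{n}{k}$, apply a Chernoff bound to the sum of $m-1$ independent indicators, and union bound over $j$. The only cosmetic difference is that the paper states Chernoff in its KL form (bounding $\Pr[\sum X_{j'} > 2\mu]$ by $\exp(-\Omega(m n^{1-k}))$) rather than the additive $O(\mu + \log(1/\delta))$ form you use, but the hypothesis $m \ge n^{k-1}\log(1/\delta)$ makes these equivalent up to constants.
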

%We defer the proof of this to Appendix~\ref{sec:fewneighbors}.

\begin{proof}
    We will union bound over $j\in[m]$, so without loss of generality fix $j = 1$ in the argument below. Let $X_{j'}$ (resp. $Y_{j'}$) denote the indicator for the event that $1$ and $j'$ are $(k-1)$-neighbors (resp. $(k-2)$-neighbors). As $w_{j'}$ is sampled independently of $w_1$, conditioned on $w_1$ we know that $X_{j'}$ is a Bernoulli random variable with expectation $\E[X_{j'}] = \frac{k(n - k)}{\binom{n}{k}} \le n^{1 - k} \cdot k^{k+1}$, where the factor of $k(n-k)$ comes from the number of ways to pick $\supp(w_1)\backslash\supp(w_{j'})$ and $\supp(w_{j'})\backslash\supp(w_1)$. Similarly, $Y_{j'}$ is a Bernoulli random variable with expectation $\E[Y_{j'}] = \frac{\binom{k}{2}\binom{n-k}{2}}{\binom{n}{k}} \le n^{2 - k}\cdot k^{k+2}$. By Chernoff, we conclude that $\sum_{j'>2}X_{j'} > 2n^{1-k}\cdot k^{k+1}$ with probability at most 
    \begin{align*}
        \exp\left(-m\cdot D(\text{Ber}(2n^{1 - k}\cdot k^{k+1})\| \text{Ber}(n^{1-k}\cdot k^{k+1}))\right) 
        \le & ~ \exp(-\Omega(m n^{1-k}\cdot k^{k+1}))  \\
        \le & ~ \exp(-\Omega(m n^{1-k})),
    \end{align*} 
    from which the first claim follows. Similarly by Chernoff, $\sum_{j'>2}Y_{j'} > 2n^{2-k}\cdot k^{k+2}$ with probability at most 
    \begin{align*}
        \exp\left(-m\cdot D(\text{Ber}(2n^{2-k}\cdot k^{k+2})\| \text{Ber}(n^{2-k}\cdot k^{k+2}))\right) 
        \le & ~ \exp(-\Omega(m n^{2-k}\cdot k^{k+2})) \\
        \le & ~ \exp(-\Omega(m n^{2-k})),
    \end{align*} 
    from which the second claim follows.
    
\end{proof}

\begin{definition}
Given symmetric matrix $\M\in\Z^{m\times m}$ and distinct indices $i,j_1,...,j_4\in[m]$ for which $j_1 < j_4$, we say that $(i;j_1,\ldots,j_4)$ is a \emph{house} (see Figure~\ref{fig:house}) if for all $1\le a<b\le 4$, $\M_{j_a,j_b} = k - 1$ if $(a,b) \in \brc{(1,2),(2,1),(2,3),(3,4),(1,4)}$ and $\M_{j_a,j_b} = k - 2$ otherwise, and furthermore $\M_{i,j_a} = k - 1$ for all $a\in[4]$.
\end{definition}

% number of vertex+4-cycle patterns bounded
\begin{lemma}[Upper bounding the number of houses]%[Informal version of Lemma~\ref{lem:house_formal}]
\label{lem:house}
    If $m \ge \Omega(n^{2k/3})$, then with probability at least $9/10$ over the randomness of $w_1,\ldots,w_m$, there are at most $O(k^{5k}\cdot m^5 \cdot n^{-4k +2})$ houses in $\M$.
\end{lemma}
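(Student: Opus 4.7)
The plan is to bound the expected number of houses in $\M$ by linearity of expectation and then apply Markov's inequality. Since the $w_j$ correspond to i.i.d.\ uniform size-$k$ subsets $T_j \subseteq [n]$, and since the house conditions are a collection of pairwise intersection equalities among $T_i, T_{j_1}, T_{j_2}, T_{j_3}, T_{j_4}$, the probability that any fixed ordered 5-tuple of distinct indices (with $j_1 < j_4$) yields a house equals the number of valid subset 5-tuples divided by $\binom{n}{k}^5$. Since there are at most $m^5/2$ such ordered index-tuples, it suffices to show this probability is at most $O(k^{5k}\,n^{2-4k})$.

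To count valid subset 5-tuples, I first enumerate admissible outer quadruples $(T_{j_1},T_{j_2},T_{j_3},T_{j_4})$ satisfying the 4-cycle plus 2-diagonal intersection pattern. Parametrizing $T_{j_1} = C \cup \{a_1\}$ and $T_{j_2} = C \cup \{a_2\}$ with $|C|=k-1$ (this is without loss of generality given $|T_{j_1}\cap T_{j_2}| = k-1$), a short case analysis on the element swap taking $T_{j_2}$ to $T_{j_3}$ shows that the constraints $|T_{j_3}\cap T_{j_2}| = k-1$ and $|T_{j_3}\cap T_{j_1}| = k-2$ force $T_{j_3} = (C\setminus\{c\}) \cup \{a_2, a_3\}$ for some $c \in C$ and $a_3 \notin C \cup \{a_1,a_2\}$. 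An analogous case analysis for the swap taking $T_{j_3}$ to $T_{j_4}$ shows the three remaining outer constraints pin down $T_{j_4} = (C\setminus\{c\}) \cup \{a_1, a_3\}$ uniquely. This gives at most $\binom{n}{k-1}(n-k+1)(n-k)(n-k-1)(k-1) \le O(k^2 \binom{n}{k} n^2)$ admissible outer quadruples. For the apex, a similar case analysis on $T_i$ being at Hamming distance 2 from each of $T_{j_1},\ldots,T_{j_4}$ shows there are exactly two consistent options: $(C\setminus\{c\})\cup\{a_1,a_2\}$ and $C\cup\{a_3\}$. Putting everything together and dividing by $\binom{n}{k}^5 = \Theta(n^{5k}/(k!)^5)$ yields a probability bound of $O(k^3 (k!)^4 \, n^{2-4k}) \le O(k^{5k}\, n^{2-4k})$ after absorbing the factorial into the $k^k$ terms.

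Multiplying by $m^5/2$ gives $\E[\text{houses}] \le O(k^{5k} m^5 n^{2-4k})$, and Markov's inequality then converts this into a probability-$9/10$ upper bound of the same order on the actual count. The main technical obstacle is the structural rigidity claim: that the outer subsets and then the apex are essentially determined up to $\poly(k)$ choices by the intersection pattern. Verifying this requires carefully tracking how Hamming-distance-$2$ swaps propagate through the intersection constraints, eliminating most candidate configurations as incompatible with a previously specified intersection size. The argument is elementary but delicate, and ensuring that only $O(k)$ free parameters survive at each stage (rather than a combinatorial explosion as one might naively fear from chaining four distance-2 neighborhoods) is what makes the resulting expectation estimate match the target bound.
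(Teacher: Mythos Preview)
Your approach is correct and in fact cleaner than the paper's. Both you and the paper parametrize the house structure the same way---five size-$k$ subsets of the form $C'\cup P$ where $|C'|=k-2$ and $P$ ranges over five of the six $2$-subsets of a fixed $4$-element set---and both arrive at the same expectation bound $\E[Z]=O(k^{5k}m^5n^{-4k+2})$. Your case analysis verifying that the outer quadruple and apex are rigid up to $\poly(k)$ choices is actually more explicit than the paper, which simply asserts the structural characterization.

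The difference is in how the high-probability bound is extracted. You apply Markov's inequality directly to $\E[Z]$, which immediately gives $Z\le 10\,\E[Z]$ with probability at least $9/10$. The paper instead computes $\E[Z^2]$ and invokes Chebyshev. Your route is shorter and, notably, does not use the hypothesis $m\ge\Omega(n^{2k/3})$ at all---that assumption enters the paper's argument only to control the cross terms in the second-moment calculation. So your proof actually establishes the lemma without that hypothesis, which is a mild strengthening.
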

%We defer the proof of this to Appendix~\ref{sec:house}.

\begin{figure}
    \centering
    \includegraphics[width=0.25\textwidth]{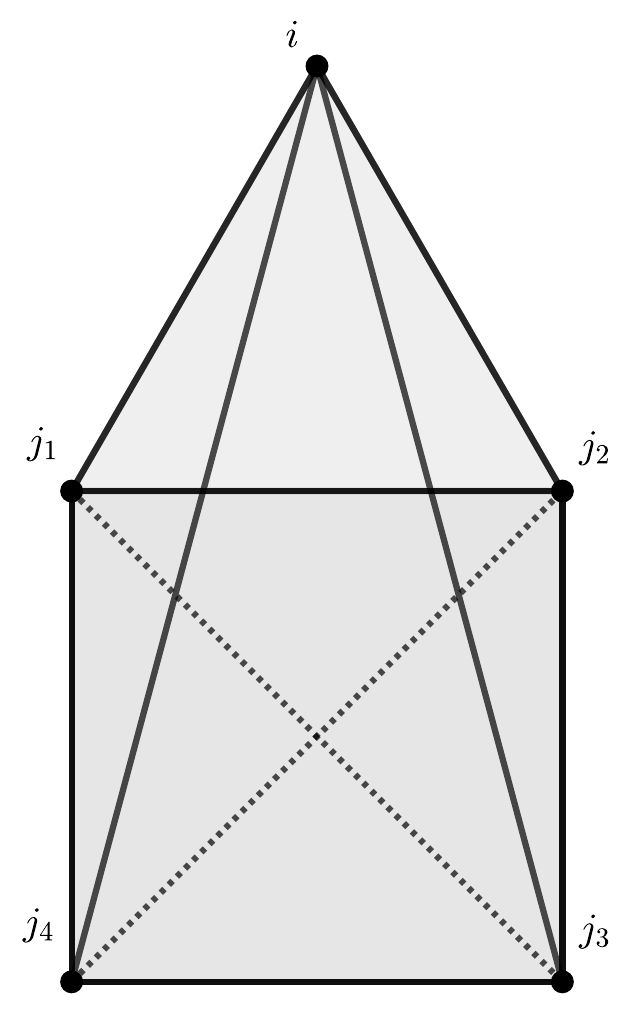}
    \caption{Illustration of a \emph{house} $(i;j_1,j_2,j_3,j_4)$ where the solid lines indicate an entry of $k - 1$ in $\M$, while the dotted lines indicate an entry of $k - 2$.}
    \label{fig:house}
\end{figure}

\begin{proof}
    Define \begin{equation}
        Z \triangleq \sum_{i,j_1,\ldots j_4 \ \text{distinct}, j_1 < j_4} \Bone{(i;j_1,\ldots,j_4) \ \text{is a house}}.
    \end{equation}
    By linearity of expectation, $\E[Z]$ is equal to $m\cdot\binom{m-1}{4} \le m^5$ times the probability that $(1;2,3,4,5)$ is a house. Note that the only way for $(1;2,3,4,5)$ to be a house is if there are disjoint subsets $S_1,S-2,T\subseteq[n]$ of size $2$, $2$, and $k-2$ respectively such that $w_1$ is supported on $S\cup T$ and each of $w_2,\ldots,w_5$ is supported on $\brc{s_1,s_2}\cup T$ where $s_1\in S_1$, $s_2\in S_2$. There are $O\left(\binom{n}{k-2}\cdot \binom{n}{2}^2\right) \le n^{k + 2}$ such choices of $(S_1,S_2,T)$, and for each is an $O(\binom{n}{k}^{-5})$ chance that the supports of $w_1,\ldots,w_5$ correspond to a given $(S_1,S_2,T)$, so we conclude that \begin{equation}
        \E[Z] = O\left(m^5 \cdot n^{k+2} \cdot \binom{n}{k}^{-5}\right) \le O(k^{5k}\cdot m^5 \cdot n^{-4k +2}).
    \end{equation}
    We now upper bound $\E[Z^2]$. Consider a pair of distinct summands $(i;j_1,\ldots,j_4)$ and $(i';j'_1,\ldots,j'_4)$. Recall that they correspond to some $(S_1,S_2,T)$ and $(S'_1,S'_2,T')$ respectively. Note that if these tuples overlap in any index (e.g. $(1;2,3,4,5)$ and $(6;1,7,8,9)$), then $|(S_1\cup S_2\cup T)\cap(S'_1\cup S'_2\cup T')| \ge k$. There are at most \begin{equation}
        O\left(\binom{n}{k}\cdot \binom{n - k}{2}\cdot \binom{n - k - 2}{2} + \binom{n}{k+1}\cdot \binom{n-k}{1}\cdot \binom{n - k - 1}{1} + \binom{n}{k+2}\right) \le O(n^{k+4})
    \end{equation} pairs of sets $U,U'\subseteq[n]$ of size $k+2$ with intersection of size at least $k$, and given a set $U$ of size $k+2$, there are $O\left(\binom{k+2}{k-2}\right) \le \poly(k)$ ways of partitioning $U$ into three disjoint sets of size $2$, $2$, and $k-2$ respectively. We conclude that any pair of distinct summands in the expansion of $\E[Z^2]$ altogether contributes at most $\poly(k)\cdot O(n^{k+4})\cdot \binom{n}{k}^{-b} \le k^{10k}\cdot n^{-(b-1)k + 4}$, where $6\le b\le 10$ is the number of distinct indices within the tuples $(i;j_1,\ldots,j_4)$ and $(i';j'_1,\ldots,j'_4)$. For any $b$, there are $\binom{m}{5}\cdot \binom{m-5}{b-5} \le m^b$ such pairs of tuples.
    
    In the special case where $b = 6$, we will use a slightly sharper bound by noting that then, it must be that $S_1\cup S_2\cup T$ and $S'_1\cup S'_2\cup T'$ are identical, in which case we can improve the above bound of $O(n^{k+4})$ for the number of pairs $U,U'$ to $O(n^{k+2})$.
    
    We conclude that \begin{equation}
        \E[Z^2] \le \E[Z] + k^{10k} m^6\cdot n^{-5k + 2} + \sum^{10}_{b=7}m^b\cdot n^{-(b-1)k + 4} \le O(k^{10k}\cdot m^{10} \cdot n^{-8k + 4}).
    \end{equation} where in the last step we used the fact that $m\ge O(n^{2k/3})$ and $k\ge 2$ to bound the summands corresponding to $b = 6$ and $b = 7$. Finally, by our bounds on $\E[Z]$ and $\E[Z^2]$, we conclude by Chebyshev's that with probability at least $9/10$, there are most $2\E[Z] \le O(k^{5k}\cdot m^5 \cdot n^{-4k +2})$ houses in $\M$.
\end{proof}

\begin{lemma}[Finding a floral submatrix]%[Informal version of Lemma~\ref{lem:findsetsystem_formal}]
\label{lem:findsetsystem}
Suppose $m = \Omega(n^{k - \frac{2}{k+1}})$. With probability at least $3/4$, {\sc FindFloralSubmatrix}$(\M)$ runs in time $O(n^{2k - \frac{4}{k+1}}\cdot \exp(\poly(k)))$ and outputs $\binom{k+2}{k}\times \binom{k+2}{k}$-sized subset $\calI\subseteq[m]$ indexing a principal submatrix of $\M$ which is floral, together with a function $F:\calI\to \calC^k_{[k+2]}$ such that $\M_{j,j'} = |F(j)\cap F(j')|$ for all $j,j'\in\calI$.
\end{lemma}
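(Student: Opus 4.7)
The plan is to condition on three high-probability structural events and then describe an algorithm that, under these events, always succeeds within the stated runtime. The three events are those guaranteed by Lemmas~\ref{lem:setsystemexist}, \ref{lem:fewneighbors}, and \ref{lem:house}: respectively, that at least one floral submatrix exists in $\M$; that every row of $\M$ has at most $O(mk^{k+1}/n^{k-1})$ entries equal to $k-1$ and at most $O(mk^{k+2}/n^{k-2})$ entries equal to $k-2$; and that the total number of houses in $\M$ is at most $O(k^{5k}\cdot m^5 \cdot n^{-4k+2})$. Choosing constants in these lemmas appropriately, a union bound yields that all three events hold simultaneously with probability at least $3/4$.

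The algorithm itself is a direct algorithmic transcription of the constructive argument behind Lemma~\ref{lem:construct}. It iterates over candidate anchors $(i,j)$ with $\M_{i,j} = k-2$, each playing the role of the initial pair $S_{1,2}, S_{k+1,k+2}$ in that proof. For each anchor it computes the common $(k-1)$-neighbors of $i$ and $j$, enumerates $4$-element subsets of these as candidates for $\brc{S_{a,b}}_{a\in\brc{1,2},\, b\in\brc{k+1,k+2}}$, and then tries to fill in the remaining rings (the $4k-8$ sets of type $S''$ and the $\binom{k-2}{k-4}$ sets of type $S'''$) using the deterministic recipe from Lemma~\ref{lem:construct}. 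At each layer the intersection-size data recorded in $\M$ uniquely constrains the next ring up to a $\poly(k)$ number of local choices, so the per-layer enumeration is $\exp(\poly(k))$. A candidate $\binom{k+2}{k}$-tuple is accepted and returned, together with its canonical bijection $F$ recorded along the way, as soon as all its pairwise intersection sizes match those of $\calC^k_{[k+2]}$; by Lemma~\ref{lem:construct} this suffices to certify that the induced principal submatrix is floral.

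Correctness follows from the first event: it guarantees that at least one $(k-2)$-neighbor pair genuinely sits inside a floral family, so the outer loop will eventually reach such a pair and the deterministic extension from it is guaranteed to succeed. Runtime follows from the sparsity and house bounds. By the second event, the number of $(k-2)$-anchors considered is at most $m\cdot O(mk^{k+2}/n^{k-2})$, and the common $(k-1)$-neighborhood used at each anchor has size $O(mk^{k+1}/n^{k-1})$. The third event is what prevents a blowup past the ``middle ring'' stage: any partial extension that survives as far as a completed middle ring plus a common $(k-1)$-neighbor of all four middle-ring vertices contains a house, and Lemma~\ref{lem:house} caps the global count of such configurations by $O(k^{5k}\cdot m^5\cdot n^{-4k+2})$. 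Aggregating over anchors and substituting $m = \Theta(n^{k-2/(k+1)})$ yields the claimed bound $O(n^{2k - 4/(k+1)}\cdot \exp(\poly(k)))$.

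The main obstacle, which these three lemmas are tailored to overcome, is controlling the work done on false-positive anchors: without sparsity and the house bound, a single ``bad'' $(k-2)$-anchor could in principle spawn many spurious candidate middle rings and push the total cost past $m^2$. Turning Lemma~\ref{lem:construct} into a procedure whose per-anchor cost is genuinely $\exp(\poly(k))$ rather than $\poly(m)$, and then interleaving it with the two combinatorial counting bounds so that the overall cost is essentially proportional to the size of the restricted $(k-1)$- and $(k-2)$-neighbor graphs, is the bulk of the technical work.
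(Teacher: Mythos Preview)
Your high-level strategy matches the paper's: condition on the structural lemmas, then algorithmize the constructive proof of Lemma~\ref{lem:construct}. You are also right to explicitly invoke Lemma~\ref{lem:setsystemexist} for correctness; the paper's own proof leaves that step somewhat implicit.

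The gap is in the runtime accounting, and it stems from your choice of outer loop. You iterate over $(k-2)$-anchor pairs $(i,j)$ playing the roles of $S_{1,2}$ and $S_{k+1,k+2}$, whereas the paper's Algorithm~\ref{alg:findsetsystem} iterates over a \emph{single} vertex $i_0$ and then enumerates $4$-tuples $(j_1,\dots,j_4)$ in $\calN^{k-1}_{i_0}$, checking each for the house pattern. With the bound you cite from Lemma~\ref{lem:fewneighbors}, the number of $(k-2)$-anchors is $\Theta(m^2/n^{k-2})$, and merely computing the common $(k-1)$-neighborhood at each anchor costs $\Theta(m/n^{k-1})$, for a total of $\Theta(m^3/n^{2k-3})$. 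At $k=2$ with $m=\Theta(n^{4/3})$ this is already $\Theta(n^3)$, which exceeds the claimed $\Theta(n^{8/3})$. Enumerating $4$-subsets of the common neighborhood with the same bound makes things worse: $\Theta(m^6/n^{5k-6})=\Theta(n^4)$ at $k=2$.

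Your appeal to Lemma~\ref{lem:house} does not rescue this. The house count bounds how many candidate configurations \emph{survive} the middle-ring test, and hence how many times the expensive extension steps run; it does not bound the cost of enumerating all the $4$-tuples you examine \emph{before} the test. The paper's loop order is designed exactly so that each non-house $5$-tuple $(i_0;j_1,\dots,j_4)$ costs only $O(1)$: there are $O(m^5 n^{4-4k})$ such tuples, and only the $O(k^{5k}m^5 n^{2-4k})$ actual houses incur the $O(m/n^{k-1})$ cost of searching for the partner $i_1$ and the subsequent rings. Summing those two contributions is what gives $O(n^{k+4-10/(k+1)})\le O(n^{2k-4/(k+1)})$. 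To match the stated bound you should reorder your loops: fix a single apex first, enumerate $4$-tuples in its $(k-1)$-neighborhood, test for a house in $O(1)$, and only for surviving houses intersect neighborhoods to find the second anchor.
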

%We defer the proof of this to Appendix~\ref{sec:findsetsystem}.

\begin{proof}
The proof of correctness essentially follows immediately from the proof of Lemma~\ref{lem:construct}, while the runtime analysis will depend crucially on the sparsity of the underlying weighted graph defined by $\M$, as guaranteed by Lemmas~\ref{lem:fewneighbors} and \ref{lem:house}. Henceforth, condition on the events of those lemmas holding, which will happen with probability at least $3/4$.

First note that if one reaches as far as Step~\ref{step:victory} in {\sc FindFloralSubmatrix}, then by the proof of Lemma~\ref{lem:construct}, the $\calI$ produced in Step~\ref{step:calI} indexes a principal submatrix of $\M$ which is floral. The recursive call in Step~\ref{step:recursive} is applied to a submatrix of $\M$ whose size is independent of $n$, and it is evident that the time expended past that point is no worse than some $\exp(\poly(k))$, and inductively we know that the resulting $F$ produced in Step~\ref{step:setF} when the recursion is complete correctly maps indices $j\in[m]$ to subsets in $\calC^k_{[k+2]}$ such that $\M_{j,j'} = |F(j)\cap F(j')|$ for all $j,j'\in\calI$.

To carry out the rest of the runtime analysis, it suffices to bound the time expended leading up to the recursive call. Consider any house $(i_0;j_1,j_2,j_3,j_4)$ encountered in Step~\ref{step:findhouse}. First note that one can compute $\bigcap^4_{a=1}\calN^{k-1}_{j_a}$ with a basic hash table, so because the first part of Lemma~\ref{lem:fewneighbors} tells us that with high probability, $|\calN^{k-1}_{j_a}| \le O(m\cdot k^{k+1}\cdot n^{1-k})$ for all $a\in[4]$, Step~\ref{step:findhouse} only requires $O(m\cdot k^{k+1}\cdot n^{1-k})$ time. Similarly, for each of the $O(1)$ possibilities in the loop in Step~\ref{step:finite}, it takes $O(m\cdot k^{k+1}\cdot n^{1-k})$ time to enumerate over $(k - 1)$-neighbors of $i_z,i_{\alpha},i_{\beta}$ in Step~\ref{step:alphabet} and, by the second part of Lemma~\ref{lem:fewneighbors}, $O(m\cdot k^{k+2}\cdot n^{2-k})$ time to enumerate over $(k-2)$-neighbors of $i_{1-z},i_{\gamma},i_{\delta}$, and it takes $\poly(k)$ to check that the resulting indices $i''$ are not all $(k-1)$-neighbors of each other. And once more, in Step~\ref{step:victory} it takes $O(m\cdot k^{k+2}\cdot n^{2-k})$ time to enumerate over all indices which are $(k-2)$ neighbors of $i_0,i_1$ and of every $i''\in\calI''$.

We conclude that for every house $(i_0;j_1,j_2,j_3,j_4)$, {\sc FindFloralSubmatrix} expends at most $O(m\cdot k^{k+2}\cdot n^{2-k})$ time checking whether the house can be expanded into a set of indices corresponding to a floral principal submatrix of $\M$. Note that for any $(i_0;j_1,j_2,j_3,j_4)$ encountered in Step~\ref{step:allfourtups} which is \emph{not} a house, the algorithm expends $O(1)$ time. As $|\calN^{k-1}_{i_0}| \le O(m\cdot n^{1 - k}\cdot k^{k+1})$ with high probability for any $i_0$, there are most $O(m\cdot m^4\cdot n^{4 - 4k}\cdot k^{4k+4}) \le O(m^5\cdot n^{4 - 4k}\cdot k^{4k+4})$ such tuples which are not houses.

And because Lemma~\ref{lem:house} tells us that with high probability there are $O(k^{5k}\cdot m^5\cdot n^{-4k+2})$ houses in $\M$, {\sc FindFloralSubmatrix} outputs $\mathsf{None}$ with low probability. In particular, given that any single house $(i_0;j_1,j_2,j_3,j_4)$ expends $O(m\cdot k^{k+2}\cdot n^{2-k})$ time from Step~\ref{step:startofmain} all the way potentially to Step~\ref{step:recursive}, we conclude that the houses contribute a total of at most $O(k^{5k}\cdot m^5\cdot n^{-4k+2}\cdot m\cdot k^{k+2}\cdot n^{2-k}) \le O(m^6\cdot n^{4 - 5k}\cdot k^{6k+2})$ to the runtime.

Putting everything together, we conclude that {\sc FindFloralSubmatrix} runs in time \begin{equation}
    O\left(m^5\cdot n^{4 - 4k}\cdot k^{4k+4} + m^6\cdot n^{4 - 5k}\cdot k^{6k+2}\right) = O\left(n^{k + 4 - \frac{10}{k+1}}\cdot k^{O(k)}\right).
\end{equation} Lastly, note that $k + 4 - \frac{10}{k+1} \le 2k - \frac{4}{k+1}$ whenever $k \ge 2$, completing the proof.
\end{proof}

\begin{algorithm2e}
\DontPrintSemicolon
\caption{\textsc{FindFloralSubmatrix}($\M,k,r$)}
\label{alg:findsetsystem}
    \KwIn{Query access to matrix $\M\in\R^{M\times M}$, sparsity level $k$}
    \KwOut{$\binom{k+2}{k}\times \binom{k+2}{k}$-sized subset $\calI\subseteq[M]$, function $F:\calI\to \calC^k_{[k+2]}$ (Lemma~\ref{lem:findsetsystem})}
        $N_{\mathsf{houses}}\gets 0$.\;
        \For{$i_0\in[M]$}{
            $F(i_0)\gets \brc{1,...,k}$.\;
            \For{$j_1,\ldots,j_4$ in $\calN^{k-1}_{i_0}$ for which $j_1 < j_4$\label{step:allfourtups}}{
                \If{$(i_0;j_1,j_2,j_3,j_4)$ is a house\label{step:findhouse}}{
                    $N_{\mathsf{houses}}\gets N_{\mathsf{houses}} + 1$.\;
                    \If{$N_{\mathsf{houses}} \ge \Omega(k^{5k}\cdot M^5 \cdot n^{-4k +2})$}{
                       \Return{$\mathsf{None}$}.\;
                    }
                    $\calI'\gets \brc{j_1,j_2,j_3,j_4}$.\label{step:startofmain}\;
                    \If{$\bigcap^4_{a=1}\calN^{k-1}_{j_a}\backslash\brc{i_0} \neq \emptyset$\label{step:computeintersection}}{
                        Let $i_1$ be the (unique) element of $\bigcap^4_{a=1}\calN^{k-1}_{j_a}\backslash\brc{i_0}$.\;
                        $\calI''\gets\emptyset$.\;
                        $F(i_1)\gets\brc{3,\cdots,k+2}$.\;
                        \For{$z\in\brc{0,1}$ and distinct $\alpha,\beta,\gamma,\delta \in [4]$ for which $\alpha<\beta$ and $i_{\gamma}$ (resp. $i_{\delta}$) is a $(k-1)$-neighbor of $i_{\alpha}$ (resp. $i_{\beta}$), and for which $i_0,\alpha,\beta$ are $(k-1)$-neighbors and $i_1,\gamma,\delta$ are $(k-1)$-neighbors\label{step:finite}}{
                            \If{exactly $k - 2$ choices of $i''$ which are $(k-1)$-neighbors of $i_z,i_{\alpha},i_{\beta}$ and $(k-2)$-neighbors of $i_{1-z},i_{\gamma},i_{\delta}$, and which are not all $(k-1)$-neighbors of each other\label{step:alphabet}}{
                                Add to $\calI''$ all such $i''$.\;
                            }
                            \If{$|\calI''| = 4k - 8$}{
                                If $z = 0$, set $F(i_{\alpha})\gets \brc{1,3,\ldots,k,k+1}$, $F(i_{\beta})\gets \brc{2,3,\ldots,k,k+1}$, $F(i_{\gamma}) \gets \brc{1,3,\ldots,k,k+2}$, and $F(i_{\delta}) \gets \brc{2,3,\ldots,k,k+2}$.\;
                                If $z = 1$, set $F(i_{\alpha})\gets \brc{1,3,\ldots,k,k+1}$, $F(i_{\beta})\gets \brc{1,3,\ldots,k,k+2}$, $F(i_{\gamma'}) \gets \brc{2,3,\ldots,k,k+1}$ and $F(i_{\delta'}) \gets \brc{2,3,\ldots,k,k+2}$.\;
                                \If{exactly $\binom{k - 2}{k - 4}$ choices of $i'''$ which are $(k-2)$-neighbors of $i_0$, $i_1$, $i_{\alpha}$, $i_{\beta}$, $i_{\gamma}$, and $i_{\delta}$, and which are also $(k-1)$-neighbors of at least one $i''\in\calI''$\label{step:victory}}{
                                    Let $\calI'''$ denote the set of such $i'''$.\;
                                    $\calI \gets \brc{i_0,i_1}\cup \calI'\cup \calI''\cup \calI'''$.\label{step:calI}\;
                                    Let $\M_{\mathsf{sub}}$ denote the $\binom{k-2}{k-4}\times\binom{k-2}{k-4}$ submatrix of $\M$ given by restricting to the rows and columns indexed by $\calI'''$ and subtracting 4 from every entry.\;
                                    $\textunderscore, G\gets${\sc FindFloralSubmatrix}($\M_{\mathsf{sub}},k-2$).\label{step:recursive}\;
                                        For every $i'''\in \calI'''$, set $F(i''')\gets G(i''')\cup\brc{1,2,k+1,k+2}$.\label{step:setF}\;
                                        \Return{$\calI$, $F$}.\;
                                }
                            }
                        }
                    }
                }
            }
        }
\end{algorithm2e}

\subsection{Putting Everything Together}
\label{sec:puttogether}

\begin{algorithm2e}
\DontPrintSemicolon
\caption{\textsc{LearnPrivateImage}($\brc{y^{\X,w_i}}_{i\in[m]}$)}
\label{alg:private_attack2}
    \KwIn{InstaHide dataset $\brc{y^{\X,w_i}}_{i\in[m]}$}
    \KwOut{Vectors $\wt{x}_1,...,\wt{x}_{k+2}\in\R^d$ equal to $k+2$ images (up to signs) from the original private dataset}
        $\M\gets\frac{1}{\kpriv}\cdot${\sc GramExtract}($\brc{y^{\X,w_i}},\frac{1}{2\kpub + 2\kpriv}$).\label{step:callgram}\;
        \For{$i\in[m]$}{
            $S_i \gets${\sc LearnPublic}($\brc{([p_j]_S, y_j)}_{j\in[d]}$).\label{step:SI}\;
        }
        \For{$i,j\in[m]$}{
            $\M_{i,j} \gets \M_{i,j} - \frac{1}{\kpub}|S_i \cap S_j|$.\label{step:Mdecrement}\;
        }
        $\M\gets \kpriv\cdot \M$.\label{step:rescale}\;
        $\calI,F\gets${\sc FindFloralSubmatrix}($\M$).\label{step:callset}\;
        For every pixel index $\ell\in[d]$, solve the system of equations $\abs*{\sum_{i\in F(j)}\wt{x}^{(\ell)}_i} = y^{\X,w_j}_{\ell}$ in the variables $\brc{\wt{x}^{(\ell)}_i}_{i\in[\kpriv+2]}$ for all $j\in\calI$. \label{step:solvesystem}\;
        For every image index $i\in[\kpriv+2]$, let $\wt{x}_i\in\R^d$ denote the image whose $\ell$-th pixel is equal to $\wt{x}^{(\ell)}_i$.\;
        \Return{$\wt{x}_1,...,\wt{x}_{\kpriv+2}$}.\;
\end{algorithm2e}

We are now ready to conclude the proof of correctness of our main algorithm, {\sc LearnPrivateImage}.

\begin{proof}
    By Lemma~\ref{lem:public}, the subsets $S_i$ computed in Step~\ref{step:SI} correctly index the public coordinates of $w_i$. By Lemma~\ref{lem:gramextract}, with high probability over the randomness of $\X$, the matrix $\M$ formed from {\sc GramExtract} in Step~\ref{step:callgram} of {\sc LearnPrivateImage} is exactly equal to the Gram matrix $WW^{\top}$, so after Step~\ref{step:Mdecrement} and Step~\ref{step:rescale}, $\M$ is equal to the Gram matrix of the vectors $[w_1]_{S^c},\ldots,[w_m]_{S^c}$, i.e. the restrictions of the selection vectors to the private coordinates. We are now in a position to apply the results of Sections~\ref{subsec:system}, \ref{subsec:locate}, \ref{subsec:floralexist}, and \ref{subsec:findfloral}.
    
    By Lemma~\ref{lem:findsetsystem}, with high probability the output $\calI,F$ of {\sc FindFloralSubmatrix} in Step~\ref{step:callset} satisfies that 1) the principal submatrix of $\M$ indexed by $\calI$, a set of indices of size $\binom{\kpriv+2}{\kpriv}$, is floral, and 2) the function $F: \calI\to\calC^{\kpriv}_{[\kpriv+2]}$ satisfies that $|F(i)\cap F(j)| = \M_{i,j}$ for all $i,j\in\calI$. By Lemma~\ref{lem:construct}, because the principal submatrix indexed by $\calI$ is floral, there exists some subset $U\subseteq[n]$ of size $\kpriv+2$ for which the supports of the mixup vectors $w_j$ for $j\in\calI$ are all the subsets of $U$ of size $\kpriv$. Finally, by Lemma~\ref{lem:generic} and the fact that the entries of $\X$ are independent Gaussians, for every pixel index $\ell\in[d]$, the solution $\brc{\wt{x}^{(\ell)}_i}$ to the system in Step~\ref{step:solvesystem} satisfies that there is some column $x$ of the original private image matrix $\X$ such that for every $i\in[\kpriv+2]$, $\wt{x}^{(\ell)}_i$ is, up to signs, equal to the $\ell$-th pixel of $x$.
    
    Note that the runtime of {\sc LearnPrivateImage} is dominated by the operations of forming the matrix $\M$ and running {\sc FindFloralSubmatrix}, which take time $O(m^2)$ by Lemma~\ref{lem:findsetsystem}.
\end{proof}

\subsection{Example of a Floral Submatrix}

\begin{example}\label{example:floral}
For $k = 2$, the following $6\times 6$ matrix, after dividing every entry by $k$, is floral:
\begin{equation}
    \begin{matrix}
              & \brc{1,3} & \brc{2,4} & \brc{1,4} & \brc{1,2} & \brc{3,4} & \brc{2,3} \\
    \brc{1,3} &     2      &      0     &     1      &     1      &      1     &      1     \\
    \brc{2,4} &     0      &      2     &     1      &     1      &      1     &      1     \\
    \brc{1,4} &     1      &      1     &     2      &     1      &      1     &      0     \\
    \brc{1,2} &     1      &      1     &     1      &     2      &      0     &      1     \\
    \brc{3,4} &     1      &      1     &     1      &     0      &      2     &      1     \\
    \brc{2,3} &     1      &      1     &     0      &     1      &      1     &      2
    \end{matrix}
\end{equation}
\end{example}

\paragraph{Acknowledgments} The authors would like to thank Haotian Jiang for helpful discussion about the runtime for Algorithm~\ref{alg:public_attack}; Lijie Chen, Baihe Huang, Runzhou Tao, and Ruizhe Zhang for proofreading of the draft; and Sanjeev Arora, Mark Braverman, Shumo Chu, Yangsibo Huang, Binghui Peng, Kai Li, and Hengjie Zhang for useful discussions.

\addcontentsline{toc}{section}{References}
\bibliographystyle{alpha}
\bibliography{ref}

\appendix

\section{Experiments}
\label{sec:experiments}

We describe an experiment demonstrating the \emph{utility} of InstaHide for Gaussian images and comparing to the utility of another data augmentation scheme, MixUp \cite{zcdl18}.
% We then consider a somewhat less secure variant of InstaHide and demonstrate the utility of MixUp and this variant on the ``Fourier'' dataset $\X$ defined in Section~\ref{sec:fourier}. 
We also informally report on our implementation of {\sc LearnPublic} and its empirical efficacy.

\subsection{Choice of Architecture and Parameters}

As our empirical results are purely for proof-of-concept, we work with a fairly basic neural network architecture. We use a 4-layer neural network as a binary classifier,
\begin{align*}
y = \arg\max(\text{softmax}(W_4\sigma(W_3\sigma(W_2(\sigma(W_1x+b_1))+b2)+b3)+b4)),
\end{align*}
where $x \in \R^{10},~W_1 \in \R^{100 \times 10},~W_2 \in \R^{100 \times 100}, ~W_3 \in \R^{100 \times 100}, ~W_4 \in \R^{2 \times 100}, ~b_1 \in \R^{100},~b_2\in\R^{100},~b_3\in\R^{100},~b_4\in\R^{100}$. We initialize the entries of each $W_l$ and $b_l$ to be i.i.d. draws from ${\cal N}(u_l,1)$, where $u_l$ is sampled from ${\cal N}(0,\alpha)$ at the outset. We train the neural network for 100 epochs with cross-entropy loss and SGD optimizer with a learning rate of $0.01$.
We do not need to distinguish between public and private images in our experiments, so let $k_{\mathsf{priv}}=0$ and $k_{\mathsf{pub}} = k$ for $k \in \{~1,~2,~3,~6\}$, and for each choice of $k$, we use random $k$-sparse selection vectors whose nonzero entries equal $1/k$. In all of our experiments, we separate our original image data (before generating synthetic data) into two categories: 80\% training data and 20\% test data. We train on synthetic images generated by MixUp or InstaHide using the training data, and measure the ``test accuracy'' on the training data and the test data separately. We provide more choices of $k$ in Appendix~\ref{sec:app_exp}.

\subsection{Gaussian Data}
\paragraph{Settings} We considered binary classification on Gaussian data. We generated random images $x_1,\ldots,x_{1000} \in \R^{10}$ from ${\cal N}(0, \Id)$ and a random vector $v \in \R^{10} \sim {\cal N}(0, \Id)$. We then ranked all the Gaussian images based on $\sum_i |x_i v_i|$ and labeled the largest half as `1' and the rest as `0'. The point of choosing this labeling function is that it would assign the same label to any $x,x'$ which agree entrywise in magnitudes. Given a synthetic image generated via MixUp or InstaHide using selection vector $w$, we assigned it the label which is the convex combination of the one-hot encodings of the labels of the original images indexed by $w$. 

% We train the neural network for 20 epochs with cross-entropy loss and SGD optimizer with $0.01$ learning rate. For vanilla training, the input-output pair is $(x,~y)$. For MixUp and Instahide, We vary the number of data to be mixed $k \in \{~2,~4,~6\}$. Correspondingly, we generate coefficients vector  $w \in (0,1)^k$ s.t. $\|w\|_1 = 1$. 

\paragraph{Results}
We compare training and test loss over epochs when training on a synthetic dataset generated by either MixUp or Instahide, as shown in Figure~\ref{fig:curve}. We use the convention in this paper of defining synthetic images under InstaHide to have all nonnegative entries (rather than imposing random sign flips), though we explore in the supplement how random sign flips can affect learnability. Compared to training on MixUp, InstaHide results on lower model performance (accuracy). As we expected, when we increase the $k$, both MixUp and InstaHide suffer from accuracy loss. Instahide dropped by $\sim10\%$ accuracy when $k=6$ compared to classical training $k=1$, while MixUp dropped by $\sim 5\%$.
% Danyang{Xiaoxiao, add explanation after new figures are genereated.}
% \begin{figure*}[!t]
%     \centering
%     \subfloat[$k=1$]{\includegraphics[width=0.24\linewidth]{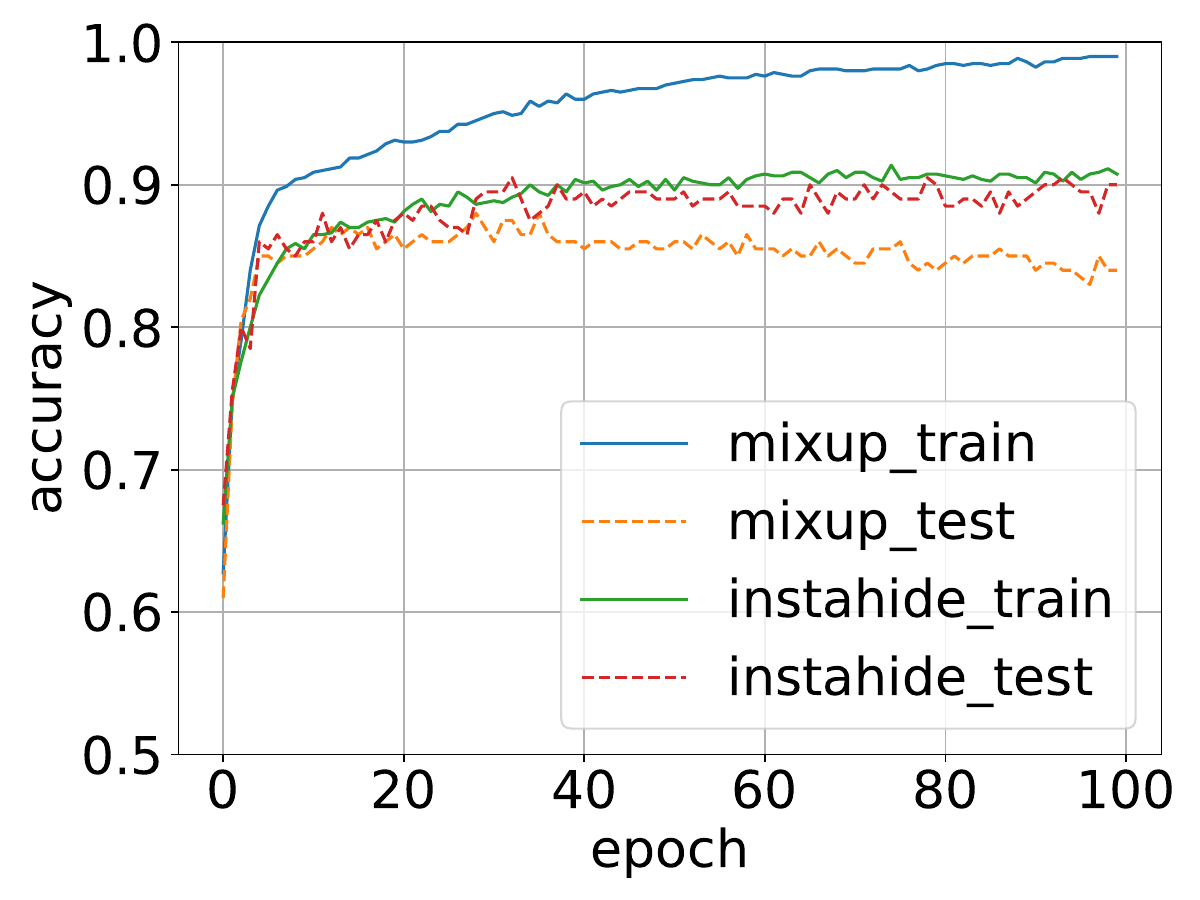}}
%     \subfloat[$k=2$]{\includegraphics[width=0.24\linewidth]{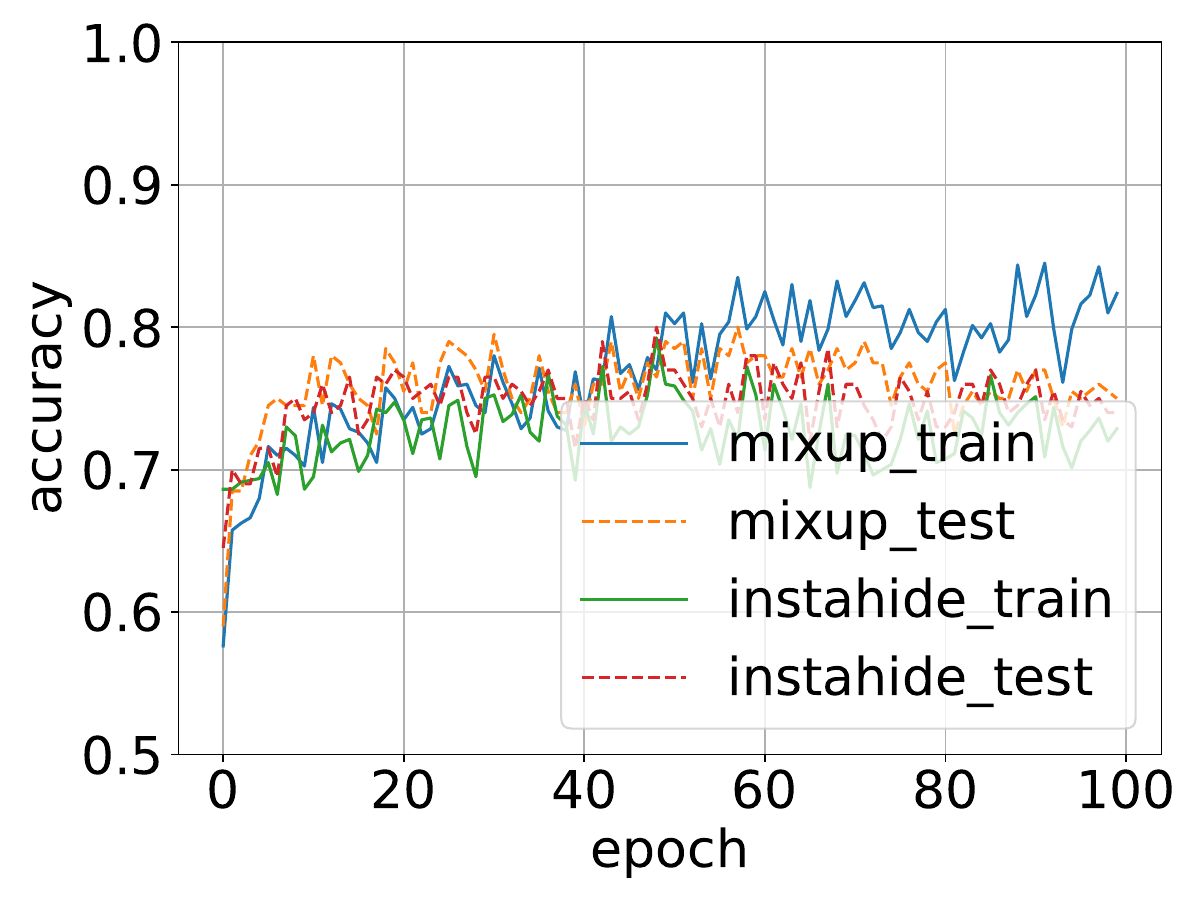}}
%     \subfloat[$k=3$]{\includegraphics[width=0.24\linewidth]{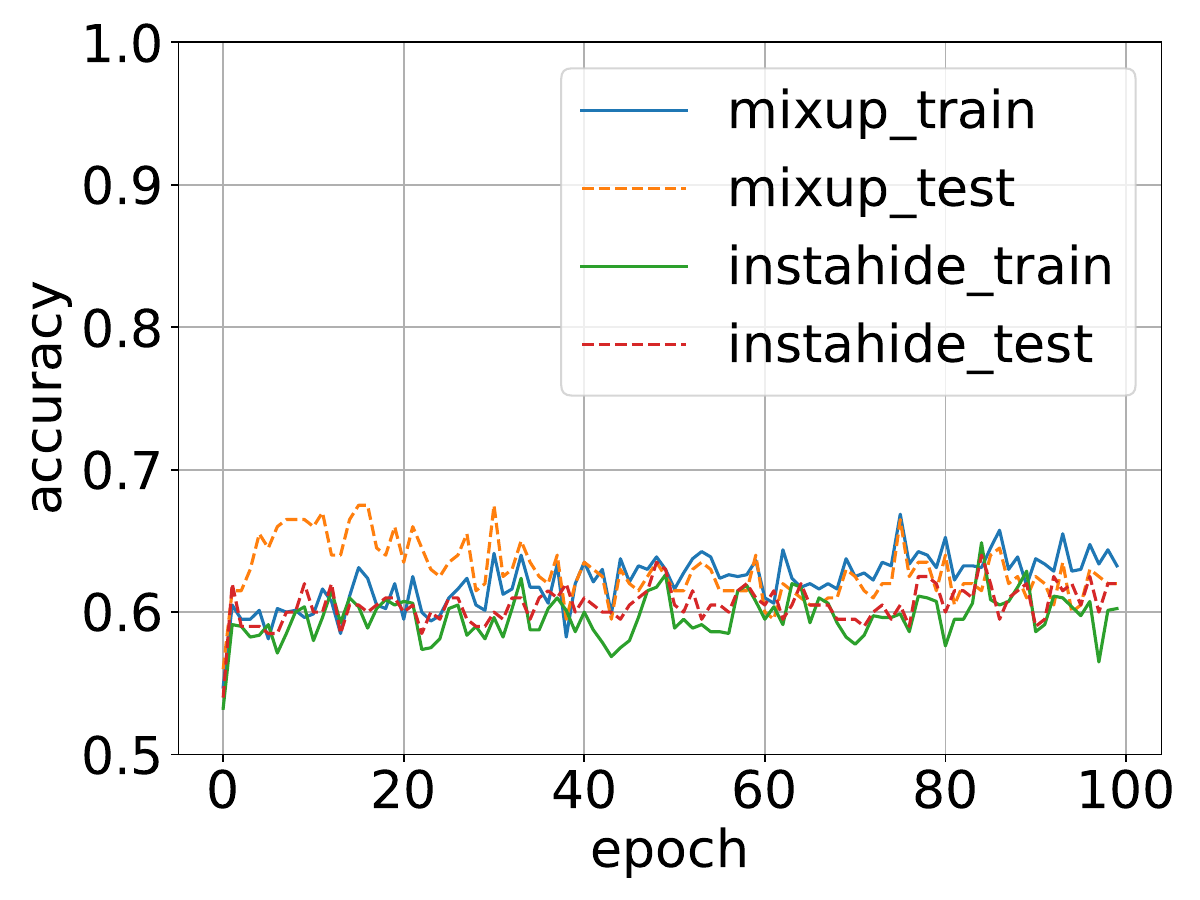}}
%     \subfloat[$k=6$]{\includegraphics[width=0.24\linewidth]{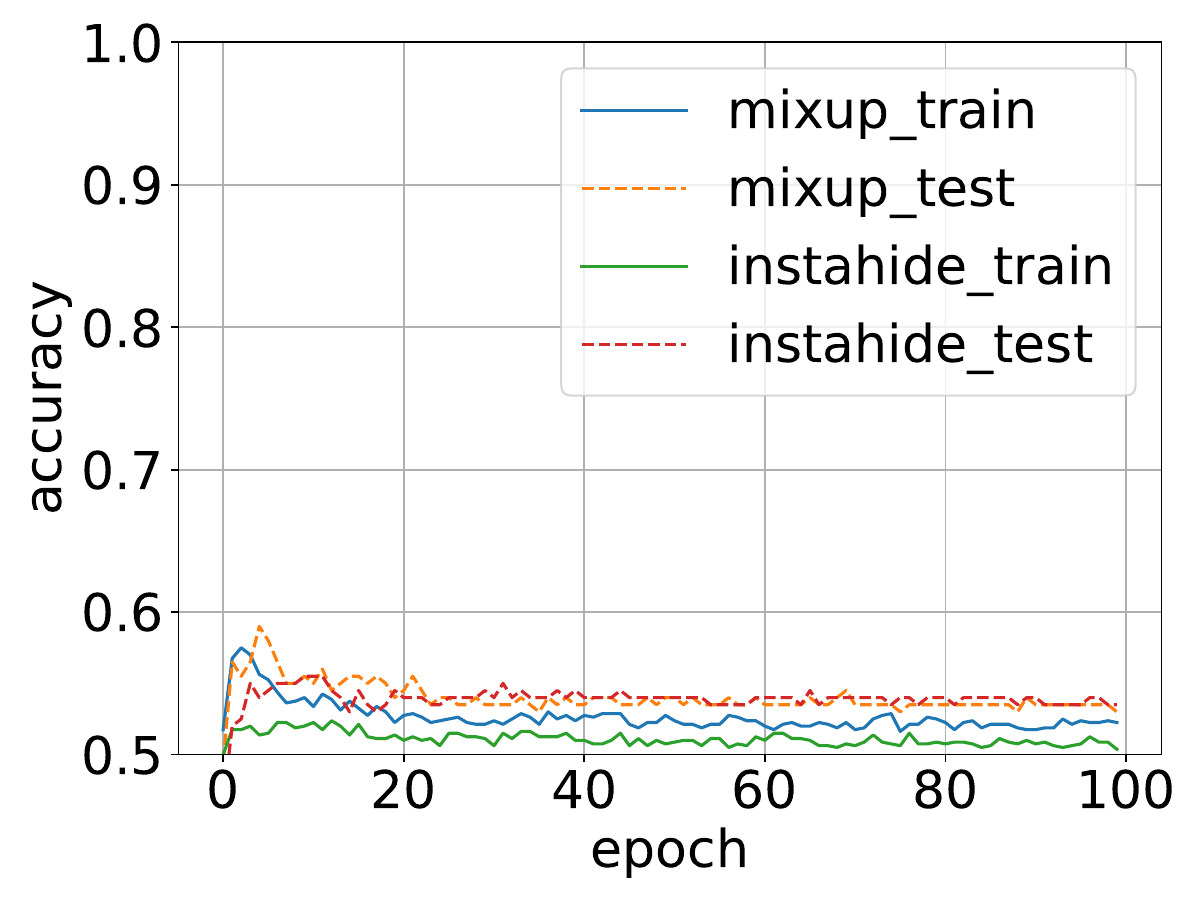}}
%     \caption{Comparing MixUp and Instahide training on Gaussian dataset with different $k$.}
%     \label{fig:curve}
% \end{figure*}
\begin{figure*}[!t]
    \centering
    \subfloat[$k=1$]{\includegraphics[width=0.24\linewidth]{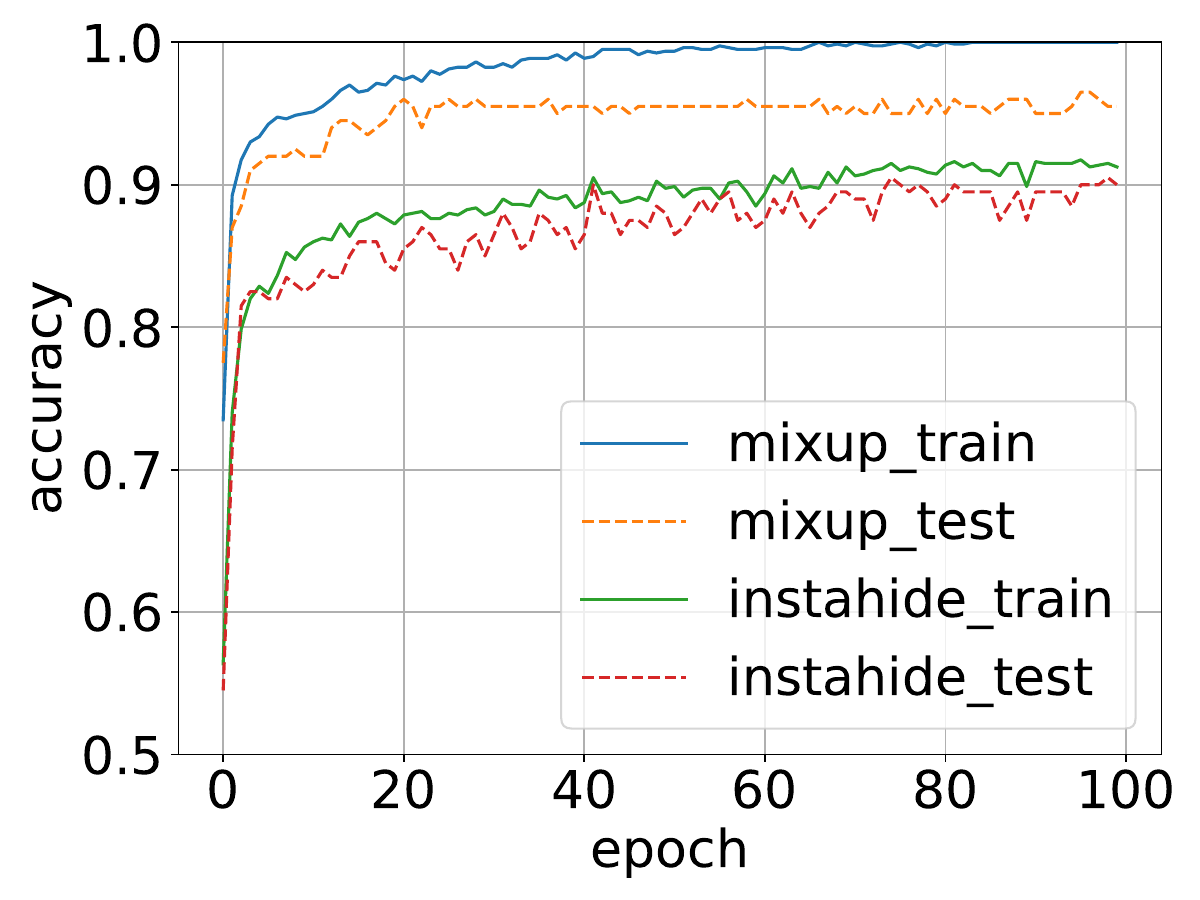}}
    \subfloat[$k=2$]{\includegraphics[width=0.24\linewidth]{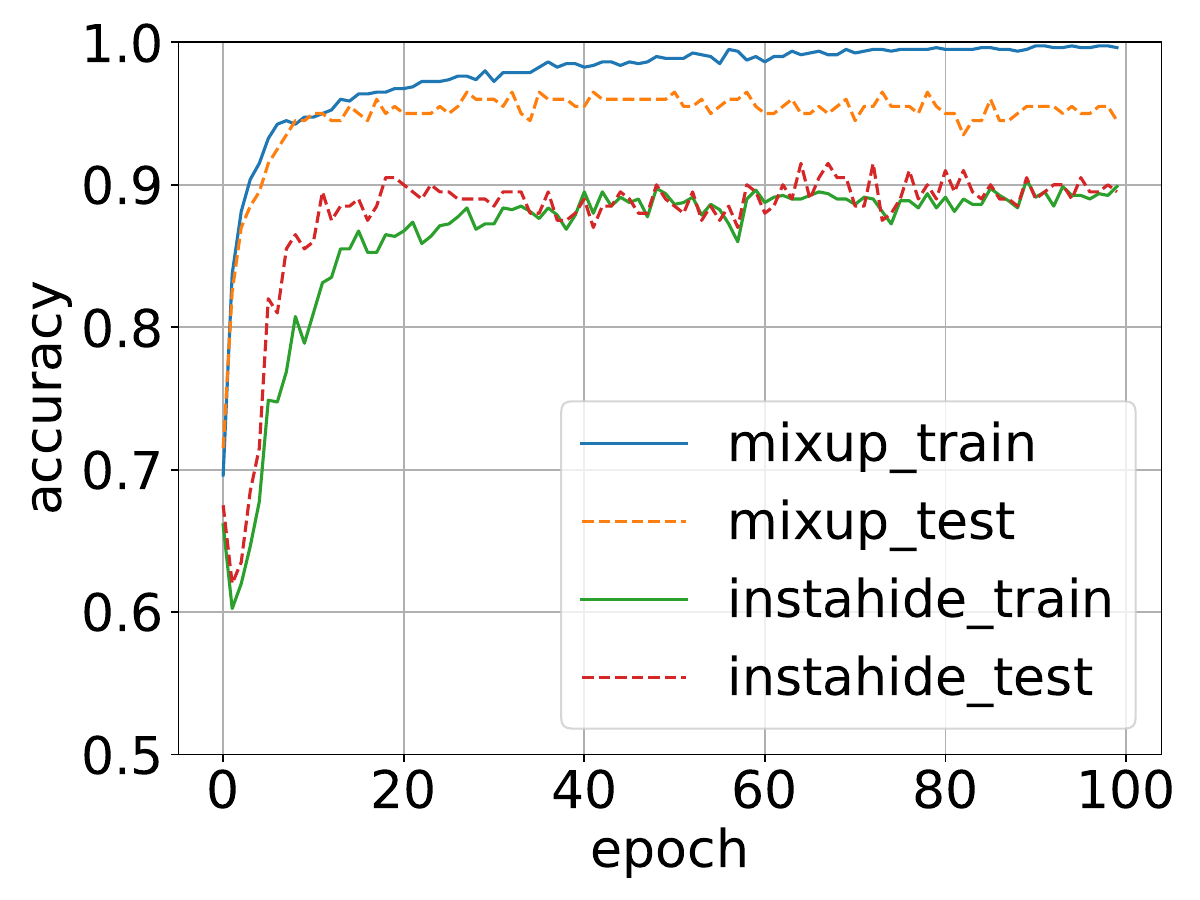}}
    \subfloat[$k=3$]{\includegraphics[width=0.24\linewidth]{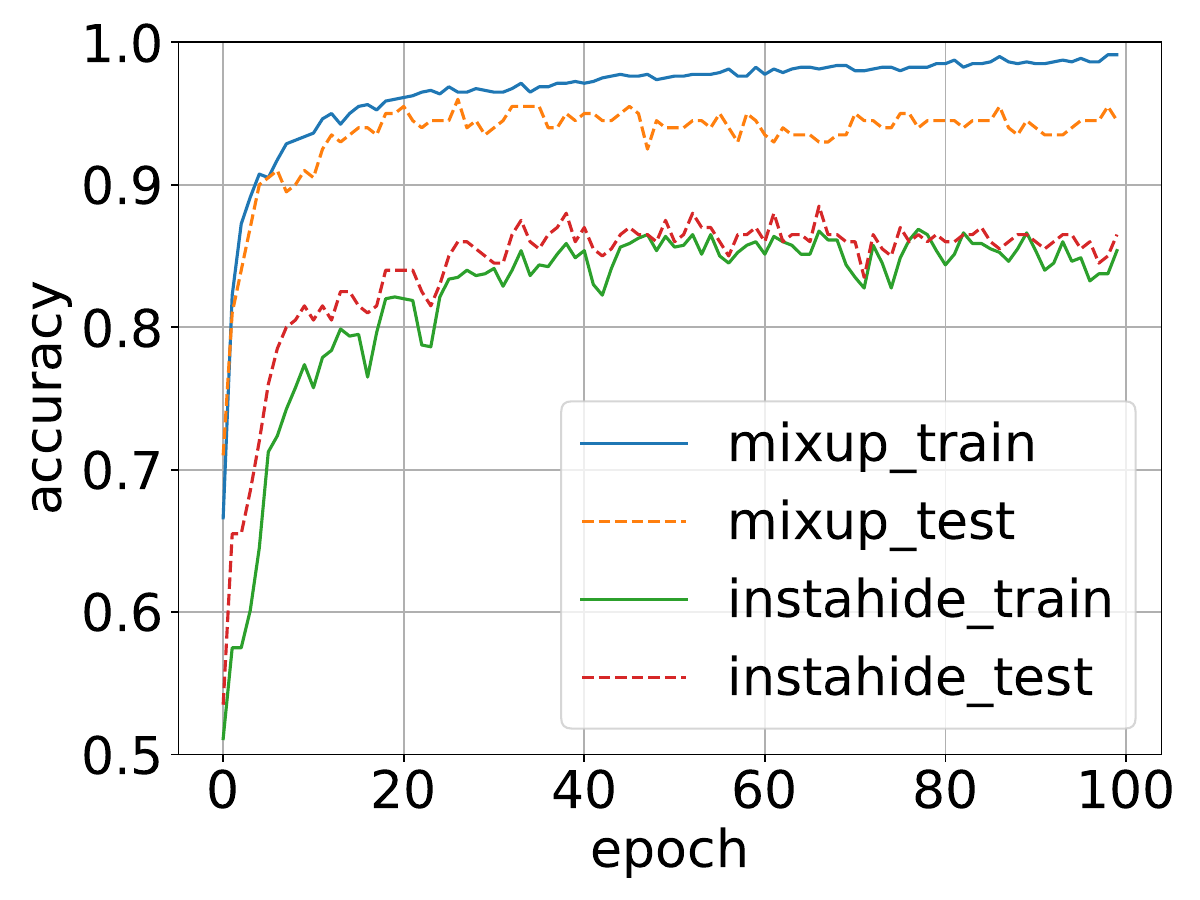}}
    \subfloat[$k=6$]{\includegraphics[width=0.24\linewidth]{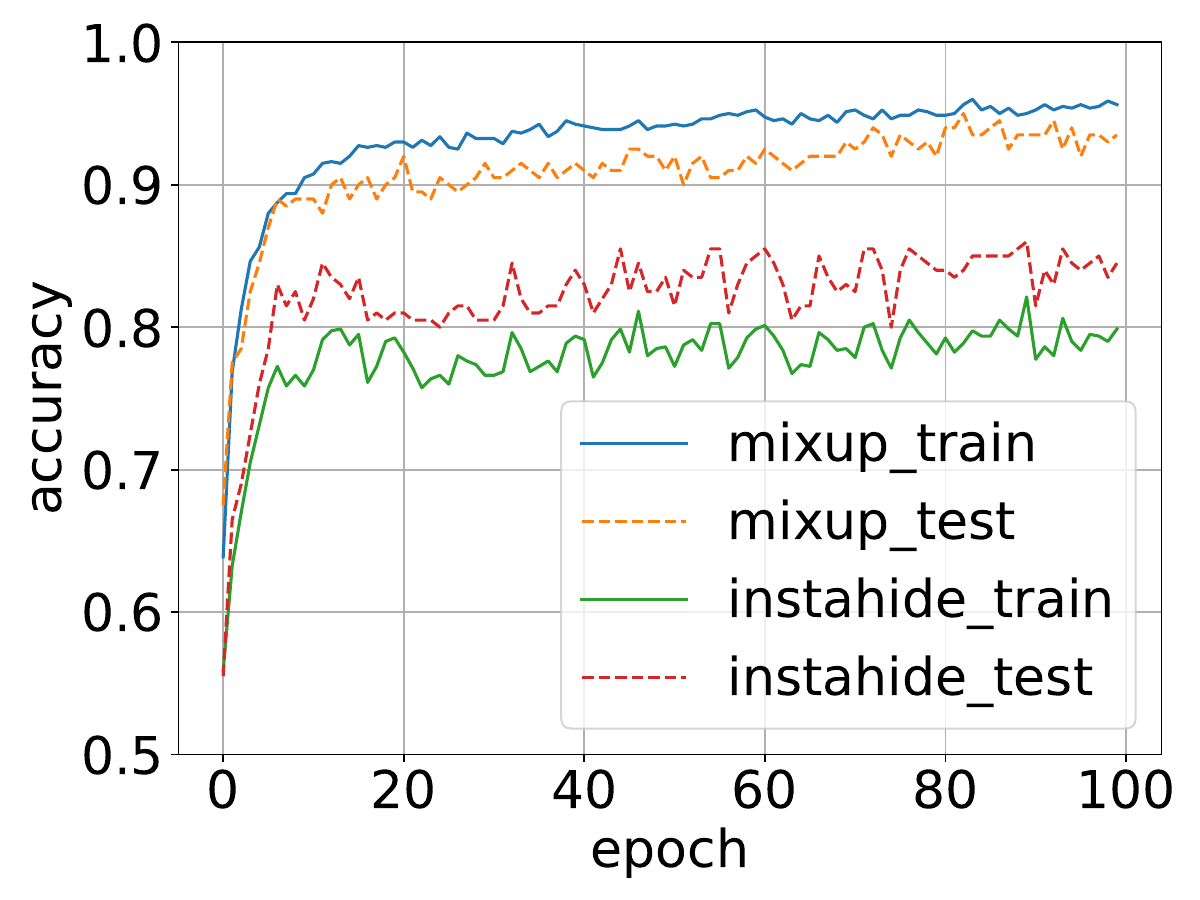}}
    \caption{Comparing MixUp and Instahide training on Gaussian dataset with different $k$.}
    \label{fig:curve}
\end{figure*}

\subsection{Implementation of {\sc LearnPublic}}

We implemented {\sc LearnPublic} for $k_{\mathsf{priv}} = 2$ and $n \in\brc{2000,5000,7500,10000}$. For $k_{\mathsf{pub}} = 2,4,6$, we respectively chose $d = 1000,1800,2400$. In particular, our choice of $d$ is meant to work essentially for any choice of $n$ (modulo the logarithmic dependence which does not noticeably manifest in this regime). One heuristic modification that we made to {\sc LearnPublic} was to use a \emph{diagonal thresholding} approach from \cite{clm16} in place of solving the SDP in \eqref{eq:sdp}: namely for every $j\in[n]$ we computed the quantity $\frac{1}{d}\sum^d_{i=1} y^2_i \cdot (x_i)^2_j$, zeroed out all but the principal submatrix of $\wt{\M}$ indexed by the top 25 such $j$, and computed the top eigenvector of the resulting matrix. For each parameter setting we found that, as expected, we were able to recover an average of at least $90\%$ of the support. As this experiment was primarily to demonstrate that $d$ can be much less than $n$, we did not explore further optimizations to the algorithm.

\ifdefined\isarxiv
\subsection{Additional Experimental Results}\label{sec:app_exp}
\else
\section{Additional Experimental Results}\label{sec:app_exp}
\fi
\begin{figure*}[!h]
    \centering
    \subfloat[$k=1$]{\includegraphics[width=0.24\linewidth]{k1g}}
    \subfloat[$k=2$]{\includegraphics[width=0.24\linewidth]{k2g}}
    \subfloat[$k=3$]{\includegraphics[width=0.24\linewidth]{k3g}}
    \subfloat[$k=4$]{\includegraphics[width=0.24\linewidth]{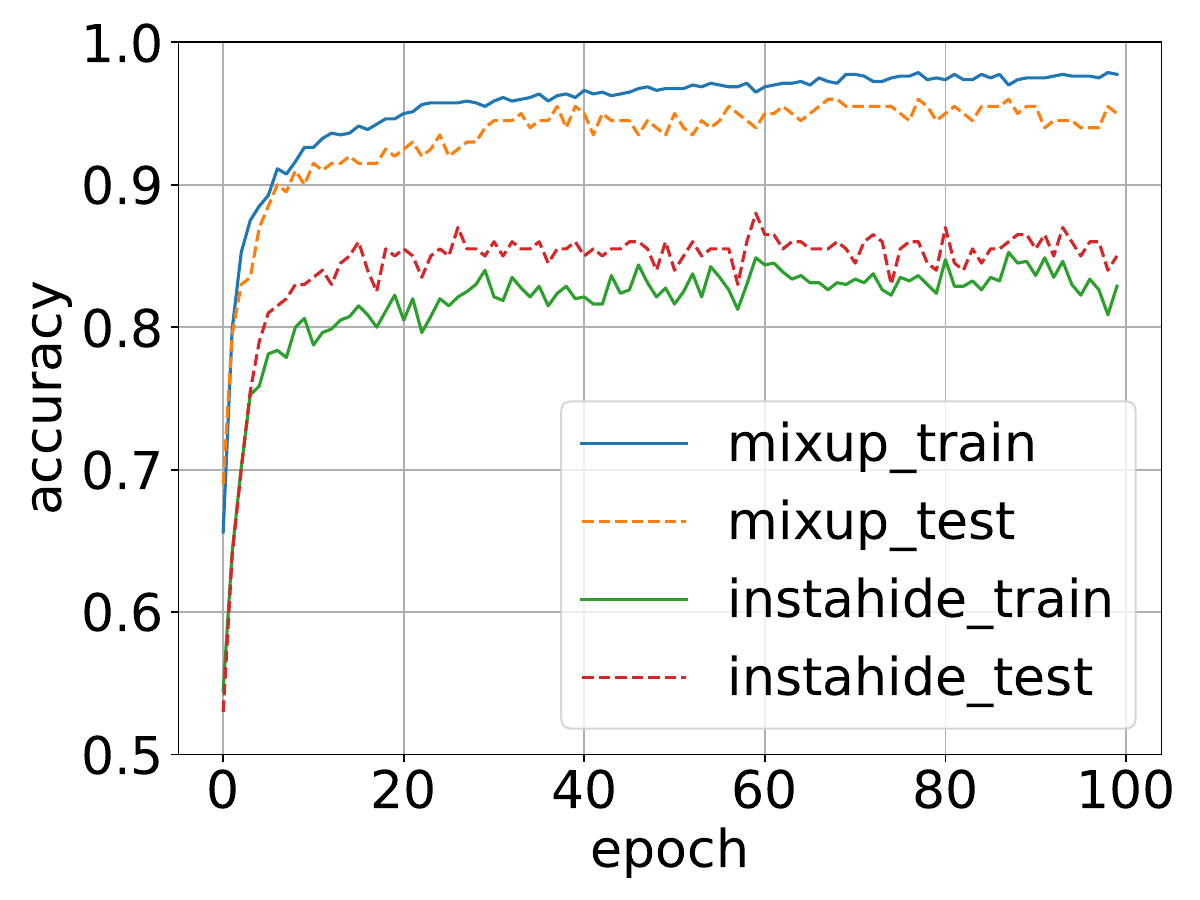}}
    \hfill
    \subfloat[$k=5$]{\includegraphics[width=0.24\linewidth]{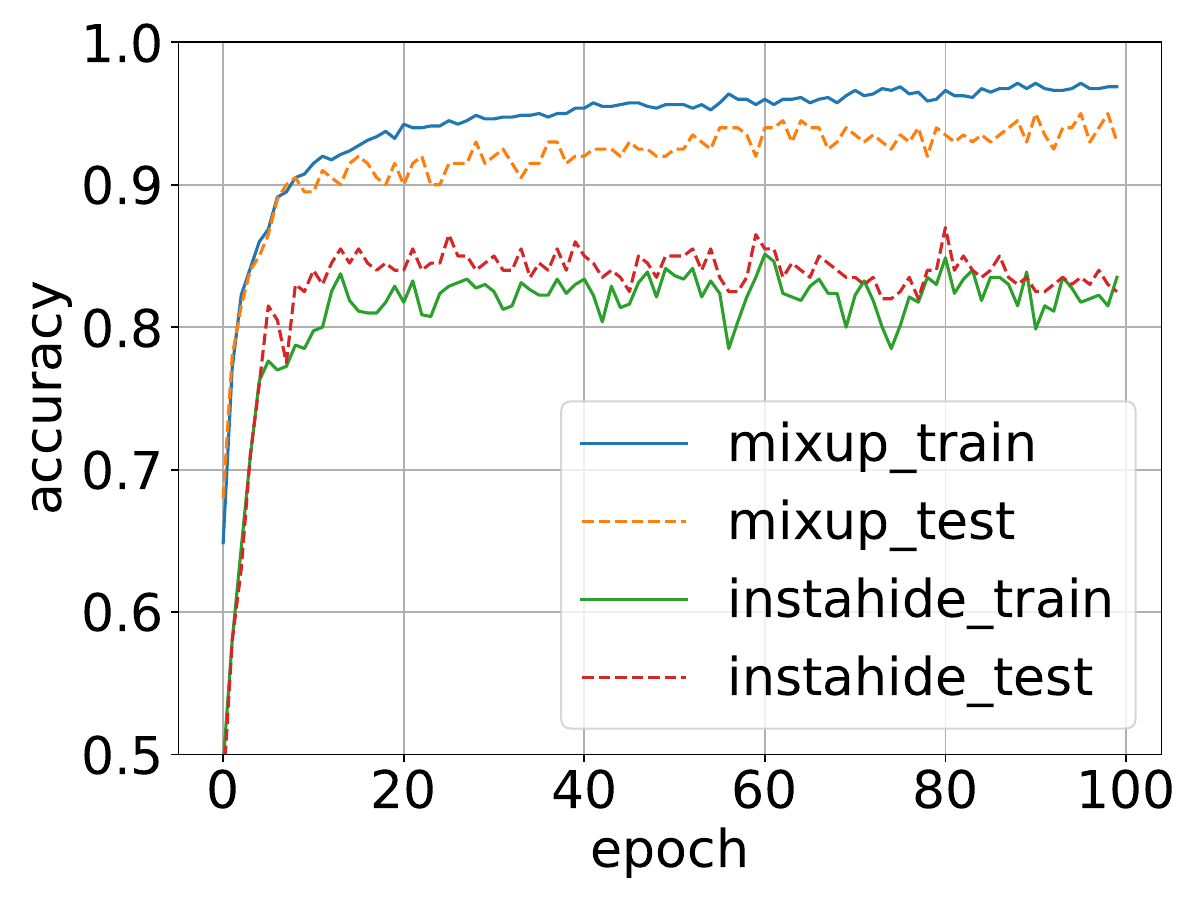}}
    \subfloat[$k=6$]{\includegraphics[width=0.24\linewidth]{k6g}}
    \subfloat[$k=7$]{\includegraphics[width=0.24\linewidth]{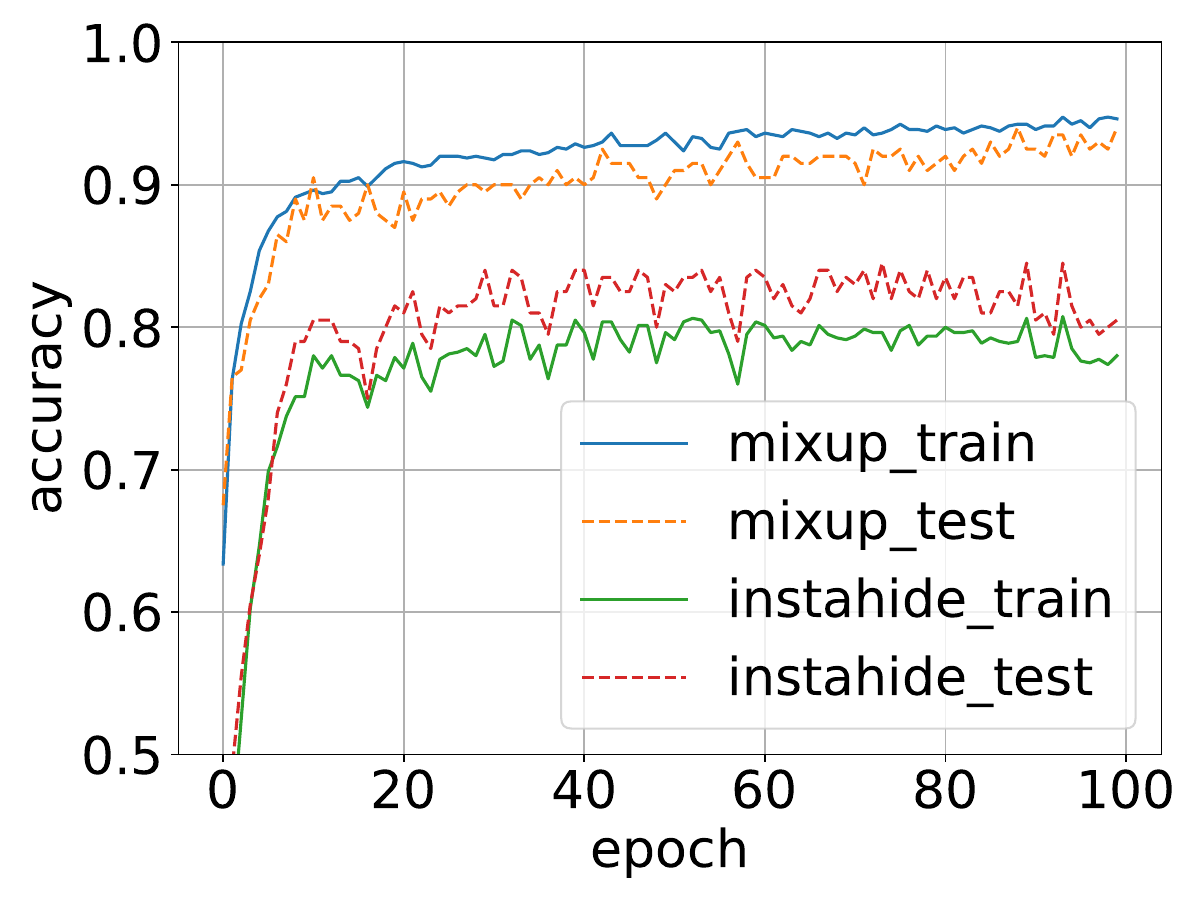}}
    \subfloat[$k=8$]{\includegraphics[width=0.24\linewidth]{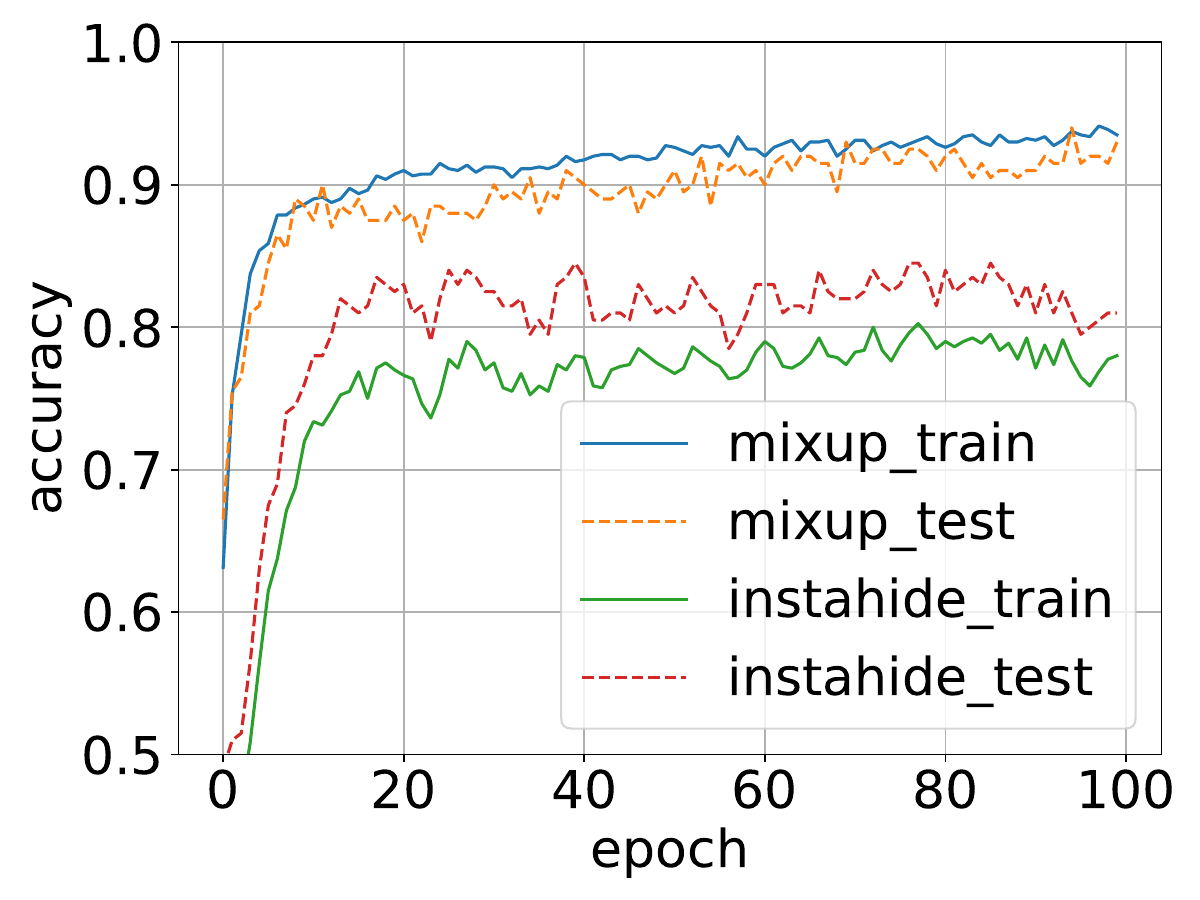}}
    \caption{Comparing Vanilla, Mixup and Instahide training on Gaussian magnitude dataset with different $k$.}
    \label{fig:curve_app_1}
\end{figure*}

\begin{figure*}[!h]
    \centering
    \subfloat[$k=1$]{\includegraphics[width=0.24\linewidth]{k1}}
    \subfloat[$k=2$]{\includegraphics[width=0.24\linewidth]{k2}}
    \subfloat[$k=3$]{\includegraphics[width=0.24\linewidth]{k3}}
    \subfloat[$k=4$]{\includegraphics[width=0.24\linewidth]{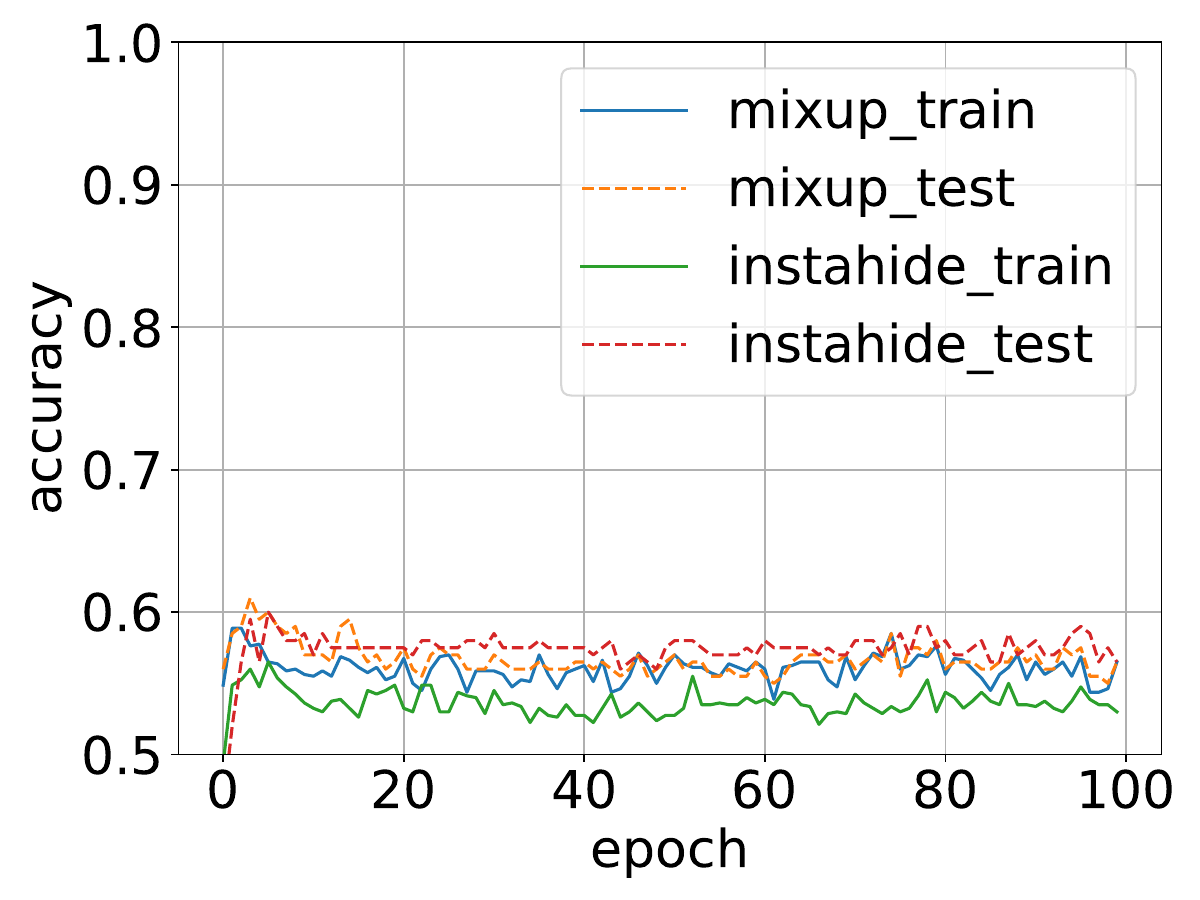}}
    \hfill
    \subfloat[$k=5$]{\includegraphics[width=0.24\linewidth]{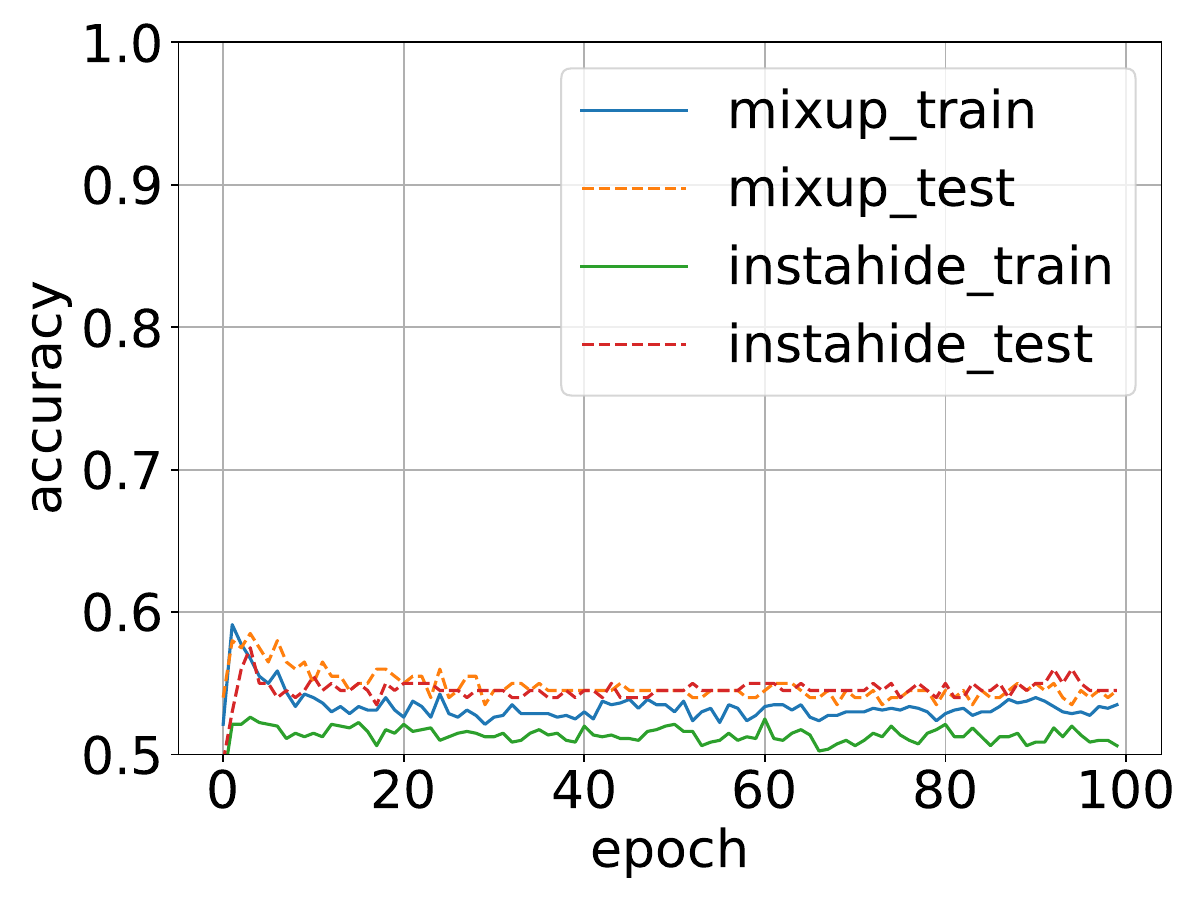}}
    \subfloat[$k=6$]{\includegraphics[width=0.24\linewidth]{k6}}
    \subfloat[$k=7$]{\includegraphics[width=0.24\linewidth]{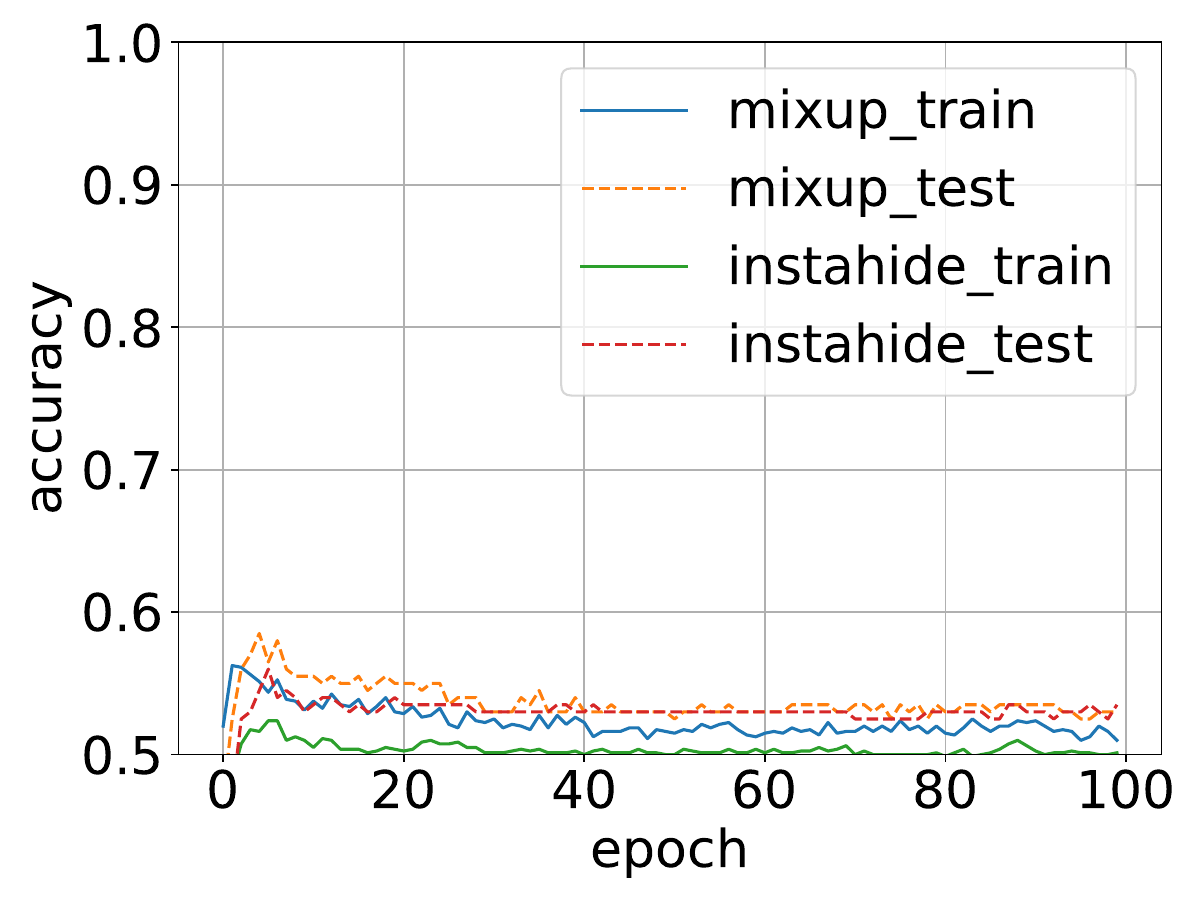}}
    \subfloat[$k=8$]{\includegraphics[width=0.24\linewidth]{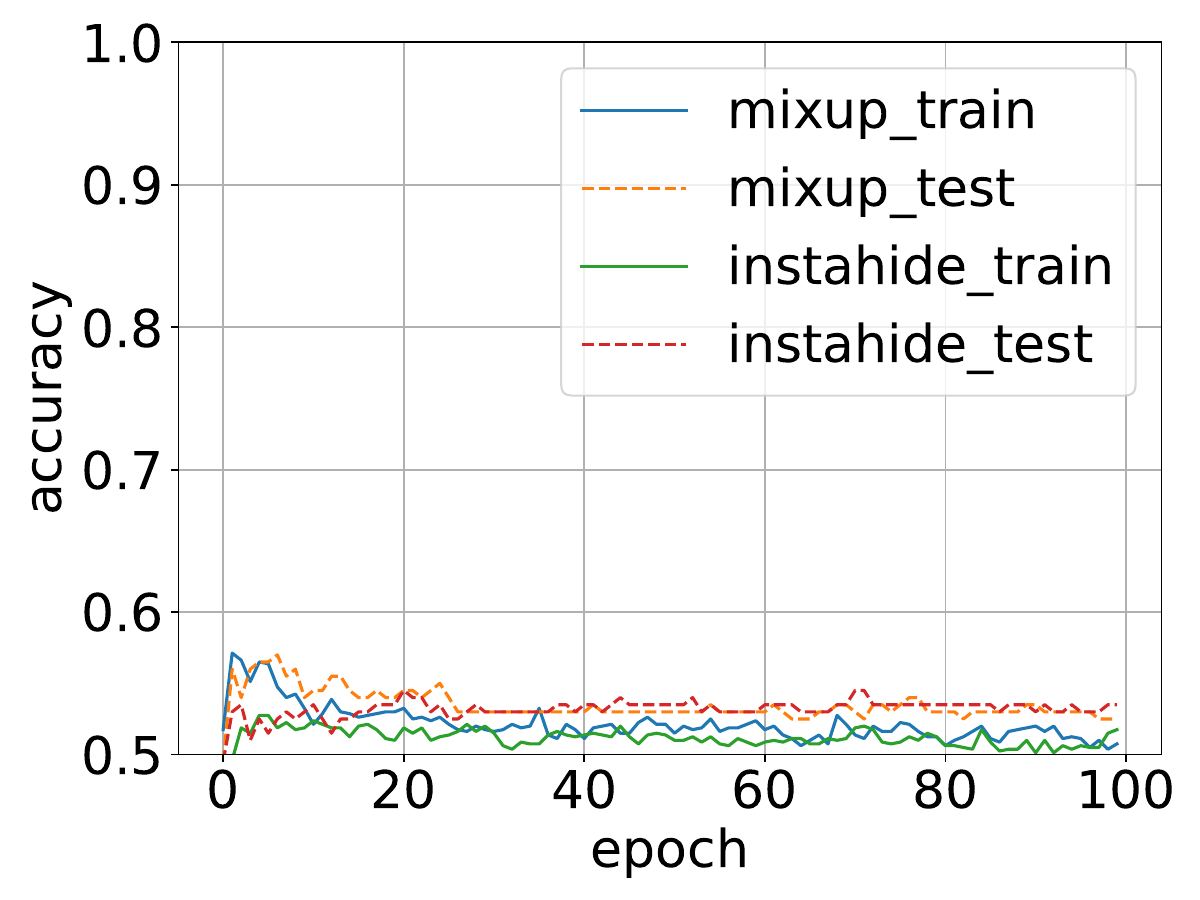}}
    \caption{Comparing Vanilla, Mixup and Instahide training on Gaussian dataset with different $k$.}
    \label{fig:curve_app_2}
\end{figure*}
\else
\maketitle
\begin{abstract}
    
\end{abstract}

\bibliographystyle{alpha}
\bibliography{ref}

\newcommand{\etalchar}[1]{$^{#1}$}
\begin{thebibliography}{ZCDLP18}

\bibitem[Ano21]{anonymous2021what}
Anonymous.
\newblock What can phase retrieval tell us about private distributed learning?
\newblock In {\em Submitted to International Conference on Learning
  Representations (ICLR)}, 2021.
\newblock under review.

\bibitem[BLPR19]{blpr19}
S{\'e}bastien Bubeck, Yin~Tat Lee, Eric Price, and Ilya Razenshteyn.
\newblock Adversarial examples from computational constraints.
\newblock In {\em International Conference on Machine Learning (ICML)}, pages
  831--840, 2019.

\bibitem[BRST20]{brst20}
Joan Bruna, Oded Regev, Min~Jae Song, and Yi~Tang.
\newblock Continuous lwe.
\newblock {\em arXiv preprint arxiv:2005.09595}, 2020.

\bibitem[CDG{\etalchar{+}}20]{carlini_attack}
Nicholas Carlini, Samuel Deng, Sanjam Garg, Somesh Jha, Saeed Mahloujifar,
  Mohammad Mahmoody, Shuang Song, Abhradeep Thakurta, and Florian Tramer.
\newblock An attack on instahide: Is private learning possible with instance
  encoding?
\newblock {\em arXiv preprint arXiv:2011.05315}, 2020.

\bibitem[CEHV15]{cehv15}
Aldo Conca, Dan Edidin, Milena Hering, and Cynthia Vinzant.
\newblock An algebraic characterization of injectivity in phase retrieval.
\newblock {\em Applied and Computational Harmonic Analysis}, 38(2):346--356,
  2015.

\bibitem[CLM16]{clm16}
T~Tony Cai, Xiaodong Li, and Zongming Ma.
\newblock Optimal rates of convergence for noisy sparse phase retrieval via
  thresholded wirtinger flow.
\newblock {\em The Annals of Statistics}, 44(5):2221--2251, 2016.

\bibitem[CLS15]{cls15}
Emmanuel~J Candes, Xiaodong Li, and Mahdi Soltanolkotabi.
\newblock Phase retrieval via wirtinger flow: Theory and algorithms.
\newblock {\em IEEE Transactions on Information Theory}, 61(4):1985--2007,
  2015.

\bibitem[CLS20]{cls20}
Sitan Chen, Jerry Li, and Zhao Song.
\newblock Learning mixtures of linear regressions in subexponential time via
  fourier moments.
\newblock In {\em Proceedings of the 52nd Annual ACM SIGACT Symposium on Theory
  of Computing (STOC)}, pages 587--600, 2020.

\bibitem[CSTZ21]{chen2021symmetric}
Sitan Chen, Zhao Song, Runzhou Tao, and Ruizhe Zhang.
\newblock Symmetric boolean factor analysis with applications to instahide.
\newblock {\em arXiv preprint arXiv:2102.01570}, 2021.

\bibitem[CSV13]{csv13}
Emmanuel~J Candes, Thomas Strohmer, and Vladislav Voroninski.
\newblock Phaselift: Exact and stable signal recovery from magnitude
  measurements via convex programming.
\newblock {\em Communications on Pure and Applied Mathematics},
  66(8):1241--1274, 2013.

\bibitem[DDS{\etalchar{+}}09]{imagenet}
Jia Deng, Wei Dong, Richard Socher, Li-Jia Li, Kai Li, and Li~Fei-Fei.
\newblock Imagenet: A large-scale hierarchical image database.
\newblock In {\em 2009 IEEE conference on computer vision and pattern
  recognition (CVPR)}, pages 248--255, 2009.

\bibitem[DKKZ20]{dkkz20}
Ilias Diakonikolas, Daniel~M Kane, Vasilis Kontonis, and Nikos Zarifis.
\newblock Algorithms and sq lower bounds for pac learning one-hidden-layer relu
  networks.
\newblock In {\em Conference on Learning Theory (COLT)}, pages 1514--1539,
  2020.

\bibitem[DKS17]{dks17}
Ilias Diakonikolas, Daniel~M Kane, and Alistair Stewart.
\newblock Statistical query lower bounds for robust estimation of
  high-dimensional gaussians and gaussian mixtures.
\newblock In {\em 2017 IEEE 58th Annual Symposium on Foundations of Computer
  Science (FOCS)}, pages 73--84, 2017.

\bibitem[DKZ20]{dkz20}
Ilias Diakonikolas, Daniel Kane, and Nikos Zarifis.
\newblock Near-optimal sq lower bounds for agnostically learning halfspaces and
  relus under gaussian marginals.
\newblock {\em Advances in Neural Information Processing Systems (NeurIPS)},
  2020.

\bibitem[DVV16]{degwekar2016fine}
Akshay Degwekar, Vinod Vaikuntanathan, and Prashant~Nalini Vasudevan.
\newblock Fine-grained cryptography.
\newblock In {\em Annual International Cryptology Conference}, pages 533--562.
  Springer, 2016.

\bibitem[GGJ{\etalchar{+}}20]{ggj+20}
Surbhi Goel, Aravind Gollakota, Zhihan Jin, Sushrut Karmalkar, and Adam
  Klivans.
\newblock Superpolynomial lower bounds for learning one-layer neural networks
  using gradient descent.
\newblock In {\em International Conference on Machine Learning (ICML)}, 2020.

\bibitem[GGK20]{ggk20}
Surbhi Goel, Aravind Gollakota, and Adam Klivans.
\newblock Statistical-query lower bounds via functional gradients.
\newblock {\em Advances in Neural Information Processing Systems (NeurIPS)},
  2020.

\bibitem[GLM18]{glm17}
Rong Ge, Jason~D Lee, and Tengyu Ma.
\newblock Learning one-hidden-layer neural networks with landscape design.
\newblock In {\em ICLR}, 2018.

\bibitem[HP15]{hp15}
Moritz Hardt and Eric Price.
\newblock Tight bounds for learning a mixture of two gaussians.
\newblock In {\em Proceedings of the forty-seventh annual ACM symposium on
  Theory of computing (STOC)}, pages 753--760, 2015.

\bibitem[HSC{\etalchar{+}}20]{hscla20}
Yangsibo Huang, Zhao Song, Danqi Chen, Kai Li, and Sanjeev Arora.
\newblock Texthide: Tackling data privacy in language understanding tasks.
\newblock In {\em The Conference on Empirical Methods in Natural Language
  Processing (Findings of EMNLP)}, 2020.

\bibitem[HSLA20]{hsla20}
Yangsibo Huang, Zhao Song, Kai Li, and Sanjeev Arora.
\newblock Instahide: Instance-hiding schemes for private distributed learning.
\newblock In {\em International Conference on Machine Learning (ICML)}, 2020.

\bibitem[HST{\etalchar{+}}20]{hstzz20}
Baihe Huang, Zhao Song, Runzhou Tao, Ruizhe Zhang, and Danyang Zhuo.
\newblock Instahide's sample complexity when mixing two private images.
\newblock {\em arXiv preprint arXiv:2011.11877}, 2020.

\bibitem[Jag20]{jagielski}
Matthew Jagielski.
\newblock 2020.
\newblock
  \url{https://colab.research.google.com/drive/1ONVjStz2m3BdKCE16axVHZ00hcwdivH2?usp=sharing}.

\bibitem[JKL{\etalchar{+}}20]{jklps20}
Haotian Jiang, Tarun Kathuria, Yin~Tat Lee, Swati Padmanabhan, and Zhao Song.
\newblock A faster interior point method for semidefinite programming.
\newblock In {\em FOCS}, 2020.

\bibitem[KK14]{kk14}
Adam Klivans and Pravesh Kothari.
\newblock Embedding hard learning problems into gaussian space.
\newblock In {\em Approximation, Randomization, and Combinatorial Optimization.
  Algorithms and Techniques (APPROX/RANDOM 2014)}. Schloss
  Dagstuhl-Leibniz-Zentrum fuer Informatik, 2014.

\bibitem[KR17]{kr17}
Raymond Kan and Cesare Robotti.
\newblock On moments of folded and truncated multivariate normal distributions.
\newblock {\em Journal of Computational and Graphical Statistics},
  26(4):930--934, 2017.

\bibitem[KSS{\etalchar{+}}20]{kssko20}
Weihao Kong, Raghav Somani, Zhao Song, Sham Kakade, and Sewoong Oh.
\newblock Meta-learning for mixed linear regression.
\newblock In {\em ICML}, 2020.

\bibitem[LL18]{ll18_colt}
Yuanzhi Li and Yingyu Liang.
\newblock Learning mixtures of linear regressions with nearly optimal
  complexity.
\newblock In {\em Conference on Learning Theory (COLT)}, 2018.

\bibitem[LY17]{ly17}
Yuanzhi Li and Yang Yuan.
\newblock Convergence analysis of two-layer neural networks with {R}e{LU}
  activation.
\newblock In {\em Advances in neural information processing systems (NIPS)},
  pages 597--607, 2017.

\bibitem[MV10]{mv10}
Ankur Moitra and Gregory Valiant.
\newblock Settling the polynomial learnability of mixtures of gaussians.
\newblock In {\em 2010 IEEE 51st Annual Symposium on Foundations of Computer
  Science (FOCS)}, pages 93--102. IEEE, 2010.

\bibitem[NJS13]{njs13}
Praneeth Netrapalli, Prateek Jain, and Sujay Sanghavi.
\newblock Phase retrieval using alternating minimization.
\newblock In {\em Advances in Neural Information Processing Systems (NeurIPS)},
  pages 2796--2804, 2013.

\bibitem[NWL16]{neykov2016agnostic}
Matey Neykov, Zhaoran Wang, and Han Liu.
\newblock Agnostic estimation for misspecified phase retrieval models.
\newblock In {\em Advances in Neural Information Processing Systems (NeurIPS)},
  pages 4089--4097, 2016.

\bibitem[RV17]{rv17}
Oded Regev and Aravindan Vijayaraghavan.
\newblock On learning mixtures of well-separated gaussians.
\newblock In {\em 2017 IEEE 58th Annual Symposium on Foundations of Computer
  Science (FOCS)}, pages 85--96, 2017.

\bibitem[YCS14]{ycs14}
Xinyang Yi, Constantine Caramanis, and Sujay Sanghavi.
\newblock Alternating minimization for mixed linear regression.
\newblock In {\em International Conference on Machine Learning (ICML)}, pages
  613--621, 2014.

\bibitem[ZCDLP18]{zcdl18}
Hongyi Zhang, Moustapha Cisse, Yann~N Dauphin, and David Lopez-Paz.
\newblock mixup: Beyond empirical risk minimization.
\newblock In {\em International Conference on Learning Representations (ICLR)},
  2018.

\bibitem[ZSD17]{zsd17}
Kai Zhong, Zhao Song, and Inderjit~S Dhillon.
\newblock Learning non-overlapping convolutional neural networks with multiple
  kernels.
\newblock In {\em arXiv preprint}, 2017.

\bibitem[ZSJ{\etalchar{+}}17]{zsjbd17}
Kai Zhong, Zhao Song, Prateek Jain, Peter~L. Bartlett, and Inderjit~S. Dhillon.
\newblock Recovery guarantees for one-hidden-layer neural networks.
\newblock In {\em International Conference on Machine Learning (ICML)}, 2017.

\bibitem[ZSJD19]{zsjd19}
Kai Zhong, Zhao Song, Prateek Jain, and Inderjit~S Dhillon.
\newblock Provable non-linear inductive matrix completion.
\newblock In {\em Advances in Neural Information Processing Systems (NeurIPS)},
  pages 11439--11449, 2019.

\end{thebibliography}

\newpage 

\appendix

\section{Discussion of Other Attacks}
\label{sec:attacks}

\paragraph{Attack of \cite{jagielski}} It has been pointed out \cite{jagielski} that for $k_{\mathsf{priv}} = 2, k_{\mathsf{pub}} = 0$, given a single synthetic image one can discern large regions of the constituent private images simply by taking the entrywise absolute value of the synthetic image. The reason is the pixel values of a natural image are mostly continuous, i.e. nearby pixels typically have similar values, so the entrywise absolute value of the InstaHide image should be similarly continuous. That said, natural images have enough discontinuities that this breaks down if one mixes more than just two images, and as discussed above, this attack is not applicable when the individual private features are i.i.d. like in our setting.

\paragraph{Attack of \cite{carlini_attack}} 
\ifdefined\isarxiv 
A month after a preliminary version of the present work appeared online \cite{anonymous2021what}, in independent and concurrent work \cite{carlini_attack}, Carlini et al. gave an attack that broke the InstaHide challenge originally released by the authors of \cite{hsla20}.
\else
A month after this submission, Carlini et al. \cite{carlini_attack} independently gave an attack breaking the InstaHide challenge originally released by the authors of \cite{hsla20}.
\fi
In that challenge, the public dataset was ImageNet, the private dataset consisted of $n_{\mathsf{priv}} = 100$ natural images, and $\kpriv = 2$, $\kpub = 4$, $m = 5000$. They were able to produce a visually similar copy of each private image.

Most of their work goes towards recovering which private images contributed to each synthetic image. Their first step is to train a neural network on the public dataset to compute a \emph{similarity matrix} with rows and columns indexed by the synthetic dataset, such that the $(i,j)$-th entry approximates the indicator for whether the pair of private images that are part of synthetic image $i$ overlaps with the pair that is part of synthetic image $j$. Ignoring the rare event that two private images contribute to two distinct synthetic images, and ignoring the fact that the accuracy of the neural network for estimating similarity is not perfect, this similarity matrix is precisely our Gram matrix in the $\kpriv = 2$ case.

The bulk of Carlini et al.'s work \cite{carlini_attack} is focused on giving a heuristic for factorizing this Gram matrix. They do so essentially by greedily decomposing the graph with adjacency matrix given by the Gram matrix into $n_{\mathsf{priv}}$ cliques (plus some k-means post-processing) and regarding each clique as consisting of synthetic images which share a private image in common. They then construct an $m \times n_{\mathsf{priv}}$ bipartite graph as follows: for every synthetic image index $i$ and every private image index $j$, connect $i$ to $j$ if for four randomly chosen elements $i_1,...,i_4\in[m]$ of the $j$-th clique, the $(i,i_{\ell})$-th entries of the Gram matrix are nonzero. Finally, they compute a min-cost max-flow on this instance to assign every synthetic image to exactly $\kpriv = 2$ private images.

It then remains to handle the contribution from the public images. Their approach is quite different from our sparse PCA-based scheme. At a high level, they simply pretend the contribution from the public images is mean-zero noise and set up a nonconvex least-squares problem to solve for the values of the constituent private images.

\paragraph{Comparison to Our Generative Model} Before we compare our algorithmic approach to that of \cite{carlini_attack}, we mention an important difference between the setting of the InstaHide challenge and the one studied in this work, namely the way in which the random subset of public/private images that get combined into a synthetic image is sampled. In our case, for each synthetic image, the subset is chosen independently and uniformly at random from the collection of all subsets consisting of $\kpriv$ private images and $\kpub$ public images. For the InstaHide challenge, batches of $n_{\mathsf{priv}}$ synthetic images get sampled one at a time via the following process: for a given batch, sample two random permutations $\pi_1,\pi_2$ on $n_{\mathsf{priv}}$ elements and let the $t$-th synthetic image in this batch be given by combining the private images indexed by $\pi_1(t)$ and $\pi_2(t)$. Note that this process ensures that every private image appears \emph{exactly} $2m/n_{\mathsf{priv}}$ times, barring the rare event that $\pi_1(t) = \pi_2(t)$ for some $t$ in some batch. It remains to be seen to what extent the attack of \cite{carlini_attack} degrades in the absence of this sort of regularity property in our setting.

\paragraph{Comparison to Our Attack} 

The main commonality between our approach and that of \cite{carlini_attack} is to identify the question of extracting private information from the Gram matrix as the central algorithmic challenge.

How we compute this Gram matrix differs. We use the relationship between covariance of a folded Gaussian and covariance of a Gaussian, while \cite{carlini_attack} use the public dataset to train a neural network on public data to approximate the Gram matrix.

How we use this matrix also differs significantly. We do not produce a candidate factorization but instead pinpoint a collection of synthetic images such that we can provably ascertain that each one comprises $\kpriv$ private images from the same set of $\kpriv+2$ private images. This allows us to set up an appropriate piecewise linear system of size $O(\kpriv)$ with a provably unique solution and solve for the $\kpriv+2$ private images.

An exciting future direction is to understand how well the heuristic in \cite{carlini_attack} scales with $\kpriv$. Independent of the connection to InstaHide, it would be very interesting from a theoretical standpoint if one could show that their heuristic provably solves the multi-task phase retrieval problem defined in Problem~\ref{defn:multitask} in time scaling only polynomially with $\kpriv$ (i.e. the sparsity of the vectors $w_1,\ldots,w_m$ in the notation of Problem~\ref{defn:multitask}).

\fi

\end{document}